\documentclass{article}




\usepackage[final,nonatbib]{neurips_2019}


\usepackage[utf8]{inputenc} 
\usepackage[T1]{fontenc}    
\usepackage{hyperref}       
\usepackage{url}            
\usepackage{booktabs}       
\usepackage{amsfonts}       
\usepackage{nicefrac}       
\usepackage{microtype}      

\usepackage{amsmath,amssymb,amsthm}

\usepackage{algorithm}
\usepackage{algorithmic}
\usepackage{amsmath}

\usepackage{graphicx}
\usepackage{caption}
\usepackage{subfigure} 

\newtheorem{theorem}{Theorem}
\newtheorem{corollary}{Corollary}
\newtheorem{proposition}{Proposition}
\newtheorem{lemma}{Lemma}
\newtheorem{assumption}{Assumption}

\newtheorem{definition}{Definition}

\def\R{{\mathbb R}}
\def\CE{{\mathbb E}}

\def\tp{{\tilde{p}}}
\def\tx{{\tilde{x}}}
\def\tDelta{{\tilde{\Delta}}}
\def\mG{\mathcal{G}}
\def\te{{\tilde{e}}}
\def\mC{\mathcal{C}}

\graphicspath{{images/}} 

\title{Communication-Efficient Distributed Blockwise Momentum SGD 
with Error-Feedback}

%

\author{
  Shuai Zheng\thanks{The work was done before Shuai Zheng joined Amazon Web Services. } $^{ \hspace{.02in}1,2}$, Ziyue Huang$^{1}$, James T. Kwok$^{1}$\\
  \texttt{shzheng@amazon.com, \{zhuangbq, jamesk\}@cse.ust.hk} \\
  $^{1}$Department of Computer Science and Engineering\\
  Hong Kong University of Science and
  Technology\\
  $^{2}$Amazon Web Services 
}

\begin{document}

\maketitle

\begin{abstract}
Communication overhead is a major bottleneck hampering the scalability of
distributed machine learning systems.  
Recently, there has been a surge of interest in 
using gradient compression
to improve the communication efficiency of distributed 
neural network
training. 
Using 1-bit quantization,
signSGD with majority vote 
achieves a 32x reduction on communication cost. However, its convergence is based
on unrealistic assumptions and can diverge in practice. 
In this paper, we propose a general distributed compressed SGD with
Nesterov's momentum. We consider two-way
compression, which 
compresses
the gradients 
both 
to and from workers.
Convergence analysis on nonconvex problems for
general gradient compressors
is provided. 
By partitioning 
the gradient 
into blocks, 
a blockwise compressor is introduced such that 
each gradient block is compressed and transmitted in 1-bit format with a scaling
factor, leading to a nearly 32x reduction on communication. 
Experimental results show that the proposed method converges as fast as
full-precision distributed momentum SGD and achieves the same testing accuracy.  In
particular,
on distributed ResNet training with 7 workers
on the ImageNet, 
the proposed algorithm achieves the same testing accuracy as
momentum SGD using full-precision gradients, but with $46\%$ less wall clock time. 
\end{abstract}

\section{Introduction}

Deep neural networks have been highly successful in 
recent years \cite{graves2013generating,he2016identity,silver2016mastering,vaswani2017attention,zaremba2014recurrent}.
To achieve state-of-the-art performance, they often have to 
leverage the computing power of multiple 
machines during training \cite{dean-12,xing2015petuum,zinkevich2010parallelized,chen2016revisiting}.  
Popular approaches include distributed synchronous SGD and its momentum variant
SGDM, 
in
which the computational load for evaluating a mini-batch gradient is distributed
among the workers. 
Each worker performs local computation, 
and these local informations are then merged by the server for final update on the model parameters. 
However, its scalability 
is limited by the
possibly
overwhelming 
cost due to
communication of the gradient and model parameter \cite{li2014communication}. 
Let $d$ be the 
gradient/parameter
dimensionality, and $M$ be the number of workers. 
$64Md$ bits need to be transferred between the workers and server
in each iteration.

To mitigate this communication bottleneck,
the two common approaches are gradient sparsification 
and gradient quantization. 
Gradient sparsification only sends the most significant,
information-preserving
gradient entries.
A heuristic algorithm is first
introduced in 
\cite{seide20141},
in which only the large
entries
are transmitted. 
On training a neural machine translation model
with 4 GPUs,
this greatly reduces the communication overhead and achieves 22\% speedup
\cite{aji2017sparse}.
Deep gradient compression \cite{lin2017deep} is another heuristic method that combines gradient sparsification with other techniques such as momentum correction,
local gradient clipping, and momentum factor masking, achieving significant 
reduction on communication cost.
Recently, a stochastic sparsification method was proposed in \cite{wangni2018gradient} that 
balances sparsity and variance
by solving a constrained linear programming. 
MEM-SGD \cite{stich2018sparsified} 
combines top-$k$ sparsification with error correction.
By keeping track of the accumulated
errors, these can be added back to the gradient estimator before each transmission. 
MEM-SGD converges at the same rate as SGD on convex problems, whilst 
reducing the communication overhead by a factor equal to the problem
dimensionality.

On the other hand, 
gradient quantization mitigates the communication bottleneck
by lowering the gradient's floating-point precision with a smaller bit width. 
1-bit SGD achieves state-of-the-art results
on acoustic modeling
while dramatically reducing the communication cost 
\cite{seide20141,strom2015scalable}.
TernGrad \cite{wen2017terngrad} 
quantizes the gradients to ternary levels $\{-1, 0, 1\}$. 
QSGD \cite{alistarh2017qsgd} employs stochastic randomized rounding 
to ensure unbiasedness of the estimator. 
Error-compensated quantized SGD (ECQ-SGD) was proposed in \cite{wu2018error}, wherein a similar stochastic quantization function used in
QSGD is employed, and an error bound is obtained for quadratic loss functions. 
Different from the error-feedback mechanism proposed in MEM-SGD, ECQ-SGD requires
two more hyper-parameters and its quantization errors are decayed exponentially. Thus, error feedback is limited to a small number of iterations. 
Also, ECQ-SGD uses 
all-to-all broadcast (which may involve large network traffic and idle time), while we consider parameter-server architecture. 
Recently, Bernstein {\em et al.\/}
proposed
signSGD with majority vote \cite{pmlr-v80-bernstein18a},
which only 
transmits the 
1-bit 
gradient sign 
between workers and server. 
A variant 
using momentum, called
signum with majority vote, 
is also introduced though without convergence analysis \cite{bernstein2019signsgd} .
Using the majority vote,
signSGD 
achieves a notion of Byzantine fault tolerance \cite{bernstein2019signsgd}.
Moreover, it 
converges at the same rate as
distributed SGD,
though it has to rely on the unrealistic
assumptions of having a 
large mini-batch and 
unimodal symmetric
gradient noise.
Indeed,
signSGD can diverge in some simple cases  when these assumptions are violated
\cite{karimireddy2019error}. 
With only a single worker,
this 
divergence issue
can be fixed by 
using the error correction technique in MEM-SGD, leading to 
SGD with error-feedback (EF-SGD)
\cite{karimireddy2019error}. 

While only a single worker is considered in EF-SGD,
we study in this paper the more interesting distributed setting.
An extension of MEM-SGD and EF-SGD with parallel computing was proposed in
\cite{cordonnier2018convex} for all-to-all broadcast.  
Another related architecture is allreduce. 
Compression at the server can be implemented between the reduce and broadcast 
steps in tree allreduce, or between the reduce-scatter and allgather steps in ring allreduce. 
However, allreduce requires repeated gradient aggregations, and the compressed gradients need to be first
decompressed before they are summed. Hence, heavy overheads may be incurred. 

In this paper,
we study 
the distributed setting with a parameter server architecture. 
To ensure
efficient 
communication,
we consider two-way
gradient compression, in which gradients in both directions (server to/from workers) are compressed.  Note that existing works (except signSGD/signum with majority vote \cite{pmlr-v80-bernstein18a,bernstein2019signsgd}) do not compress the aggregated gradients before sending 
back to
workers. 
Moreover,
as gradients in a deep network typically have similar magnitudes in each
layer,
each layer-wise gradient can be sufficiently represented using a sign vector and
its average $\ell_1$-norm. 
This layer-wise (or blockwise in general) compressor
achieves nearly $32$x reduction in communication cost.
The resulant procedure is called communication-efficient distributed SGD with
error-feedback
(dist-EF-SGD).
Analogous to SGDM, we also propose a stochastic variant 
dist-EF-SGDM
with Nesterov's momentum
\cite{nesterov1983method}.
The convergence properties of 
dist-EF-SGD(M) 
are studied theoretically. 

Our contributions are: (i)
We provide a bound 
on dist-EF-SGD
with general stepsize schedule for a class of compressors
(including the commonly used sign-operator and top-$k$ sparsification). 
In particular, without relying on the unrealistic assumptions in
\cite{pmlr-v80-bernstein18a,bernstein2019signsgd}, we show that dist-EF-SGD 
with constant/decreasing/increasing stepsize
converges at an 
$\mathcal{O}(1/\sqrt{MT})$
rate, which matches that of
distributed synchronous SGD;
(ii) We study gradient compression with Nesterov's momentum in a parameter server. 
For dist-EF-SGDM
with constant stepsize,
we obtain an 
$\mathcal{O}(1/\sqrt{MT})$ rate.
To the best of our knowledge, these are the first convergence results on two-way gradient compression with Nesterov's momentum;
(iii) We propose a general blockwise compressor and show its theoretical properties. 
Experimental results show that
the proposed algorithms 
are efficient
without losing prediction accuracy. 
After our paper has appeared, 
we note 
a similar idea was independently proposed in \cite{tang2019doublesqueeze}. Different from ours, they do not consider changing stepsize, blockwise compressor and Nesterov's momentum.  



{\bf Notations}. For a vector $x$, $\|x\|_1$ and $\|x\|_2$ are its $\ell_1$- and $\ell_2$-norms, respectively. 
$\text{sign}(x)$ outputs a vector in which each element is the sign of the corresponding entry of $x$.  
For two vectors $x, y$, $\langle x, y \rangle$ denotes the dot product.  
For a function $f$, its gradient is $\nabla f$.

\section{Related Work:
SGD with Error-Feedback}


In machine learning, one is often interested in minimizing the expected
risk
$F(x) = \CE_{\xi}[f(x, \xi)]$.
which directly measures the generalization error \cite{bottou2004large}.
Here,
$x \in \R^d$ is the model parameter, 
$\xi$ is drawn from some unknown distribution,
and
$f(x, \xi)$ is the 
possibly nonconvex
risk due to $x$. 
When the expectation is taken over a training set of size $n$, the expected risk
reduces to empirical risk.


Recently, Karimireddy {\em et al.\/} 
\cite{karimireddy2019error} introduced
SGD with error-feedback (EF-SGD),
which combines gradient compression with error correction 
(Algorithm~\ref{alg:ef-sgd}).
A single machine 
is considered,
which keeps the gradient difference 
that is not used for parameter update in the current iteration.
In the next iteration $t$, the accumulated residual
$e_t$ 
is added to the current gradient. The corrected gradient 
$p_t$ 
is then fed into 
an $\delta$-approximate compressor. 

\begin{definition} \label{definition:ef_compressor}
\cite{karimireddy2019error} 
An operator $\mC: \R^d \rightarrow \R^d$ is an $\delta$-approximate compressor for $\delta \in (0, 1]$ if
$\|\mC(x) - x\|^2_2 \leq (1 - \delta)\|x\|_2^2$.
\end{definition}
Examples 
of $\delta$-approximate compressors
include the scaled sign operator
$\mC(v) = \|v\|_1/d\cdot\text{sign}(v)$ \cite{karimireddy2019error} 
and top-$k$ operator (which only preserves the $k$ coordinates with the largest absolute values) \cite{stich2018sparsified}.   
One can also have randomized compressors that only satisfy 
Definition~\ref{definition:ef_compressor}
in expectation.   
Obviously, it is desirable to have 
a large 
$\delta$
while achieving low communication cost. 

\begin{algorithm}[ht]
\caption{SGD with Error-Feedback (EF-SGD) \cite{karimireddy2019error}}
\label{alg:ef-sgd}
\begin{algorithmic}[1]
    \STATE {\bfseries Input:} stepsize $\eta$; compressor $\mathcal{C}(\cdot)$.
    \STATE {\bfseries Initialize:} $x_0 \in \mathbb{R}^d$; $e_{0} = 0 \in \mathbb{R}^d$
    \FOR{$t = 0, \ldots, T-1$}
        \STATE $p_{t} = \eta g_{t} +  e_{t}$ \algorithmiccomment{stochastic gradient $g_{t} = \nabla f(x_t, \xi_t)$}
        \STATE $\Delta_{t} = \mathcal{C}(p_{t})$
		  \COMMENT{compressed value output}
        \STATE $x_{t+1} = x_t - \Delta_{t}$
        \STATE $e_{t+1} = p_{t} - \Delta_{t}$
    \ENDFOR
\end{algorithmic}
\end{algorithm}

EF-SGD achieves the same $\mathcal{O}(1/\sqrt{T}))$ rate as SGD. 
To obtain this convergence guarantee,
an important observation
is that the error-corrected iterate $\tx_t = x_t - e_t$ satisfies the recurrence:
$\tx_{t+1} = \tx_t -  \eta g_t$,
which is similar to that of SGD.
This allows utilizing the convergence proof of SGD to bound the gradient difference $\|\nabla F(\tx_t) - \nabla F(x_t)\|_2$.

\section{Distributed Blockwise Momentum SGD with Error-Feedback}


\subsection{Distributed SGD with Error-Feedback}
\label{sec:dist-ef-sgd}

The proposed procedure,
which extends EF-SGD to the distributed setting.
is shown in Algorithm~\ref{alg:dist-ef-sgd}. 
The computational workload
is distributed
over $M$ workers. A local accumulated error vector $e_{t, i}$ and a local corrected gradient vector $p_{t,i}$ 
are stored in the memory of worker $i$.
At iteration $t$, worker $i$ pushes the compressed signal $\Delta_{t,i} = \mC(p_{t, i})$ to the parameter server. 
On the server side, all workers' $\Delta_{t,i}$'s are aggregated and used to update
its global error-corrected vector $\tp_t$. Before sending back the final update
direction $\tp_{t}$ to each worker, 
compression 
is performed 
to ensure a comparable amount of communication costs between the push and pull operations. 
Due to gradient compression on the server, we also employ a global accumulated error vector $\tilde{e}_t$.  
Unlike EF-SGD in Algorithm~\ref{alg:ef-sgd}, we do not multiply gradient $g_{t, i}$
by the stepsize $\eta_t$ before compression. The two cases make no difference when
$\eta_t$ is constant. However, when the stepsize is changing over time, this would
affect convergence.
We also rescale the local accumulated error $e_{t, i}$ by $\eta_{t-1}/\eta_t$.
This modification,
together with the use of error correction on both workers and server,
allows us to obtain Lemma~\ref{lemma:recurrence_ec_iterate}.  
Because of these differences, 
note that dist-EF-SGD does not reduce to EF-SGD when $M=1$. When $\mathcal{C}(\cdot)$ is
the identity mapping, 
dist-EF-SGD reduces to full-precision distributed SGD. 

\begin{algorithm}[ht]
\caption{Distributed SGD with Error-Feedback (dist-EF-SGD)}
\label{alg:dist-ef-sgd}
\begin{algorithmic}[1]
    \STATE {\bfseries Input:}  stepsize sequence $\{\eta_t\}$ with $\eta_{-1} = 0$; number of workers $M$; compressor $\mathcal{C}(\cdot)$.
    \STATE {\bfseries Initialize:} $x_0 \in \mathbb{R}^d$; $e_{0,i} = 0 \in
	 \mathbb{R}^d$ on each worker $i$; $\tilde{e}_{0} = 0 \in \mathbb{R}^d$ on
	 server
    \FOR{$t = 0, \ldots, T-1$}
        \STATE \textbf{on each worker $i$}
        \STATE \qquad $p_{t,i} = g_{t,i} + \frac{\eta_{t-1}}{\eta_t}e_{t,i}$ \algorithmiccomment{stochastic gradient $g_{t,i} = \nabla f(x_t, \xi_{t, i})$}
        \STATE \qquad \textbf{push} $\Delta_{t,i} = \mathcal{C}(p_{t,i})$ \textbf{to server}
        \STATE \qquad $x_{t+1} = x_t - \eta_t\tilde{\Delta}_{t}$ \algorithmiccomment{$\tilde{\Delta}_{t}$ \textbf{is pulled from server}}
        \STATE \qquad $e_{t+1,i} = p_{t,i} - \Delta_{t,i}$

        \STATE \textbf{on server}
        \STATE \qquad \textbf{pull $\Delta_{t,i}$ from each worker $i$} and $\tp_{t} = \frac{1}{M} \sum_{i=1}^M \Delta_{t,i} + \frac{\eta_{t-1}}{\eta_t}\te_{t}$
        \STATE \qquad \textbf{push $\tilde{\Delta}_t = \mathcal{C}(\tp_{t})$ to each worker}
        \STATE \qquad $\tilde{e}_{t+1} = \tp_t - \tDelta_t$
    \ENDFOR
\end{algorithmic}
\end{algorithm}


In the following,
we investigate the convergence of dist-EF-SGD. 
We make the following assumptions, which are common in the stochastic approximation
literature.

\begin{assumption} \label{assumption:ef_smooth}
$F$ is lower-bounded (i.e., $F_* = \inf_{x \in \R^d} F(x) > -\infty$) and
$L$-smooth (i.e.,
$F(x) \leq F(y) + \langle \nabla F(y), x -y \rangle + \frac{L}{2}\|x - y\|_2^2$
for $x, y \in \R^d$).
\end{assumption}

\begin{assumption} \label{assumption:ef_bounded_var}
The stochastic gradient $g_{t, i}$ has bounded variance: $\CE_t\left[\|g_{t, i} - \nabla F(x_t)\|_2^2\right] \leq \sigma^2$.
\end{assumption}

\begin{assumption} \label{assumption:ef_bounded_grad}
The full gradient $\nabla F$ is uniformly bounded: $\|\nabla F(x_t)\|_2^2 \leq \omega^2$.
\end{assumption}
This implies the 
second moment
is bounded, i.e., $\CE_t\left[\|g_{t, i}\|_2^2\right] \leq G^2 \equiv \sigma^2 + \omega^2$.

\begin{lemma} \label{lemma:recurrence_ec_iterate}
Consider the error-corrected iterate
$\tx_t = x_t - \eta_{t-1}\left(\te_t + \frac{1}{M} \sum_{i=1}^M e_{t,i}\right)$,
where $x_t$, $\te_t$, and $e_{t,i}$'s are generated from Algorithm~\ref{alg:dist-ef-sgd}. It satisfies the recurrence:
$\tx_{t+1} = \tx_t -  \eta_t \frac{1}{M} \sum_{i=1}^Mg_{t,i}$.
\end{lemma}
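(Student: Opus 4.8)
The plan is to prove the recurrence by directly substituting the update rules of Algorithm~\ref{alg:dist-ef-sgd} into the definition of $\tx_{t+1}$ and simplifying. Introduce the shorthand $\bar e_t := \te_t + \frac{1}{M}\sum_{i=1}^M e_{t,i}$ for the aggregate accumulated error, so that $\tx_t = x_t - \eta_{t-1}\bar e_t$. Since $x_{t+1} = x_t - \eta_t\tDelta_t$ is already known, the whole statement reduces to establishing the single identity $\eta_t \bar e_{t+1} = \eta_{t-1}\bar e_t - \eta_t\tDelta_t + \frac{\eta_t}{M}\sum_{i=1}^M g_{t,i}$; once this holds, $\tx_{t+1} = x_{t+1} - \eta_t\bar e_{t+1} = x_t - \eta_t\tDelta_t - \big(\eta_{t-1}\bar e_t - \eta_t\tDelta_t + \frac{\eta_t}{M}\sum_i g_{t,i}\big) = x_t - \eta_{t-1}\bar e_t - \frac{\eta_t}{M}\sum_i g_{t,i} = \tx_t - \frac{\eta_t}{M}\sum_i g_{t,i}$, as claimed.

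To get the identity on $\bar e_{t+1}$, I would expand the two error components separately. On the worker side, $e_{t+1,i} = p_{t,i} - \Delta_{t,i}$ with $p_{t,i} = g_{t,i} + \frac{\eta_{t-1}}{\eta_t}e_{t,i}$, so averaging over workers gives $\frac{1}{M}\sum_i e_{t+1,i} = \frac{1}{M}\sum_i g_{t,i} + \frac{\eta_{t-1}}{\eta_t}\cdot\frac{1}{M}\sum_i e_{t,i} - \frac{1}{M}\sum_i\Delta_{t,i}$. On the server side, $\te_{t+1} = \tp_t - \tDelta_t = \frac{1}{M}\sum_i\Delta_{t,i} + \frac{\eta_{t-1}}{\eta_t}\te_t - \tDelta_t$. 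Adding the two expressions, the aggregated compressed term $\frac{1}{M}\sum_i\Delta_{t,i}$ cancels, leaving $\bar e_{t+1} = \frac{1}{M}\sum_i g_{t,i} + \frac{\eta_{t-1}}{\eta_t}\bar e_t - \tDelta_t$; multiplying through by $\eta_t$ yields exactly the identity above.

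The computation is routine and carries no genuine analytic obstacle; the content lies in recognizing why the two nonstandard choices in Algorithm~\ref{alg:dist-ef-sgd} relative to Algorithm~\ref{alg:ef-sgd} are needed. Specifically, (i) $g_{t,i}$ is not pre-multiplied by the stepsize before compression, and (ii) the accumulated errors $e_{t,i}$ and $\te_t$ are rescaled by $\eta_{t-1}/\eta_t$ inside $p_{t,i}$ and $\tp_t$. It is precisely rescaling (ii) that turns $\frac{\eta_{t-1}}{\eta_t}\bar e_t$ into $\eta_{t-1}\bar e_t$ after multiplying by $\eta_t$, so that the error contributions of consecutive iterations match up with a consistent scaling even when $\{\eta_t\}$ varies; with the plain EF-SGD convention this cancellation would fail and no clean SGD-like recurrence would result. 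Finally I would note the base case is consistent: $\eta_{-1}=0$ forces $\tx_0 = x_0$, in agreement with the initializations $e_{0,i}=0$ and $\te_0=0$.
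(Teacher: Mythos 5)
Your proof is correct and follows essentially the same route as the paper's: a direct substitution of the update rules for $e_{t+1,i}$, $\te_{t+1}$, $\tp_t$, and $p_{t,i}$, with the aggregated $\frac{1}{M}\sum_i \Delta_{t,i}$ terms cancelling; you merely package the algebra as a recurrence for the aggregate error $\te_{t+1} + \frac{1}{M}\sum_i e_{t+1,i}$ before substituting into $\tx_{t+1}$, whereas the paper expands $\tx_{t+1}$ directly (and states the argument for a generic $z_{t,i}$ so it also covers the momentum case of Lemma~\ref{lemma:recurrence_mec_iterate}). No gaps.
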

The above Lemma shows that $\tx_t$ is very similar to the distributed SGD iterate except that the stochastic gradients are evaluated at $x_t$ instead of $\tx_t$. 
This connection allows us to utilize the analysis of full-precision distributed SGD. In particular, we have the following Lemma.
\begin{lemma} \label{lemma:wm_error_bound}
$\CE\left[\left\|\te_{t} + \frac{1}{M} \sum_{i=1}^M e_{t,i}\right\|_2^2\right] \leq  \frac{8(1-\delta)G^2}{\delta^2}\left[1 + \frac{16}{\delta^2}\right]$
for any $t \geq 0$.
\end{lemma}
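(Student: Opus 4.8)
The plan is to split the quantity being bounded into the server error and the worker errors via $\|\te_t + \frac1M\sum_{i=1}^M e_{t,i}\|_2^2 \le 2\|\te_t\|_2^2 + \frac{2}{M}\sum_{i=1}^M\|e_{t,i}\|_2^2$ (triangle inequality and then convexity of $\|\cdot\|_2^2$), so it suffices to prove the uniform-in-$t$ bounds $\CE\|e_{t,i}\|_2^2 \le \frac{4(1-\delta)G^2}{\delta^2}$ for each worker $i$ and $\CE\|\te_t\|_2^2 \le \frac{64(1-\delta)G^2}{\delta^4}$; adding $\frac{8(1-\delta)G^2}{\delta^2}$ and $\frac{128(1-\delta)G^2}{\delta^4}$ gives exactly $\frac{8(1-\delta)G^2}{\delta^2}\bigl(1 + \frac{16}{\delta^2}\bigr)$. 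At $t=0$ both error vectors are $0$ by initialization, so we may assume $t \ge 1$.

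For the worker errors, line~8 of Algorithm~\ref{alg:dist-ef-sgd} gives $e_{t+1,i} = p_{t,i} - \mC(p_{t,i})$, so Definition~\ref{definition:ef_compressor} yields $\|e_{t+1,i}\|_2^2 \le (1-\delta)\|p_{t,i}\|_2^2 = (1-\delta)\bigl\|g_{t,i} + \frac{\eta_{t-1}}{\eta_t}e_{t,i}\bigr\|_2^2$ (in conditional expectation if the compressor is randomized). To absorb the stepsize ratio I would pass to the rescaled errors $w_{t,i} := \eta_{t-1}e_{t,i}$: multiplying the last inequality by $\eta_t^2$ gives $\|w_{t+1,i}\|_2^2 \le (1-\delta)\|w_{t,i} + \eta_t g_{t,i}\|_2^2$ with $w_{0,i} = 0$, which is exactly the EF-SGD recursion with "noise" $\eta_t g_{t,i}$. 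Applying Young's inequality $\|a+b\|_2^2 \le (1+\beta)\|a\|_2^2 + (1+\beta^{-1})\|b\|_2^2$ with $\beta = \frac{\delta}{2(1-\delta)}$ (so that $(1-\delta)(1+\beta) = 1-\frac{\delta}{2}$ and $(1-\delta)(1+\beta^{-1}) \le \frac{2(1-\delta)}{\delta}$), unrolling the geometric recursion, and taking expectations with $\CE_t\|g_{t,i}\|_2^2 \le G^2$ gives $\CE\|w_{t,i}\|_2^2 \le \frac{2(1-\delta)G^2}{\delta}\sum_{k=1}^{t}(1-\frac{\delta}{2})^{k-1}\eta_{t-k}^2$; dividing by $\eta_{t-1}^2$ and bounding the geometric sum by $\frac{2}{\delta}$ gives the claimed bound on $\CE\|e_{t,i}\|_2^2$.

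For the server error I first need the pulled signals to be bounded: since $\|\mC(p_{t,i})\|_2 \le \|\mC(p_{t,i}) - p_{t,i}\|_2 + \|p_{t,i}\|_2 \le (1+\sqrt{1-\delta})\|p_{t,i}\|_2 \le 2\|p_{t,i}\|_2$ and $\|p_{t,i}\|_2 \le \|g_{t,i}\|_2 + \frac{\eta_{t-1}}{\eta_t}\|e_{t,i}\|_2$, the worker bound just proved gives $\CE\bigl\|\frac1M\sum_i\Delta_{t,i}\bigr\|_2^2 \le \frac1M\sum_i\CE\|\Delta_{t,i}\|_2^2 = \mathcal{O}(G^2/\delta^2)$ (convexity again; no independence across workers is used). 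Now line~12 gives $\te_{t+1} = \tp_t - \mC(\tp_t)$ with $\tp_t = \frac1M\sum_i\Delta_{t,i} + \frac{\eta_{t-1}}{\eta_t}\te_t$, so $\|\te_{t+1}\|_2^2 \le (1-\delta)\bigl\|\frac1M\sum_i\Delta_{t,i} + \frac{\eta_{t-1}}{\eta_t}\te_t\bigr\|_2^2$ — structurally the same recursion as for the workers, now with "noise" $\frac1M\sum_i\Delta_{t,i}$ in place of $\eta_t g_{t,i}$. Repeating the rescale–Young's–unroll argument (now with rescaled error $\eta_{t-1}\te_t$) bounds $\CE\|\te_t\|_2^2$ by $\mathcal{O}(1/\delta^2)$ times the bound on $\CE\|\frac1M\sum_i\Delta_{t,i}\|_2^2$, i.e. by $\mathcal{O}(G^2/\delta^4)$, and with a sharper choice of the auxiliary constants one gets $\frac{64(1-\delta)G^2}{\delta^4}$. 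Substituting into the first display finishes the proof.

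The step I expect to be the main obstacle is the time-varying stepsize. Because $p_{t,i}$ and $\tp_t$ mix in $\frac{\eta_{t-1}}{\eta_t}e_{t,i}$ and $\frac{\eta_{t-1}}{\eta_t}\te_t$, the error recursions are not autonomous, and it is precisely the rescaling of the accumulated errors by $\eta_{t-1}/\eta_t$ built into Algorithm~\ref{alg:dist-ef-sgd} that lets us pass to the variables $\eta_{t-1}e_{t,i}$ and $\eta_{t-1}\te_t$, in which the recursion is the familiar EF-SGD one; dividing the resulting bound back out by $\eta_{t-1}^2$ is what makes the final estimate stepsize-free (this is where a mild regularity of $\{\eta_t\}$, or the monotone schedules considered later, keeps $\sum_{k=1}^{t}(1-\frac{\delta}{2})^{k-1}\eta_{t-k}^2/\eta_{t-1}^2$ at most $\frac{2}{\delta}$; for constant $\eta_t$ this is immediate). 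The only other care needed is bookkeeping — choosing $\beta$ and splitting the cross terms so that a factor $(1-\delta)$ survives and so that the worker-level and server-level unrollings compound to exactly the stated $\delta^{-2}$ and $\delta^{-4}$ powers.
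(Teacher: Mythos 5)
Your proposal is correct and follows essentially the same route as the paper's own appendix proof (the $\mu=0$ case of its error-norm lemma): the same split $\left\|\te_t + \frac{1}{M}\sum_{i=1}^M e_{t,i}\right\|_2^2 \le 2\|\te_t\|_2^2 + \frac{2}{M}\sum_{i=1}^M\|e_{t,i}\|_2^2$, the same compressor contraction plus Young's inequality with $\beta=\frac{\delta}{2(1-\delta)}$ and geometric unrolling at both the worker and server levels, and the same intermediate constants $\frac{4(1-\delta)G^2}{\delta^2}$ and $\frac{64(1-\delta)G^2}{\delta^4}$. The only minor differences are that you treat the stepsize ratio explicitly via the rescaled errors $\eta_{t-1}e_{t,i}$ (the paper's proof simply assumes $\eta_t=\eta$ at this point), and that your cruder estimate $\|\mC(p_{t,i})\|_2\le 2\|p_{t,i}\|_2$ combined with a factor-$2$ split of $p_{t,i}$ must be replaced, as you anticipate, by the same $\beta$-weighted split (yielding $\frac{1}{M}\sum_{i=1}^M\CE\left[\|p_{t,i}\|_2^2\right]\le\frac{4G^2}{\delta^2}$, as in the paper) to recover the exact stated constants.
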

This implies that $\nabla F(\tx_t) \approx \nabla F(x_t)$ by Assumption~\ref{assumption:ef_smooth}. 
Given the above results, we can prove convergence of the proposed method by
utilizing tools used on the full-precision distributed SGD. 

\begin{theorem} \label{theorem:general_distefsgd_convergence}
Suppose that Assumptions~\ref{assumption:ef_smooth}-\ref{assumption:ef_bounded_grad} hold. 
Assume that $0 < \eta_t < 3/(2L)$ for all $t$.
For the $\{x_t\}$
sequence generated from Algorithm~\ref{alg:dist-ef-sgd}, we have
\begin{eqnarray*}
\CE\left[\left\|\nabla F(x_o)\right\|^2_2\right]
&\leq&  \frac{4}{\sum_{k=0}^{T-1}\eta_k\left(3 - 2L\eta_k \right)}[F(x_0) - F_*]  + \frac{2L\sigma^2}{M}\sum_{t=0}^{T-1}\frac{\eta_t^2}{\sum_{k=0}^{T-1}\eta_k\left(3 - 2L\eta_k \right)} \\
&& + \frac{32L^2(1 - \delta)G^2}{\delta^2}\left[1 + \frac{16}{\delta^2}\right]\sum_{t=0}^{T-1}\frac{\eta_t\eta_{t-1}^2}{\sum_{k=0}^{T-1}\eta_k\left(3 - 2L\eta_k \right)},
\end{eqnarray*}
where $o \in \{0, \dots, T-1\}$ is an index such that $P(o = k) = \frac{\eta_k\left(3 - 2L\eta_k \right)}{\sum_{t=0}^{T-1}\eta_t\left(3 - 2L\eta_t \right)}$, $\forall k = 0, \dots, T-1$. 
\end{theorem}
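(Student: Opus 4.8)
The plan is to run the standard descent-lemma argument for nonconvex SGD, but on the error-corrected iterate $\tx_t$ rather than on $x_t$, exploiting Lemma~\ref{lemma:recurrence_ec_iterate} to recover the clean SGD-type recursion and Lemma~\ref{lemma:wm_error_bound} to control the perturbation. Write $\bar g_t = \frac{1}{M}\sum_{i=1}^M g_{t,i}$, so that Lemma~\ref{lemma:recurrence_ec_iterate} reads $\tx_{t+1} = \tx_t - \eta_t \bar g_t$. First I would apply $L$-smoothness (Assumption~\ref{assumption:ef_smooth}) to $\tx_t$ and $\tx_{t+1}$, take the conditional expectation $\CE_t[\cdot]$, and use $\CE_t[\bar g_t] = \nabla F(x_t)$ together with $\CE_t[\|\bar g_t - \nabla F(x_t)\|_2^2] \le \sigma^2/M$ (the variance reduction coming from conditional independence of the $M$ workers' samples and Assumption~\ref{assumption:ef_bounded_var}). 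This gives a bound on $\CE_t[F(\tx_{t+1})]$ in terms of $F(\tx_t)$, the cross term $-\eta_t\langle \nabla F(\tx_t), \nabla F(x_t)\rangle$, $\frac{L\eta_t^2}{2}\|\nabla F(x_t)\|_2^2$, and $\frac{L\eta_t^2\sigma^2}{2M}$.

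The delicate algebraic step is extracting the factor $(3 - 2L\eta_t)$ in front of $\|\nabla F(x_t)\|_2^2$. I would handle the cross term with the polarization identity $\langle \nabla F(\tx_t),\nabla F(x_t)\rangle = \tfrac12\|\nabla F(\tx_t)\|_2^2 + \tfrac12\|\nabla F(x_t)\|_2^2 - \tfrac12\|\nabla F(\tx_t)-\nabla F(x_t)\|_2^2$, and then \emph{keep} the $-\tfrac{\eta_t}{2}\|\nabla F(\tx_t)\|_2^2$ term, bounding it below via $\|\nabla F(x_t)\|_2^2 \le 2\|\nabla F(\tx_t)\|_2^2 + 2\|\nabla F(\tx_t)-\nabla F(x_t)\|_2^2$. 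Collecting coefficients yields
\[
\CE_t[F(\tx_{t+1})] \le F(\tx_t) - \frac{\eta_t}{4}\bigl(3 - 2L\eta_t\bigr)\|\nabla F(x_t)\|_2^2 + \eta_t\|\nabla F(\tx_t)-\nabla F(x_t)\|_2^2 + \frac{L\eta_t^2\sigma^2}{2M},
\]
where the sign condition $\eta_t < 3/(2L)$ is exactly what keeps the quadratic coefficient positive. (The more naive symmetric split would only give a $(1 - L\eta_t)$ factor, so this reshuffling is the crux.)

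Next I would take total expectations, sum over $t = 0,\dots,T-1$, and telescope; since $\eta_{-1}=0$, $e_{0,i}=0$ and $\te_0=0$ we have $\tx_0 = x_0$, and $F(\tx_T) \ge F_*$ by Assumption~\ref{assumption:ef_smooth}, so the left side becomes $F(x_0) - F_*$. It remains to bound the residual term: by $L$-smoothness of $\nabla F$ and the definition $\tx_t - x_t = -\eta_{t-1}\bigl(\te_t + \frac1M\sum_i e_{t,i}\bigr)$, we get $\CE[\|\nabla F(\tx_t)-\nabla F(x_t)\|_2^2] \le L^2\eta_{t-1}^2\,\CE[\|\te_t + \frac1M\sum_i e_{t,i}\|_2^2]$, and Lemma~\ref{lemma:wm_error_bound} bounds the last factor by $\frac{8(1-\delta)G^2}{\delta^2}[1+\frac{16}{\delta^2}]$. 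Finally I would divide through by $\frac14\sum_{k}\eta_k(3-2L\eta_k) > 0$; the coefficient $4$ then propagates to all three terms (giving $\frac{2L\sigma^2}{M}$ and $\frac{32L^2(1-\delta)G^2}{\delta^2}$ respectively), and the weighted average $\frac{\sum_t \eta_t(3-2L\eta_t)\CE[\|\nabla F(x_t)\|_2^2]}{\sum_k \eta_k(3-2L\eta_k)}$ on the left is precisely $\CE[\|\nabla F(x_o)\|_2^2]$ for the stated sampling distribution of $o$. The only real obstacle is the coefficient bookkeeping in the descent step; everything downstream is routine given Lemmas~\ref{lemma:recurrence_ec_iterate} and~\ref{lemma:wm_error_bound}.
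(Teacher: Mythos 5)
Your proposal is correct and follows essentially the same route as the paper: descent lemma applied to the error-corrected iterate $\tx_t$ via Lemma~\ref{lemma:recurrence_ec_iterate}, the $\sigma^2/M$ mini-batch variance bound, extraction of the $\frac{\eta_t}{4}(3-2L\eta_t)$ coefficient with a residual $\eta_t\|\nabla F(\tx_t)-\nabla F(x_t)\|_2^2 \le L^2\eta_t\eta_{t-1}^2\|\te_t+\frac1M\sum_i e_{t,i}\|_2^2$ controlled by Lemma~\ref{lemma:wm_error_bound}, then telescoping (using $\tx_0=x_0$) and the weighted random index $o$. The only difference is cosmetic: you split the cross term by polarization plus $\|\nabla F(x_t)\|_2^2\le 2\|\nabla F(\tx_t)\|_2^2+2\|\nabla F(\tx_t)-\nabla F(x_t)\|_2^2$, whereas the paper uses Young's inequality with $\rho=1/2$; both yield the identical intermediate inequality and constants.
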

The first term on the RHS
shows decay of the initial value.
The second term 
is related to the variance,
and the proposed algorithm enjoys variance reduction with more workers. 
The last term 
is due to gradient compression.
A large
$\delta$ (less
compression) 
makes this term smaller and thus faster convergence. 
Similar to the results in \cite{karimireddy2019error}, our bound
also holds
for unbiased compressors (e.g., QSGD \cite{alistarh2017qsgd}) of the form $\mathcal{C}(\cdot) = cU(\cdot)$, where $\CE[U(x)] = x$ and
$\CE[\|U(x)\|_2^2]\leq\frac{1}{c}\|x\|_2^2$ for some $0 < c < 1$. Then, $cU(\cdot)$ is a $c$-approximate compressor in expectation.

The following Corollary 
shows that dist-EF-SGD has a convergence rate of
$\mathcal{O}(1/\sqrt{MT})$, leading to a $\mathcal{O}(1/(M\epsilon^4))$ iteration complexity for satisfying $\CE[\|\nabla F(x_o)\|_2^2] \leq \epsilon^2$. 

\begin{corollary} \label{corollary:convergence-dist-ef-sgd}
Let
stepsize
$\eta = \min(\frac{1}{2L}, \frac{\gamma}{\sqrt{T}/\sqrt{M} + (1 - \delta)^{1/3}\left(1/\delta^2 + 16/\delta^4\right)^{1/3}T^{1/3}})$ 
for some $\gamma > 0$.
Then,
\begin{eqnarray*}
\CE[\|\nabla F(x_o)\|_2^2]
 & \leq &  \frac{4L}{T}[F(x_0) - F_*] + \left[\frac{2}{\gamma}[F(x_0) - F_*] +  L\gamma\sigma^2\right]\frac{1}{\sqrt{MT}} \\
&& + \frac{2(1 - \delta)^{1/3}\left[\frac{1}{\gamma}[F(x_0) - F_*] + 8L^2\gamma^2G^2\right]}{\delta^{2/3}T^{2/3}}\left[1 + \frac{16}{\delta^2}\right]^{1/3}.
\end{eqnarray*}
In comparison,
under the same assumptions,
distributed synchronous SGD 
achieves 
\begin{eqnarray*} \label{eq:dist_sgd_rate}
\CE[\|\nabla F(x_o)\|^2_2]
 \leq  \frac{8L}{3T}[F(x_0) - F_*] + \left[\frac{2}{\gamma}[F(x_0) - F_*]  + L\gamma\sigma^2\right]\frac{2}{3\sqrt{MT}}.
\end{eqnarray*}
\end{corollary}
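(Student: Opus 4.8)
The plan is to obtain Corollary~\ref{corollary:convergence-dist-ef-sgd} by specializing Theorem~\ref{theorem:general_distefsgd_convergence} to the constant schedule $\eta_t \equiv \eta$ (with $\eta_{-1}=0$ as in Algorithm~\ref{alg:dist-ef-sgd}) and then carefully bookkeeping the three resulting terms. First I would record the elementary sums for a constant stepsize: $\sum_{k=0}^{T-1}\eta_k(3-2L\eta_k) = T\eta(3-2L\eta)$, $\sum_{t=0}^{T-1}\eta_t^2 = T\eta^2$, and $\sum_{t=0}^{T-1}\eta_t\eta_{t-1}^2 \le T\eta^3$; the sampling index also becomes uniform, $P(o=k)=1/T$. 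Since the prescribed stepsize satisfies $\eta \le \tfrac{1}{2L}$, we have $3-2L\eta \ge 2$, so $\tfrac{1}{3-2L\eta}\le\tfrac12$ and $\tfrac{4}{T\eta(3-2L\eta)}\le\tfrac{2}{T\eta}$. Plugging these into Theorem~\ref{theorem:general_distefsgd_convergence} and using $3-2L\eta\ge2$ throughout collapses its three terms to $\tfrac{2}{T\eta}[F(x_0)-F_*] + \tfrac{L\sigma^2\eta}{M} + \tfrac{16L^2(1-\delta)G^2}{\delta^2}\bigl(1+\tfrac{16}{\delta^2}\bigr)\eta^2$.

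Next I would unpack the stepsize. Writing $A = \sqrt{T}/\sqrt{M} + (1-\delta)^{1/3}\bigl(1/\delta^2+16/\delta^4\bigr)^{1/3}T^{1/3}$, the choice $\eta = \min\bigl(\tfrac{1}{2L}, \tfrac{\gamma}{A}\bigr)$ gives $\tfrac1\eta = \max\bigl(2L, \tfrac{A}{\gamma}\bigr) \le 2L + \tfrac{A}{\gamma}$, which splits the initial-value term $\tfrac{2}{T\eta}[F(x_0)-F_*]$ into exactly three pieces of orders $1/T$, $1/\sqrt{MT}$, and $1/T^{2/3}$, namely $\tfrac{4L}{T}[F(x_0)-F_*] + \tfrac{2}{\gamma\sqrt{MT}}[F(x_0)-F_*] + \tfrac{2(1-\delta)^{1/3}(1/\delta^2+16/\delta^4)^{1/3}}{\gamma T^{2/3}}[F(x_0)-F_*]$. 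For the variance term, $\eta \le \tfrac{\gamma\sqrt M}{\sqrt T}$ (since $A \ge \sqrt T/\sqrt M$) yields $\tfrac{L\sigma^2\eta}{M} \le \tfrac{L\gamma\sigma^2}{\sqrt{MT}}$. For the compression term, $\eta \le \tfrac{\gamma}{(1-\delta)^{1/3}(1/\delta^2+16/\delta^4)^{1/3}T^{1/3}}$ gives $\eta^2 \le \tfrac{\gamma^2}{(1-\delta)^{2/3}(1/\delta^2+16/\delta^4)^{2/3}T^{2/3}}$; substituting and using the identity $\tfrac{1}{\delta^2}+\tfrac{16}{\delta^4} = \tfrac{1}{\delta^2}\bigl(1+\tfrac{16}{\delta^2}\bigr)$ to cancel powers of $\delta$ turns this into $\tfrac{16L^2(1-\delta)^{1/3}\gamma^2 G^2}{\delta^{2/3}T^{2/3}}\bigl(1+\tfrac{16}{\delta^2}\bigr)^{1/3}$. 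Collecting the two $1/\sqrt{MT}$ pieces and, after the same $\delta$-power identity, the two $1/T^{2/3}$ pieces reproduces the stated bound.

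For the comparison with distributed synchronous SGD, I would run the same argument with $\mathcal{C}$ equal to the identity: then $\delta=1$ makes Lemma~\ref{lemma:wm_error_bound} vanish and removes the compression term entirely, and the error-corrected iterate coincides with $x_t$, so one may keep $\eta$ in the wider range permitted by Theorem~\ref{theorem:general_distefsgd_convergence}. Choosing a constant stepsize of the form $\eta = \min(c/L,\,\gamma\sqrt{M}/\sqrt{T})$ and repeating the bookkeeping of the first two terms — now without the factor-of-two slack that error feedback introduces — yields the sharper constants $8L/(3T)$ for the initial-value term and $2/(3\sqrt{MT})$ for the statistical term.

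The whole derivation is essentially mechanical once Theorem~\ref{theorem:general_distefsgd_convergence} is available; there is no genuine obstacle. The only places that need care are the max-to-sum step $1/\eta = \max(2L, A/\gamma) \le 2L + A/\gamma$ together with tracking which summand of $A$ pairs with which term so that the $1/T$, $1/\sqrt{MT}$, and $1/T^{2/3}$ contributions land with the correct constants, and the routine but slightly fiddly algebra of matching the powers of $\delta$ in the compression-related term.
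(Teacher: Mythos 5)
Your proposal is correct and follows essentially the same route as the paper: specialize Theorem~\ref{theorem:general_distefsgd_convergence} to a constant stepsize, use $\eta \le 1/(2L)$ so that $3-2L\eta \ge 2$, bound $1/\eta = \max(2L, A/\gamma) \le 2L + A/\gamma$, and pair each summand of $A$ with the appropriate term (with the $\delta$-power identity for the compression piece). For the full-precision comparison the paper likewise re-derives the plain-SGD descent bound directly (coefficient $\eta(1-L\eta/2)$, hence $2-L\eta \ge 3/2$ at $\eta \le 1/(2L)$) rather than plugging $\delta=1$ into Theorem~\ref{theorem:general_distefsgd_convergence} — exactly the ``no Young's-inequality slack'' point you note, since the theorem's constants alone would only give $4L/T$ and $1/\sqrt{MT}$.
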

Thus, the convergence rate of
dist-EF-SGD 
matches that of distributed synchronous SGD (with full-precision gradients)
after $T\geq O(1/\delta^2)$ iterations,
even though gradient compression is used.  
Moreover, more
workers
(larger $M$) leads to faster convergence. 
Note that the bound above does not reduce to that of EF-SGD when $M=1$, as we have two-way compression. 
When $M=1$, our bound also differs from Remark~4 in \cite{karimireddy2019error} in that
our last term is
$O((1-\delta)^{1/3}/(\delta^{4/3}T^{2/3}))$, while theirs is
$O((1-\delta)/(\delta^2T))$ (which is for single machine with one-way compression). Ours is worse
by a factor of
$O(T^{1/3}\delta^{2/3}/(1-\delta)^{2/3})$, which is the price to pay for two-way
compression and a linear
speedup of using $M$ workers. 
Moreover, unlike signSGD with majority vote \cite{pmlr-v80-bernstein18a}, we
achieve a convergence rate of $\mathcal{O}(1/\sqrt{MT})$ without assuming 
a large mini-batch size ($=T$)
and 
unimodal symmetric gradient noise.

Theorem~\ref{theorem:general_distefsgd_convergence} 
only requires $0 < \eta_t < 3/(2L)$ for all $t$. This
thus allows the use
of any decreasing, increasing, or hybrid stepsize schedule. 
In particular, we have the following Corollary.

\begin{corollary} \label{corollary:convergence-decrease-dist-ef-sgd}
Let $\eta_t = \frac{\gamma}{((t+1)T)^{1/4}/(\sqrt{M}) + (1 -
\delta)^{1/3}\left(1/\delta^2 + 16/\delta^4\right)^{1/3}T^{1/3}}$ (decreasing stepsize) with $T \geq 16L^4\gamma^4M^2$ or $\eta_t = \frac{\gamma\sqrt{t+1}}{T/\sqrt{M} + (1 - \delta)^{1/3}\left(1/\delta^2 + 16/\delta^4\right)^{1/3}T^{5/6}}$ (increasing stepsize) with $T \geq 4L^2\gamma^2M$. Then, dist-EF-SGD converges to a stationary point at a rate of $\mathcal{O}(1/\sqrt{MT})$. 
\end{corollary}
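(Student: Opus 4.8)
The plan is to substitute each of the two prescribed stepsize schedules into the bound of Theorem~\ref{theorem:general_distefsgd_convergence} and check that every term on its right-hand side is $\mathcal{O}(1/\sqrt{MT})$, so that $\CE[\|\nabla F(x_o)\|_2^2]=\mathcal{O}(1/\sqrt{MT})$ (treating $L,G,\sigma,\gamma,\delta$ as fixed; the $M$-dependence will be at least as good as claimed). Write $B\equiv(1-\delta)^{1/3}(1/\delta^2+16/\delta^4)^{1/3}$, so that the $\delta$-prefactor $\frac{32L^2(1-\delta)G^2}{\delta^2}[1+16/\delta^2]$ of the last term equals $32L^2G^2B^3$, while the denominator of $\eta_t$ is $T^{1/4}(t+1)^{1/4}/\sqrt M+BT^{1/3}$ in the decreasing case and $T/\sqrt M+BT^{5/6}$ in the increasing case. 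First I would verify the standing hypothesis $0<\eta_t<3/(2L)$: the decreasing schedule is largest at $t=0$, where $\eta_0\le\gamma\sqrt M/T^{1/4}\le 1/(2L)$ because $T\ge16L^4\gamma^4M^2$; the increasing schedule is largest at $t=T-1$, where $\eta_{T-1}\le\gamma\sqrt{M/T}\le 1/(2L)$ because $T\ge4L^2\gamma^2M$. In either case $\eta_t\le1/(2L)$, hence $3-2L\eta_k\ge2$ for all $k$, which lets me replace each denominator $\sum_k\eta_k(3-2L\eta_k)$ by $2\sum_k\eta_k$ and reduce the theorem's bound to a sum of $\frac{2[F(x_0)-F_*]}{\sum_k\eta_k}$, $\frac{L\sigma^2}{M}\cdot\frac{\sum_t\eta_t^2}{\sum_k\eta_k}$, and $16L^2G^2B^3\cdot\frac{\sum_t\eta_t\eta_{t-1}^2}{\sum_k\eta_k}$.

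Next I would collect the elementary sum estimates via the integral test: $\sum_{k=1}^T k^{1/2}\ge\frac23T^{3/2}$, $\sum_{k=1}^T k\le T^2$, $\sum_{k=1}^T k^{-1/2}\le2\sqrt T$, and $\sum_{k=1}^T k^{3/2}\le CT^{5/2}$ for an absolute constant $C$. A lower bound on $\sum_{k=0}^{T-1}\eta_k$ follows by replacing each $\eta_k$ by its smallest value, giving $\sum_k\eta_k\ge\gamma T/(\sqrt{T/M}+BT^{1/3})$ (decreasing) and $\sum_k\eta_k\ge\frac23\gamma T^{3/2}/(T/\sqrt M+BT^{5/6})$ (increasing); in both cases the $M$-dependent summand dominates the denominator, so $\sum_k\eta_k$ is of order $\sqrt{MT}$. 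For $\sum_t\eta_t^2$ I would drop the $B$-term from the denominator (decreasing: $\sum_t\eta_t^2\le\gamma^2MT^{-1/2}\sum_t(t+1)^{-1/2}\le2\gamma^2M$; increasing: $\sum_t\eta_t^2\le\gamma^2MT^{-2}\sum_t(t+1)\le\gamma^2M$), so the second term is $\mathcal{O}(1/\sqrt{MT})$ up to a lower-order $\mathcal{O}(T^{-2/3})$ contribution coming from the $B$-part of the $\sum_k\eta_k$ bound; the first term is handled the same way. The only place requiring care is $\sum_t\eta_t\eta_{t-1}^2$: here I would use monotonicity of the schedule (with $\eta_{-1}=0$) — $\eta_t\le\eta_{t-1}$ in the decreasing case and $\eta_{t-1}\le\eta_t$ in the increasing case — to bound $\eta_t\eta_{t-1}^2$ by $\eta_{t-1}^3$ or $\eta_t^3$, and reduce the whole term to a sum of cubes.

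The decisive point, and the reason the $T$-exponents $1/3$ and $5/6$ appear in the $B$-terms, is the estimate of $\sum_t\eta_t^3$: pulling out one factor $\eta_s\le\gamma/(BT^{1/3})$ (decreasing) or $\eta_s\le\gamma/(BT^{5/6})$ (increasing) and summing the remaining $\eta_s^2$ gives $\sum_t\eta_t^3\le2\gamma^3M/(BT^{1/3})$ and $\sum_t\eta_t^3=\mathcal{O}(\gamma^3/B^3)$ respectively. Dividing by the order-$\sqrt{MT}$ lower bound on $\sum_k\eta_k$ and multiplying by $16L^2G^2B^3$, the $B^3$ prefactor is (partially, resp.\ fully) cancelled by the $T$-powers baked into the schedule, and the compression term comes out as $\mathcal{O}(\sqrt M B^2/T^{5/6}+MB^3/T)$ in the decreasing case and $\mathcal{O}(1/\sqrt{MT}+B/T^{2/3})$ in the increasing case; both are $\mathcal{O}(1/\sqrt{MT})$. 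Summing the three contributions yields the claimed rate, the conditions $T\ge16L^4\gamma^4M^2$ and $T\ge4L^2\gamma^2M$ being precisely what keeps $\eta_t\le1/(2L)$. The main obstacle is purely organizational — tracking the interaction of the two summands in each $\eta_t$-denominator and checking that the $B^3$ prefactor is absorbed by the engineered $T$-powers; once that interplay is laid out, everything else is routine $p$-series estimation.
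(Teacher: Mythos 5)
Your proposal is essentially the paper's own proof: verify $\eta_t\le 1/(2L)$ from the stated conditions on $T$, use $3-2L\eta_k\ge 2$ to reduce the denominator in Theorem~\ref{theorem:general_distefsgd_convergence} to $2\sum_k\eta_k$, lower-bound $\sum_k\eta_k$ by its smallest summand (decreasing case) or by an integral (increasing case), bound $\sum_t\eta_t^2$ and, via monotonicity of the schedule, $\sum_t\eta_t\eta_{t-1}^2$ by a sum of cubes, and substitute back; this is exactly the route taken in the appendix. The only flaw is a local slip in the increasing case: the pointwise bound $\eta_s\le\gamma/(BT^{5/6})$ is false, since the numerator carries $\sqrt{s+1}$; only $\eta_s\le\gamma\sqrt{s+1}/(BT^{5/6})\le\gamma/(BT^{1/3})$ holds. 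Consequently your claimed $\sum_t\eta_t^3=\mathcal{O}(\gamma^3/B^3)$ does not follow from the ``pull out one factor, sum the squares'' argument: with the correct maximum $\gamma/(BT^{1/3})$ and $\sum_t\eta_t^2\lesssim\gamma^2M$, that argument only gives $\mathcal{O}(\gamma^3M/(BT^{1/3}))$, which after dividing by $\sum_k\eta_k$ yields a compression term of the same $\mathcal{O}(\sqrt{M}B^2/T^{5/6}+MB^3/T)$ form as your decreasing case --- still sufficient for the stated $\mathcal{O}(1/\sqrt{MT})$ rate, but not the $\mathcal{O}(1/\sqrt{MT}+B/T^{2/3})$ you quote. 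To obtain the latter, do what the paper does: keep the $\sqrt{t+1}$ numerator and use the $B$-part of the denominator in all three factors, i.e.\ $\eta_t^3\le\gamma^3(t+1)^{3/2}/(B^3T^{5/2})$, then sum $(t+1)^{3/2}\lesssim T^{5/2}$ so that the $B^3$ prefactor cancels fully. With that one-line repair (or by simply accepting your looser $M$-dependent bound), the argument is complete and coincides with the paper's.
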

To the best of our knowledge, this is the first such result for distributed 
compressed SGD
with decreasing/increasing stepsize on nonconvex problems. These two stepsize
schedules can also be used together. For example, one can
use an increasing stepsize at the beginning of training as warm-up, and then a 
decreasing stepsize afterwards. 


\subsection{Blockwise Compressor}
\label{sec:blc_cp}

\begin{algorithm}[t]
\caption{Distributed Blockwise SGD with Error-Feedback (dist-EF-blockSGD)}
\label{alg:dist-bef-sgd}
\begin{algorithmic}[1]
    \STATE \textbf{Input:} stepsize sequence $\{\eta_t\}$ with $\eta_{-1} = 0$; number of workers $M$; block partition $\{\mG_1, \dots, \mG_B\}$.
    \STATE \textbf{Initialize:} $x_0 \in \mathbb{R}^d$; $e_{0,i} = 0 \in \mathbb{R}^d$ on each worker $i$; $\tilde{e}_{0} = 0 \in \mathbb{R}^d$ on server
    \FOR{$t = 0, \ldots, T-1$}
        \STATE \textbf{on each worker $i$}
         \STATE \qquad $p_{t,i} = g_{t,i} +  \frac{\eta_{t-1}}{\eta_t}e_{t,i}$ \algorithmiccomment{stochastic gradient $g_{t,i} = \nabla f(x_t, \xi_{t, i})$}
        \STATE \qquad \textbf{push} $\Delta_{t,i} = \left[\frac{\|p_{t, i, \mG_1}\|_1}{d_1}\text{sign}(p_{t, i, \mG_1}), \dots, \frac{\|p_{t, i, \mG_B}\|_1}{d_B}\text{sign}(p_{t, i, \mG_B})\right]$ \textbf{to server} 
        \STATE \qquad $x_{t+1} = x_t - \eta_t\tilde{\Delta}_{t}$ \algorithmiccomment{$\tilde{\Delta}_{t}$ \textbf{is pulled from server}}
        \STATE \qquad $e_{t+1,i} = p_{t,i} - \Delta_{t,i}$

        \STATE \textbf{on server}
        \STATE \qquad \textbf{pull} $\Delta_{t,i}$ \textbf{from each worker} $i$ and $\tp_{t} = \frac{1}{M} \sum_{i=1}^M \Delta_{t,i} +  \frac{\eta_{t-1}}{\eta_t}\te_{t}$
        \STATE \qquad \textbf{push} $\tilde{\Delta}_t = \left[\frac{\|\tp_{t,  \mG_1}\|_1}{d_1}\text{sign}(\tp_{t, \mG_1}), \dots, \frac{\|\tp_{t,  \mG_B}\|_1}{d_B}\text{sign}(\tp_{t, \mG_B})\right]$ \textbf{to each worker}
        \STATE \qquad $\tilde{e}_{t+1} = \tp_t - \tDelta_t$
    \ENDFOR
\end{algorithmic}
\end{algorithm}

A commonly used compressor
is
\cite{karimireddy2019error}:
\begin{equation} \label{eq:orig}
\mC(v) = \|v\|_1/d \cdot\text{sign}(v).
\end{equation} 
Compared to using only the sign operator as in signSGD, the factor $\|v\|_1/d$ can preserve the gradient's magnitude. 
However, as shown in
\cite{karimireddy2019error}, its
$\delta$ in Definition~\ref{definition:ef_compressor}
is  $\|v\|_1^2/(d\|v\|_2^2)$,
and can be
particularly small when $v$ is sparse. 
When 
$\delta$ is closer to $1$,
the bound 
in Corollary~\ref{corollary:convergence-dist-ef-sgd}
becomes smaller  and thus 
convergence is
faster.
In this section,  we achieve this
by
proposing a blockwise extension of
(\ref{eq:orig}).

Specifically, we 
partition the
compressor input 
$v$
into $B$ blocks, where
each block $b$ has $d_b$ elements indexed by
$\mG_b$.  
Block $b$
is then compressed with 
scaling factor
$\|v_{\mG_b}\|_1/d_b$ (where $v_{\mG_b}$ is the subvector of $v$ with elements in
block $b$),
leading to:
$\mC_B(v) = [\|v_{\mG_1}\|_1/d_1 \cdot \text{sign}(v_{\mG_1}), \dots,
\|v_{\mG_B}\|_1/d_B\cdot\text{sign}(v_{\mG_B})]$. 
A similar 
compression scheme, 
with each layer being a block,
is considered in the experiments 
of \cite{karimireddy2019error}. However, they provide no
theoretical justifications. 
The following Proposition first shows that $\mC_B(\cdot)$ is also an approximate compressor.

 \begin{proposition} \label{prop:block_density}
  Let $[B] = \{1, 2, \dots, B\}$. 
 $\mC_B$ 
 is a
 $\phi(v)$-approximate compressor, where 
 $\phi(v)= \min_{b \in [B]} \frac{\|v_{\mG_b}\|_1^2}{d_b\|v_{\mG_b}\|_2^2} \geq \min_{b \in [B]} \frac{1}{d_b}$.  
 \end{proposition}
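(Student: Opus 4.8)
The plan is to exploit the fact that $\mC_B$ acts independently on the $B$ blocks, so that both the compression error and $\|v\|_2^2$ decompose as sums over blocks. Writing $u_b = v_{\mG_b}$ and $\mC^{(b)}(u_b) = \frac{\|u_b\|_1}{d_b}\text{sign}(u_b)$ for the per-block scaled-sign operator, we have $\|\mC_B(v) - v\|_2^2 = \sum_{b=1}^B \|\mC^{(b)}(u_b) - u_b\|_2^2$ and $\|v\|_2^2 = \sum_{b=1}^B \|u_b\|_2^2$, so it suffices to control each block and then aggregate.

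First I would establish the per-block estimate $\|\mC^{(b)}(u_b) - u_b\|_2^2 \le (1 - \delta_b)\|u_b\|_2^2$ with $\delta_b = \|u_b\|_1^2/(d_b\|u_b\|_2^2)$. This is exactly the bound for the single compressor (\ref{eq:orig}) recorded in \cite{karimireddy2019error}: expanding coordinatewise, a nonzero coordinate $j$ contributes $(\tfrac{\|u_b\|_1}{d_b}\text{sign}(u_{b,j}) - u_{b,j})^2 = (\tfrac{\|u_b\|_1}{d_b} - |u_{b,j}|)^2$ since $\text{sign}(u_{b,j})u_{b,j} = |u_{b,j}|$, which after summing gives $\le \|u_b\|_2^2 - \|u_b\|_1^2/d_b$ (the inequality absorbing the convention $\text{sign}(0)=0$, since omitting zero coordinates only shrinks the integer multiplying $\|u_b\|_1^2/d_b^2$). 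Blocks with $u_b = 0$ contribute nothing to either side and are simply discarded (equivalently set $\delta_b = 1$).

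Next I would combine the blocks: from the per-block bound, $\|\mC_B(v) - v\|_2^2 \le \sum_{b=1}^B (1-\delta_b)\|u_b\|_2^2 \le \big(\max_{b\in[B]}(1-\delta_b)\big)\sum_{b=1}^B \|u_b\|_2^2 = (1 - \min_{b\in[B]}\delta_b)\|v\|_2^2 = (1 - \phi(v))\|v\|_2^2$, which is precisely the statement that $\mC_B$ is a $\phi(v)$-approximate compressor in the sense of Definition~\ref{definition:ef_compressor}. For the lower bound $\phi(v) \ge \min_b 1/d_b$, I would use the elementary inequality $\|u_b\|_2 \le \|u_b\|_1$, whence $\delta_b = \|u_b\|_1^2/(d_b\|u_b\|_2^2) \ge \|u_b\|_2^2/(d_b\|u_b\|_2^2) = 1/d_b$, and take the minimum over $b$.

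I expect no serious obstacle here; the only points needing care are (i) passing from the collection of per-block contractions to a single uniform factor by factoring out the \emph{worst} block $\min_b \delta_b$, rather than attempting to add the $\delta_b\|u_b\|_2^2$ terms, and (ii) the degenerate handling of zero blocks together with the $\text{sign}(0)$ convention in the coordinatewise expansion.
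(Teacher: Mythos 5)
Your proposal is correct and follows essentially the same route as the paper's proof: decompose $\|\mC_B(v)-v\|_2^2$ over blocks, expand the per-block scaled-sign error to get $\left(1-\frac{\|v_{\mG_b}\|_1^2}{d_b\|v_{\mG_b}\|_2^2}\right)\|v_{\mG_b}\|_2^2$, and factor out the worst block to obtain the $(1-\phi(v))\|v\|_2^2$ bound. Your extra care with zero blocks, the $\text{sign}(0)$ convention, and the explicit derivation of $\phi(v)\geq\min_b 1/d_b$ via $\|v_{\mG_b}\|_1\geq\|v_{\mG_b}\|_2$ only adds detail the paper leaves implicit.
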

The resultant algorithm
will be 
called dist-EF-blockSGD 
(Algorithm~\ref{alg:dist-bef-sgd}) in the sequel. 
As can be seen, this is a special case of Algorithm~\ref{alg:dist-ef-sgd}.
By replacing $\delta$ with $\phi(v)$ in
  Proposition~\ref{prop:block_density}, 
the convergence results of dist-EF-SGD in Section~\ref{sec:dist-ef-sgd} can be directly applied.


There are many ways to partition the gradient into blocks. In practice, one can simply consider 
each parameter
tensor/matrix/vector in the deep network as a block. The intuition is that (i)
gradients in the same parameter tensor/matrix/vector typically have similar
magnitudes,
and (ii) the corresponding scaling factors can thus be tighter than the scaling factor obtained on the whole parameter, leading to a larger $\delta$. 
As an illustration of (i), 
Figure~\ref{fig:cv_b16_cifar100} shows 
the 
coefficient of variation
(which 
is defined as the ratio of the standard deviation to the mean)
of $\{|g_{t, i}|\}_{i \in \mG_b}$ 
averaged over all blocks and iterations in an epoch, 
obtained from ResNet-20 on the CIFAR-100 dataset (with a mini-batch size of 16 per worker).\footnote{The detailed experimental setup is in  Section~\ref{sec:distef-cifar100-expt}.}
A value smaller than $1$ indicates that the absolute gradient values in each block concentrate around the mean.
As for point (ii) above,
consider the case where all the blocks
are of the same size ($d_b = \tilde{d}, \forall b$), elements in the
same block have the same magnitude
($\forall i \in \mathcal{G}_b,
|v_i| = c_b$ for some $c_b$),
and 
the magnitude is increasing across blocks ($c_b/c_{b+1}=\alpha$ for some $\alpha < 1$). 
For the standard compressor in 
(\ref{eq:orig}), 
$\delta=\frac{\|v\|_1^2}{d\|v\|_2^2} = \frac{(1+\alpha)(1-\alpha^B)}{B(1-\alpha)(1+\alpha^B)} \approx
\frac{(1+\alpha)}{B(1-\alpha)}$ for a sufficiently large $B$;
whereas for the proposed blockwise compressor,
$\phi(v) = 1
\gg \frac{(1+\alpha)}{B(1-\alpha)}$.
 Figure~\ref{fig:delta_b16_cifar100}
shows the empirical estimates 
of 
$\|v\|_1^2/(d\|v\|_2^2)$ and $\phi(v)$
in the ResNet-20 experiment.
As can be seen, $\phi(v)
\gg \|v\|_1^2/(d\|v\|_2^2)$.


 

\begin{figure}[ht]
\vskip -.2in
\begin{center}
\subfigure[Coefficient of variation of $\{|g_{t, i}|\}_{i \in \mG_b}$.]{\includegraphics[width=0.4\columnwidth]{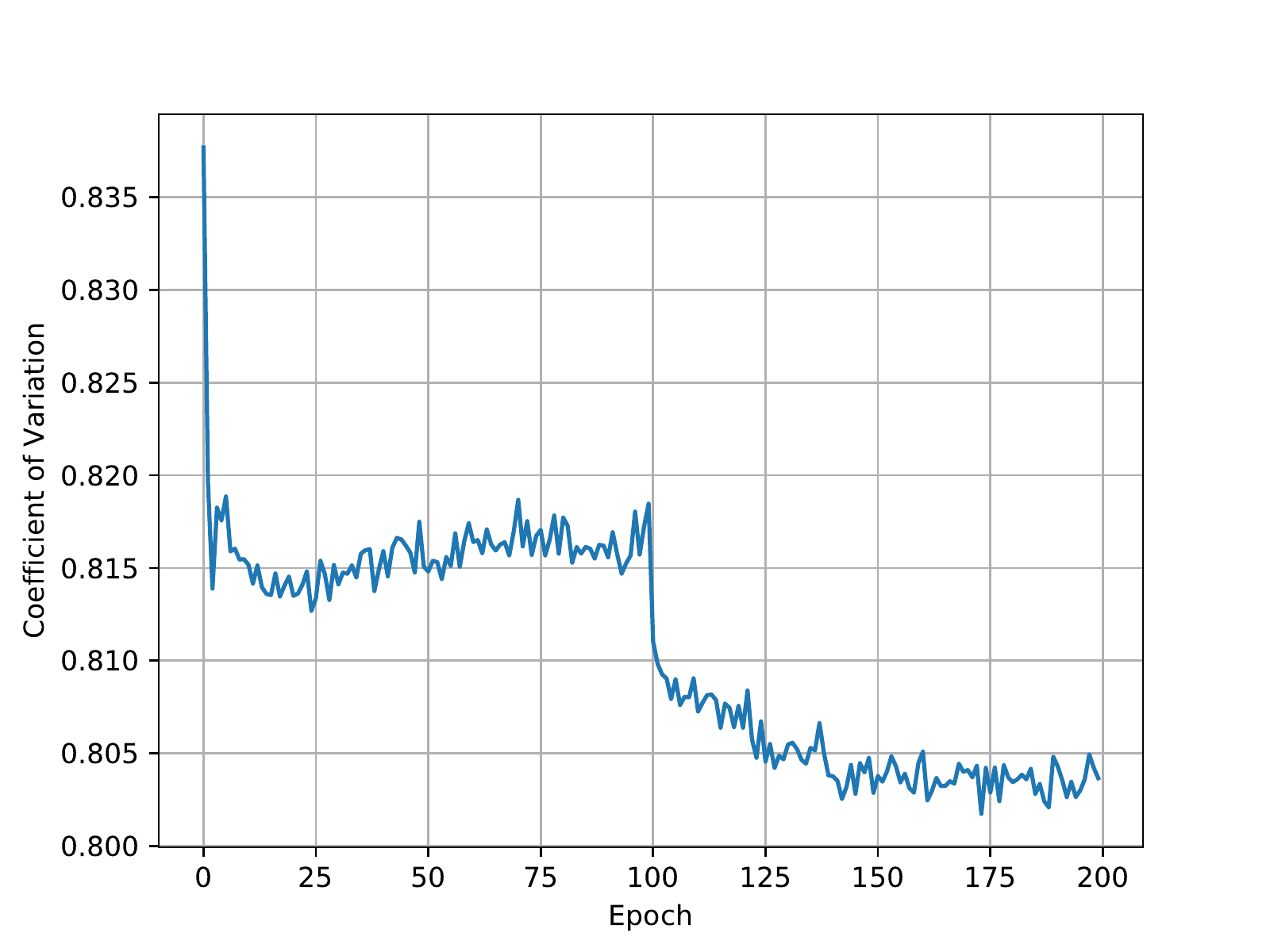} \label{fig:cv_b16_cifar100}}
\subfigure[$\delta$ for blockwise and non-block versions.]{\includegraphics[width=0.4\columnwidth]{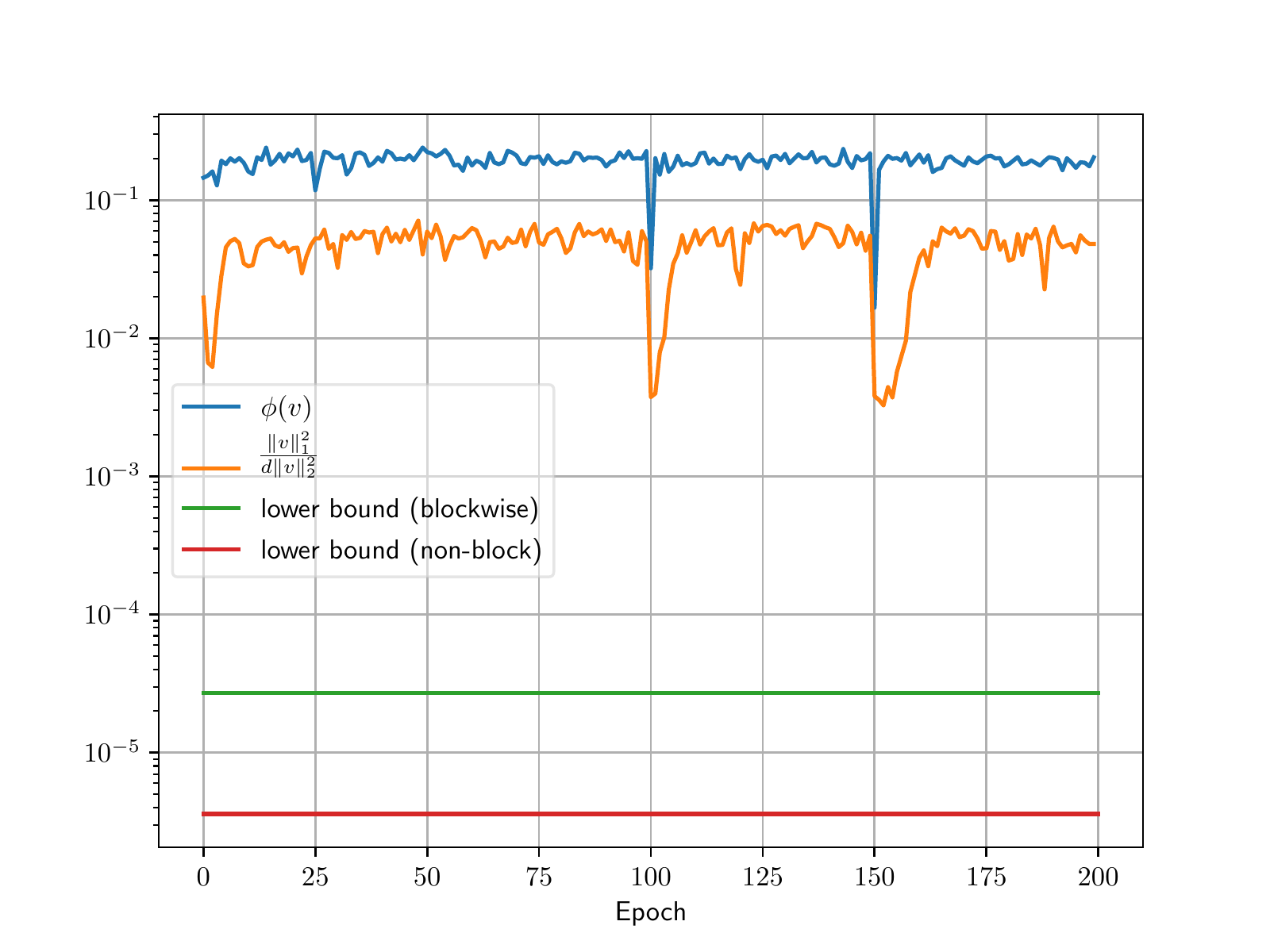} \label{fig:delta_b16_cifar100}}
\vskip -.1in
\caption{Illustrations using the ResNet-20 in 
Section~\ref{sec:distef-cifar100-expt}.
Left: Averaged coefficient of variation of $\{|g_{t, i}|\}_{i \in \mG_b}$. 
Right: Empirical estimates of $\delta$ for the blockwise ($\phi(v)$ in Proposition~\ref{prop:block_density}) and non-block versions ($\|v\|_1^2/(d\|v\|_2^2)$).
Each point is the minimum among all iterations in an epoch. 
The lower bounds,
$\min_{b \in [B]} 1/d_b$ and $1/d$,
are also shown. 
Note that the ordinate is in log scale. 
}
\end{center}
\end{figure}

The 
per-iteration communication costs of the various distributed algorithms 
are shown in
Table~\ref{tbl:bit_comp}.
Compared to signSGD with majority vote \cite{pmlr-v80-bernstein18a}, 
dist-EF-blockSGD requires an extra
$64MB$ bits 
for transmitting the blockwise scaling factors
(each factor $\|v_{\mG_b}\|_1/d_b$ is stored
in float32 format and transmitted twice in each iteration).
By treating each vector/matrix/tensor parameter as a block,
$B$ is typically in the order of hundreds. For most
problems of interest, $64MB/(2Md) < 10^{-3}$.
The reduction in communication cost 
compared to full-precision distributed SGD
is thus nearly 32x.

 
 \begin{table}[h]
 \vskip -.1in
 \begin{center}
 \caption{Communication costs of the various distributed gradient compression
 algorithms and SGD.}
 \vskip .1in
 \begin{tabular}{c|c}
 \hline
 algorithm & \#bits per iteration \\
  \hline
 full-precision SGD & $64Md$ \\
  signSGD with majority vote & $2Md$ \\
  dist-EF-blockSGD & $2Md + 64MB$\\
  \hline
 \end{tabular}
  \label{tbl:bit_comp}
 \end{center}
 \vskip -.2in
 \end{table}

\subsection{Nesterov's Momentum}
\label{sec:dist-ef-sgdm}

Momentum has been widely used in deep networks 
\cite{sutskever2013importance}.
Standard distributed SGD with 
Nesterov's momentum 
\cite{nesterov1983method}
and full-precision gradients
uses the update: $m_{t,i} = \mu m_{t-1,i} + g_{t, i}, \forall i \in [M]$ and $x_{t+1} =x_t -  \eta_t \frac{1}{M} \sum_{i=1}^M(\mu m_{t,i} + g_{t, i})$, 
where 
$m_{t, i}$ 
is a local momentum vector maintained by each worker $i$ at time $t$ (with
$m_{0,i} = 0$), and $\mu \in [0, 1)$ is the momentum parameter.
In this section, we extend
the proposed dist-EF-SGD 
with momentum. 
Instead of sending the compressed $g_{t, i} +
\frac{\eta_{t-1}}{\eta_t}e_{t, i}$
to the server,
the compressed
$\mu m_{t, i} + g_{t, i} + \frac{\eta_{t-1}}{\eta_t}e_{t, i}$ is sent.
The server merges all the workers's results 
and sends it back to each worker.
The resultant procedure with blockwise compressor is called dist-EF-blockSGDM (Algorithm~\ref{alg:dist-bef-sgdm}),
and has the same communication cost as dist-EF-blockSGD. 
The
corresponding non-block variant is analogous.

\vskip -.1in
\begin{algorithm}[ht]
\caption{Distributed Blockwise Momentum SGD with Error-Feedback (dist-EF-blockSGDM)}
\label{alg:dist-bef-sgdm}
\begin{algorithmic}[1]
    \STATE \textbf{Input:} stepsize sequence $\{\eta_t\}$ with $\eta_{-1} = 0$; momentum parameter $0 \leq \mu < 1$; number of workers $M$; block partition $\{\mG_1, \dots, \mG_B\}$.
    \STATE \textbf{Initialize:} $x_0 \in \mathbb{R}^d$; $m_{-1, i} = e_{0,i} = 0 \in \mathbb{R}^d$ on each worker $i$; $\tilde{e}_{0} = 0 \in \mathbb{R}^d$ on server
    \FOR{$t = 0, \ldots, T-1$}
        \STATE \textbf{on each worker $i$}
         \STATE \qquad $m_{t,i} = \mu m_{t - 1,i} +  g_{t,i}$ \algorithmiccomment{stochastic gradient $g_{t,i} = \nabla f(x_t, \xi_{t, i})$}
        \STATE \qquad $p_{t,i} = \mu m_{t,i} + g_{t,i} +  \frac{\eta_{t-1}}{\eta_t}e_{t,i}$
        \STATE \qquad \textbf{push} $\Delta_{t,i} = \left[\frac{\|p_{t, i, \mG_1}\|_1}{d_1}\text{sign}(p_{t, i, \mG_1}), \dots, \frac{\|p_{t, i, \mG_B}\|_1}{d_B}\text{sign}(p_{t, i, \mG_B})\right]$ \textbf{to server} 
        \STATE \qquad $x_{t+1} = x_t - \eta_t\tilde{\Delta}_{t}$ \algorithmiccomment{$\tilde{\Delta}_{t}$ \textbf{is pulled from server}}
        \STATE \qquad $e_{t+1,i} = p_{t,i} - \Delta_{t,i}$

        \STATE \textbf{on server}
        \STATE \qquad \textbf{pull} $\Delta_{t,i}$ \textbf{from each worker} $i$ and $\tp_{t} = \frac{1}{M} \sum_{i=1}^M \Delta_{t,i} +  \frac{\eta_{t-1}}{\eta_t}\te_{t}$
        \STATE \qquad \textbf{push} $\tilde{\Delta}_t = \left[\frac{\|\tp_{t,  \mG_1}\|_1}{d_1}\text{sign}(\tp_{t, \mG_1}), \dots, \frac{\|\tp_{t,  \mG_B}\|_1}{d_B}\text{sign}(\tp_{t, \mG_B})\right]$ \textbf{to each worker}
        \STATE \qquad $\tilde{e}_{t+1} = \tp_t - \tDelta_t$
    \ENDFOR
\end{algorithmic}
\end{algorithm}
\vskip -.1in

Similar to Lemma~\ref{lemma:recurrence_ec_iterate}, the following Lemma shows that
the error-corrected iterate $\tx_t$ is very similar to Nesterov's accelerated
gradient iterate, except that the momentum is computed based on $\{x_t\}$. 
\begin{lemma} \label{lemma:recurrence_mec_iterate}
The error-corrected iterate $\tx_t = x_t - \eta_{t-1}(\te_t + \frac{1}{M} \sum_{i=1}^M e_{t,i})$, where 
$x_t$, $\te_t$, and $e_{t,i}$'s are generated from 
Algorithm~\ref{alg:dist-bef-sgdm},
satisfies the recurrence:
$\tx_{t+1} = \tx_t -  \eta_t \frac{1}{M} \sum_{i=1}^M(\mu m_{t,i} + g_{t, i})$.
\end{lemma}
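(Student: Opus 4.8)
The plan is to verify the claimed identity by a direct telescoping computation from the update rules of Algorithm~\ref{alg:dist-bef-sgdm}, exactly parallel to the proof of Lemma~\ref{lemma:recurrence_ec_iterate}. Introduce the shorthand $S_t := \te_t + \frac{1}{M} \sum_{i=1}^M e_{t,i}$ for the aggregated error, so that by definition $\tx_t = x_t - \eta_{t-1} S_t$, and it suffices to show
\[
x_{t+1} - \eta_t S_{t+1} \;=\; x_t - \eta_{t-1} S_t - \eta_t \frac{1}{M}\sum_{i=1}^M(\mu m_{t,i} + g_{t,i}).
\]

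First I would obtain a closed form for $S_{t+1}$. Using $\te_{t+1} = \tp_t - \tDelta_t$ and $e_{t+1,i} = p_{t,i} - \Delta_{t,i}$, averaging the latter over the workers and adding the former gives $S_{t+1} = \tp_t - \tDelta_t + \frac{1}{M}\sum_i p_{t,i} - \frac{1}{M}\sum_i \Delta_{t,i}$. Next I substitute the server aggregation step $\tp_t = \frac{1}{M}\sum_i \Delta_{t,i} + \frac{\eta_{t-1}}{\eta_t}\te_t$, which cancels the two $\frac{1}{M}\sum_i \Delta_{t,i}$ terms, and then the worker definition $p_{t,i} = \mu m_{t,i} + g_{t,i} + \frac{\eta_{t-1}}{\eta_t}e_{t,i}$. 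Collecting the $\frac{\eta_{t-1}}{\eta_t}$-weighted error vectors coming from $\tp_t$ and from the $\{p_{t,i}\}$, they recombine into $\frac{\eta_{t-1}}{\eta_t} S_t$, leaving $S_{t+1} = \frac{\eta_{t-1}}{\eta_t} S_t - \tDelta_t + \frac{1}{M}\sum_i(\mu m_{t,i} + g_{t,i})$.

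Then I would multiply this by $\eta_t$ and combine it with the parameter update $x_{t+1} = x_t - \eta_t \tDelta_t$: the $\pm \eta_t \tDelta_t$ terms cancel, and $\eta_t \cdot \frac{\eta_{t-1}}{\eta_t} S_t = \eta_{t-1} S_t$ is precisely the term needed to rewrite $x_{t+1} - \eta_t S_{t+1}$ as $x_t - \eta_{t-1} S_t - \eta_t \frac{1}{M}\sum_i(\mu m_{t,i} + g_{t,i}) = \tx_t - \eta_t \frac{1}{M}\sum_i(\mu m_{t,i} + g_{t,i})$, which is the assertion. Finally I would check the base case $t = 0$ separately: there $\eta_{-1} = 0$, and the initialization $\te_0 = 0$, $e_{0,i} = 0$ makes the convention $\frac{\eta_{-1}}{\eta_0}(\cdot) = 0$ consistent and $\tx_0 = x_0$, so the recurrence holds from $t = 0$ onward.

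The argument is purely algebraic, so there is no analytic obstacle; the only place requiring care is the bookkeeping — ensuring that the three groups of $\frac{\eta_{t-1}}{\eta_t}$-scaled error vectors (arising from $\tp_t$, from $\{p_{t,i}\}$, and from the definition of $\tx_t$) are assembled correctly so that they telescope into the single term $\eta_{t-1} S_t$, and checking the $t=0$ endpoint where several of these quantities vanish by the stated initialization. Note that the momentum enters only through replacing $g_{t,i}$ by $\mu m_{t,i} + g_{t,i}$ inside $p_{t,i}$, so the cancellation structure is identical to that of Lemma~\ref{lemma:recurrence_ec_iterate}.
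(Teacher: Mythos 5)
Your proposal is correct and follows essentially the same route as the paper: the paper proves a single algebraic lemma for an arbitrary sequence $z_{t,i}$ with $p_{t,i}=z_{t,i}+\frac{\eta_{t-1}}{\eta_t}e_{t,i}$, expanding $\tx_{t+1}$ and substituting the updates for $x_{t+1}$, $\te_{t+1}$, $\tp_t$, and $e_{t+1,i}$, then sets $z_{t,i}=\mu m_{t,i}+g_{t,i}$. Your bookkeeping via $S_t$ and the explicit $t=0$ check is just a reorganization of the same cancellations and is fine.
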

As in Section~\ref{sec:dist-ef-sgd},
it can be shown that 
$\|\te_t + \frac{1}{M} \sum_{i=1}^M e_{t,i}\|_2$ is bounded and 
$\nabla F(\tx_t) \approx \nabla F(x_t)$. 
The following Theorem shows 
the convergence rate of the proposed 
dist-EF-blockSGDM.

 \begin{theorem} \label{theorem:general_distefsgdm_convergence}
Suppose that Assumptions~\ref{assumption:ef_smooth}-\ref{assumption:ef_bounded_grad} hold. 
Let $\eta_t = \eta$ for some $\eta > 0$. 
For any $\eta \leq \frac{(1 - \mu)^2}{2L}$, and the $\{x_t\}$ sequence generated from Algorithm~\ref{alg:dist-bef-sgdm}, we have
\begin{eqnarray}
\CE\left[\left\|\nabla F(x_o)\right\|^2_2\right]
& \leq &  \frac{4(1 - \mu)}{\eta T}[F(x_0) - F_*] 
+ \frac{2L\eta\sigma^2}{(1 - \mu)M}\left[1 + \frac{2L\eta\mu^4}{(1 - \mu)^3}\right] \label{bound:general_distefsgdm_convergence} \\
&& + \frac{32L^2\eta^2(1-\delta)G^2}{\delta^2(1-\mu)^2}\left[1 + \frac{16}{\delta^2}\right].  \nonumber
\end{eqnarray}
\end{theorem}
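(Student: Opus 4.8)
The plan is to reduce the analysis to the SGD-type argument behind Theorem~\ref{theorem:general_distefsgd_convergence} by introducing an auxiliary iterate that absorbs Nesterov's momentum. Write $\bar g_t=\frac{1}{M}\sum_{i=1}^M g_{t,i}$ and $\bar m_t=\frac{1}{M}\sum_{i=1}^M m_{t,i}$, so that $\bar m_t=\mu\bar m_{t-1}+\bar g_t$ with $\bar m_{-1}=0$, and, by Lemma~\ref{lemma:recurrence_mec_iterate} with the constant stepsize $\eta$, $\tx_{t+1}=\tx_t-\eta(\mu\bar m_t+\bar g_t)$. I would define
\[
  y_t \;=\; \tx_t-\frac{\eta\mu^2}{1-\mu}\,\bar m_{t-1},
\]
and verify by a one-line computation using $\bar m_t=\mu\bar m_{t-1}+\bar g_t$ that $y_{t+1}=y_t-\frac{\eta}{1-\mu}\bar g_t$, with $y_0=x_0$ (since $\tx_0=x_0$ and $\bar m_{-1}=0$). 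Hence $\{y_t\}$ obeys a plain stochastic-gradient recursion whose increment $\bar g_t$ satisfies $\CE_t[\bar g_t]=\nabla F(x_t)$ and, by Assumption~\ref{assumption:ef_bounded_var} and independence of the workers' samples, $\CE_t[\|\bar g_t-\nabla F(x_t)\|_2^2]\le\sigma^2/M$.

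Next I would apply the $L$-smoothness descent inequality (Assumption~\ref{assumption:ef_smooth}) to $F(y_{t+1})$, take $\CE_t[\cdot]$, and split the cross term with the identity $\langle a,b\rangle=\tfrac12\|a\|_2^2+\tfrac12\|b\|_2^2-\tfrac12\|a-b\|_2^2$ applied to $\langle\nabla F(y_t),\nabla F(x_t)\rangle$, bounding $\|\nabla F(y_t)-\nabla F(x_t)\|_2^2\le L^2\|y_t-x_t\|_2^2$. The coefficient of $\|\nabla F(x_t)\|_2^2$ is then $-\frac{\eta}{2(1-\mu)}\big(1-\frac{L\eta}{1-\mu}\big)$; the hypothesis $\eta\le(1-\mu)^2/(2L)$ makes $L\eta/(1-\mu)\le\tfrac12$, so it is at most $-\frac{\eta}{4(1-\mu)}$, while the $-\frac{\eta}{2(1-\mu)}\|\nabla F(y_t)\|_2^2$ term is retained. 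Summing over $t=0,\dots,T-1$, telescoping, and using $F(y_0)=F(x_0)$ and $\CE[F(y_T)]\ge F_*$ gives
\begin{align*}
&\frac{\eta}{4(1-\mu)}\sum_{t=0}^{T-1}\CE\|\nabla F(x_t)\|_2^2 + \frac{\eta}{2(1-\mu)}\sum_{t=0}^{T-1}\CE\|\nabla F(y_t)\|_2^2 \\
&\qquad\le\; F(x_0)-F_* + \frac{\eta L^2}{2(1-\mu)}\sum_{t=0}^{T-1}\CE\|y_t-x_t\|_2^2 + \frac{TL\eta^2\sigma^2}{2(1-\mu)^2M}.
\end{align*}

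It remains to control $\CE\|y_t-x_t\|_2^2\le 2\,\CE\|\tx_t-x_t\|_2^2+\frac{2\eta^2\mu^4}{(1-\mu)^2}\CE\|\bar m_{t-1}\|_2^2$. For the first part, $\tx_t-x_t=-\eta(\te_t+\frac{1}{M}\sum_i e_{t,i})$ (and $0$ when $t=0$), and the momentum analogue of Lemma~\ref{lemma:wm_error_bound} gives $\CE\|\te_t+\frac{1}{M}\sum_i e_{t,i}\|_2^2\le\frac{8(1-\delta)G^2}{\delta^2(1-\mu)^2}[1+\tfrac{16}{\delta^2}]$; this is proved exactly as Lemma~\ref{lemma:wm_error_bound}, once one notes that the quantity fed to the compressor is now $\mu m_{t,i}+g_{t,i}+\frac{\eta_{t-1}}{\eta_t}e_{t,i}$ and that $\CE\|\mu m_{t,i}+g_{t,i}\|_2^2\le G^2/(1-\mu)^2$, since $\mu m_{t,i}+g_{t,i}=\sum_{s=0}^t\mu^{t-s+1}g_{s,i}+g_{t,i}$ has nonnegative coefficients summing to at most $1/(1-\mu)$; this term produces the last term of \eqref{bound:general_distefsgdm_convergence}. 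For the second part, Young's inequality with weight $\mu$ gives the recursion $\CE\|\bar m_t\|_2^2\le\mu\,\CE\|\bar m_{t-1}\|_2^2+\frac{1}{1-\mu}\CE\|\nabla F(x_t)\|_2^2+\frac{\sigma^2}{M}$, which unrolls and sums to $\sum_t\CE\|\bar m_{t-1}\|_2^2\le\frac{1}{(1-\mu)^2}\sum_t\CE\|\nabla F(x_t)\|_2^2+\frac{T\sigma^2}{(1-\mu)M}$. Substituting this back, the $\nabla F(x_t)$ contribution is relocated to the left-hand side — via $\|\nabla F(x_t)\|_2^2\le 2\|\nabla F(y_t)\|_2^2+2L^2\|x_t-y_t\|_2^2$ and the two gradient sums already on the left, where a (tedious but routine) reshuffling using $\eta\le(1-\mu)^2/(2L)$ keeps the net coefficient of $\sum_t\CE\|\nabla F(x_t)\|_2^2$ positive and of order $\eta/(1-\mu)$ — while the surviving noise term contributes the $\mu^4$ summand inside the bracket of the second term of \eqref{bound:general_distefsgdm_convergence}. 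Dividing by $\tfrac{\eta T}{4(1-\mu)}$ and letting $o$ be uniform on $\{0,\dots,T-1\}$ yields the claim.

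I expect the main obstacle to be this last paragraph: re-deriving the error-feedback estimate for the momentum iterate (whose compressor input is inflated by $1/(1-\mu)$ relative to the non-momentum case), and — more delicately — propagating the momentum accumulation $\bar m_{t-1}$ so that its deterministic part folds back into the left-hand side with room to spare under $\eta\le(1-\mu)^2/(2L)$ while only the variance $\sigma^2/M$ enters the final bound. The remaining steps mirror the proof of Theorem~\ref{theorem:general_distefsgd_convergence}.
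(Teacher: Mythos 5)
Your overall route is the paper's: the same virtual iterate $y_t=\tx_t-\frac{\eta\mu^2}{1-\mu}\bar m_{t-1}$ with $y_{t+1}=y_t-\frac{\eta}{1-\mu}\bar g_t$, smoothness applied at the virtual iterate, the momentum analogue of the error-feedback lemma (the paper's appendix Lemma bounding $\CE\|\te_t+\frac1M\sum_ie_{t,i}\|_2^2$ by $\frac{8(1-\delta)G^2}{\delta^2(1-\mu)^2}[1+\frac{16}{\delta^2}]$, via $\CE\|\mu m_{t,i}+g_{t,i}\|_2^2\le G^2/(1-\mu)^2$), and a split of $\CE\|\bar m_{t-1}\|_2^2$ into a $\sum_k\CE\|\nabla F(x_k)\|_2^2$ part (absorbed on the left) plus a $\sigma^2/M$ part; your one-step recursion for $\CE\|\bar m_t\|_2^2$ is a legitimate substitute for the paper's Jensen-plus-double-sum argument and even gives a slightly better variance constant. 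Your polarization split of the cross term corresponds to the paper's Young step with $\rho=1$.

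The genuine problem is the final absorption, which you defer as ``tedious but routine'': as you have organized it, it does not yield the stated constants. Having already rounded the descent coefficient down to $\frac{\eta}{4(1-\mu)}$ (using only $\frac{L\eta}{1-\mu}\le\frac12$), the momentum-gradient contribution $\frac{L^2\eta^3\mu^4}{(1-\mu)^5}\sum_t\CE\|\nabla F(x_t)\|_2^2\le\frac{\eta\mu^4}{4(1-\mu)}\sum_t\CE\|\nabla F(x_t)\|_2^2$ eats essentially the whole coefficient when $\mu$ is close to $1$: what remains is $\frac{\eta(1-\mu^4)}{4(1-\mu)}=\frac{\eta(1+\mu)(1+\mu^2)}{4}$, which is of order $\eta$, not $\eta/(1-\mu)$, so dividing through inflates every term of \eqref{bound:general_distefsgdm_convergence} by a factor $\frac{1}{1-\mu^4}$ and the theorem as stated is not recovered. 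Your suggested alternative relocation via $\|\nabla F(x_t)\|_2^2\le 2\|\nabla F(y_t)\|_2^2+2L^2\|x_t-y_t\|_2^2$ does not help: it reintroduces $\|x_t-y_t\|_2^2$, hence $\|\bar m_{t-1}\|_2^2$, hence the gradient sum again, i.e.\ it is circular. The fix is simply not to discard the slack: under $\eta\le\frac{(1-\mu)^2}{2L}$ you have $\frac{L\eta}{1-\mu}\le\frac{1-\mu}{2}$, so the descent coefficient is at least $\frac{\eta(1+\mu)}{4(1-\mu)}$, and after subtracting $\frac{\eta\mu^4}{4(1-\mu)}$ you keep $\frac{\eta(1+\mu-\mu^4)}{4(1-\mu)}\ge\frac{\eta}{4(1-\mu)}$ because $\mu\ge\mu^4$ on $[0,1)$; this is exactly the inequality the paper verifies (its condition $\rho\ge(2-\rho)\mu^3$ with $\rho=1$). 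With that one repair, and dropping the circular reshuffle, your argument goes through and coincides with the paper's proof.
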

Compared to Theorem~\ref{theorem:general_distefsgd_convergence}, using a larger momentum parameter
$\mu$ makes the first term (which depends on the initial condition) smaller but 
a worse variance 
term (second term) and error term due to gradient compression
(last term).
Similar to Theorem~\ref{theorem:general_distefsgd_convergence}, a larger $\eta$ makes the third term larger. 
The following Corollary shows that the proposed dist-EF-blockSGDM achieves a
convergence rate of $\mathcal{O}(((1 - \mu)[F(x_0) - F_*] + \sigma^2/(1-\mu))/\sqrt{MT})$. 

\begin{corollary} \label{corollary:convergence-dist-ef-sgdm}
Let $\eta = \frac{\gamma}{\sqrt{T}/\sqrt{M} + (1 - \delta)^{1/3}\left(1/\delta^2 + 16/\delta^4\right)^{1/3}T^{1/3}}$ for some $\gamma > 0$. 
For any $T \geq \frac{4\gamma^2L^2M}{(1 - \mu)^4}$,
$\CE\left[\left\|\nabla F(x_o)\right\|^2_2\right]
\leq   \left[\frac{2(1 - \mu)}{\gamma}[F(x_0) - F_*]  + \frac{L\gamma\sigma^2}{1-\mu}\right]\frac{2}{\sqrt{MT}} 
+ \frac{4L^2\gamma^2\mu^4\sigma^2}{(1 - \mu)^4T} 
+ \frac{4(1 - \delta)^{1/3}\left[\frac{(1 - \mu)}{\gamma}[F(x_0) - F_*] + 8L^2\gamma^2G^2/(1-\mu)^2\right]}{\delta^{2/3}T^{2/3}}\left[1 + \frac{16}{\delta^2}\right]^{1/3}$.
\end{corollary}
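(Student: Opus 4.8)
The statement is a direct specialization of Theorem~\ref{theorem:general_distefsgdm_convergence} to the prescribed constant stepsize, so the plan is to substitute $\eta$ into the three terms on the right-hand side of \eqref{bound:general_distefsgdm_convergence} and regroup by powers of $T$. First I would check admissibility: Theorem~\ref{theorem:general_distefsgdm_convergence} requires $\eta \le (1-\mu)^2/(2L)$. Since the denominator of $\eta$ is a sum of two nonnegative quantities, $\eta \le \gamma\sqrt{M}/\sqrt{T}$, and the hypothesis $T \ge 4\gamma^2 L^2 M/(1-\mu)^4$ is exactly equivalent to $\gamma\sqrt{M}/\sqrt{T} \le (1-\mu)^2/(2L)$; hence $\eta$ is admissible and the theorem applies.

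The key device is that, because the denominator of $\eta$ splits, $\eta$ is bounded by $\gamma$ over either summand alone:
\begin{equation*}
\eta \le \frac{\gamma\sqrt{M}}{\sqrt{T}}, \qquad \eta \le \frac{\gamma}{(1-\delta)^{1/3}\left(1/\delta^2 + 16/\delta^4\right)^{1/3}T^{1/3}}.
\end{equation*}
I would apply the first bound to the variance term: its leading part $\tfrac{2L\eta\sigma^2}{(1-\mu)M}$ becomes $\tfrac{2L\gamma\sigma^2}{(1-\mu)\sqrt{MT}}$, and the correction $\tfrac{2L\eta\sigma^2}{(1-\mu)M}\cdot\tfrac{2L\eta\mu^4}{(1-\mu)^3}$, using $\eta^2 \le \gamma^2 M/T$, becomes $\tfrac{4L^2\gamma^2\mu^4\sigma^2}{(1-\mu)^4 T}$. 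I would apply the square of the second bound to the compression term $\tfrac{32L^2\eta^2(1-\delta)G^2}{\delta^2(1-\mu)^2}\bigl[1+16/\delta^2\bigr]$; after cancelling $(1-\delta)^{2/3}$ against $(1-\delta)$ and using the identity $\tfrac{1}{\delta^2}\bigl(1+\tfrac{16}{\delta^2}\bigr) = \tfrac{1}{\delta^2}+\tfrac{16}{\delta^4}$, the $\delta$-dependent factors collapse to $(1/\delta^2+16/\delta^4)^{1/3} = \delta^{-2/3}(1+16/\delta^2)^{1/3}$, leaving a term of order $(1-\delta)^{1/3}/(\delta^{2/3}T^{2/3})$. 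For the initial-condition term $\tfrac{4(1-\mu)}{\eta T}[F(x_0)-F_*]$ I would instead substitute $1/\eta$ exactly, which expands into one piece scaling as $1/\sqrt{MT}$ and one scaling as $(1-\delta)^{1/3}(1/\delta^2+16/\delta^4)^{1/3}/T^{2/3}$.

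Finally I would collect like powers of $T$: the $1/\sqrt{MT}$ contributions from the initial-condition and variance terms combine into $\bigl[\tfrac{2(1-\mu)}{\gamma}[F(x_0)-F_*] + \tfrac{L\gamma\sigma^2}{1-\mu}\bigr]\tfrac{2}{\sqrt{MT}}$; the $\mu^4$-correction is the isolated $1/T$ term; and the two $T^{-2/3}$ contributions, after factoring out the common $(1-\delta)^{1/3}\delta^{-2/3}(1+16/\delta^2)^{1/3}$, combine into the stated last term with bracket $\tfrac{4(1-\mu)}{\gamma}[F(x_0)-F_*] + 32L^2\gamma^2 G^2/(1-\mu)^2$. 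This is essentially bookkeeping; the only steps that need a little care are the admissibility check on $\eta$ and matching the exponents of $\delta$ in the compression term, and I do not anticipate any genuine obstacle since the analytic content already resides in Theorem~\ref{theorem:general_distefsgdm_convergence}.
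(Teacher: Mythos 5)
Your proposal is correct and follows essentially the same route as the paper: verify $\eta \le (1-\mu)^2/(2L)$ from $T \ge 4\gamma^2L^2M/(1-\mu)^4$, substitute $1/\eta$ exactly in the initial-condition term, bound $\eta$ by $\gamma\sqrt{M}/\sqrt{T}$ in the variance term and by $\gamma/\bigl((1-\delta)^{1/3}(1/\delta^2+16/\delta^4)^{1/3}T^{1/3}\bigr)$ in the compression term, then regroup by powers of $T$. All the exponent bookkeeping (including the $\delta^{-2/3}(1+16/\delta^2)^{1/3}$ factorization and the combined brackets) matches the paper's computation.
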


 \section{Experiments}
 

 
\subsection{Multi-GPU Experiment on CIFAR-100}
\label{sec:distef-cifar100-expt}

In this experiment, we demonstrate that the proposed 
 dist-EF-blockSGDM 
 and dist-EF-blockSGD ($\mu = 0$ in Algorithm~\ref{alg:dist-bef-sgdm}), 
though using fewer bits for gradient transmission,
still has good convergence.
For faster experimentation,
we use a 
single node 
with multiple GPUs (an AWS P3.16 instance with 8 Nvidia V100 GPUs,
each GPU being a worker) 
instead of a distributed setting.

Experiment is performed on 
the CIFAR-100 dataset, with
50K training images and 10K test images.
We use a 20-layer ResNet \cite{he2016identity}.
Each parameter tensor/matrix/vector is treated as a block
in dist-EF-blockSGD(M). 
They are compared with
 (i) distributed synchronous SGD (with full-precision gradient);
 (ii) distributed synchronous SGD 
 (full-precision gradient)
 with momentum (SGDM); 
 (iii) signSGD with majority vote \cite{pmlr-v80-bernstein18a}; and
 (iv) signum with majority vote \cite{bernstein2019signsgd}.
All the algorithms are implemented in MXNet. 
We vary the mini-batch size per worker in $\{8, 16, 32\}$. 
Results are averaged over 5 repetitions. 
More details of the experiments are shown
in 
Appendix~\ref{app:cifar}. 

Figure~\ref{fig:ef_cifar100} shows
convergence of the testing accuracy w.r.t.
the number of epochs. 
As can be seen, 
dist-EF-blockSGD converges as fast as SGD and has slightly better accuracy, while 
signSGD performs poorly. In particular, dist-EF-blockSGD is robust to the mini-batch size, while the performance of signSGD 
degrades with smaller mini-batch size (which agrees with the results in
\cite{pmlr-v80-bernstein18a}).
Momentum makes SGD and dist-EF-blockSGD faster with mini-batch size of $16$ or $32$ per worker, 
particularly before epoch $100$.
At epoch 100, the learning rate is reduced, and the difference is less obvious. 
This is because a larger 
mini-batch 
means smaller 
variance
$\sigma^2$, 
so the initial optimality gap 
$F(x_0) - F_*$ in (\ref{bound:general_distefsgdm_convergence}) 
is more dominant.
Use of momentum 
($\mu > 0$) is then beneficial.
On the other hand, momentum significantly
improves signSGD. 
However, signum is still much worse than dist-EF-blockSGDM. 

 \begin{figure}[t]
\begin{center}
\subfigure{\includegraphics[width=0.32\columnwidth]{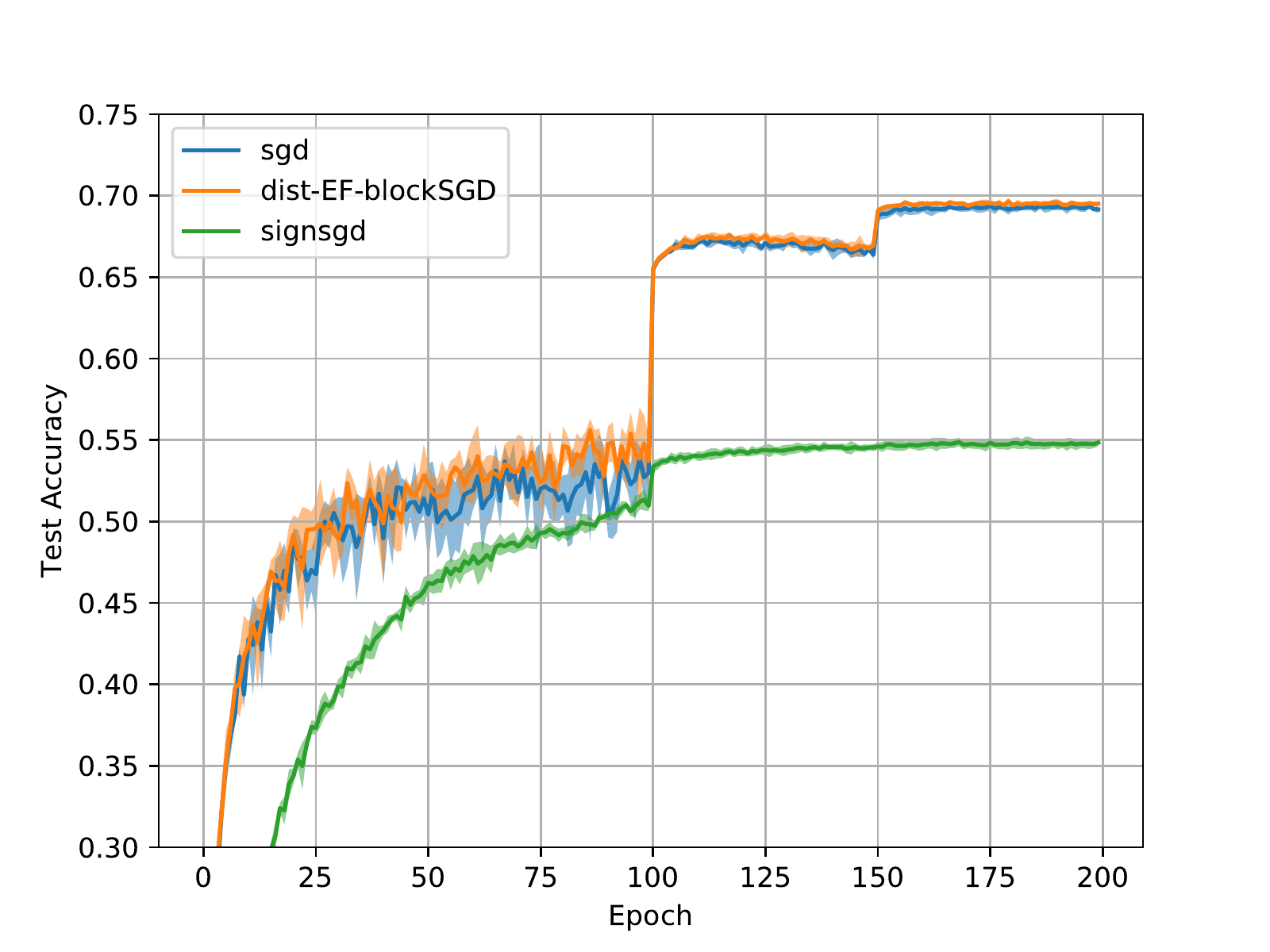}}
\subfigure{\includegraphics[width=0.32\columnwidth]{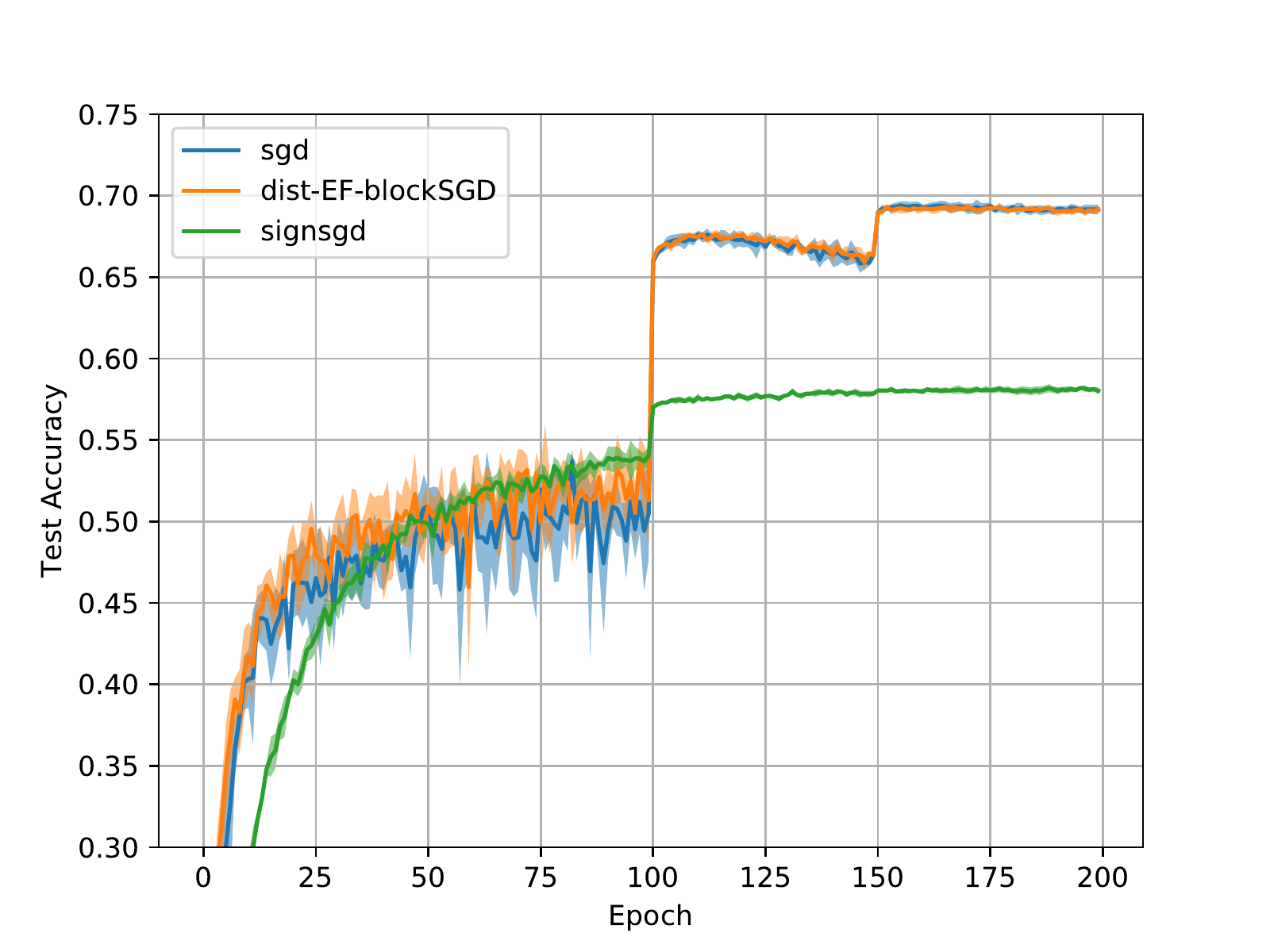}}
\subfigure{\includegraphics[width=0.32\columnwidth]{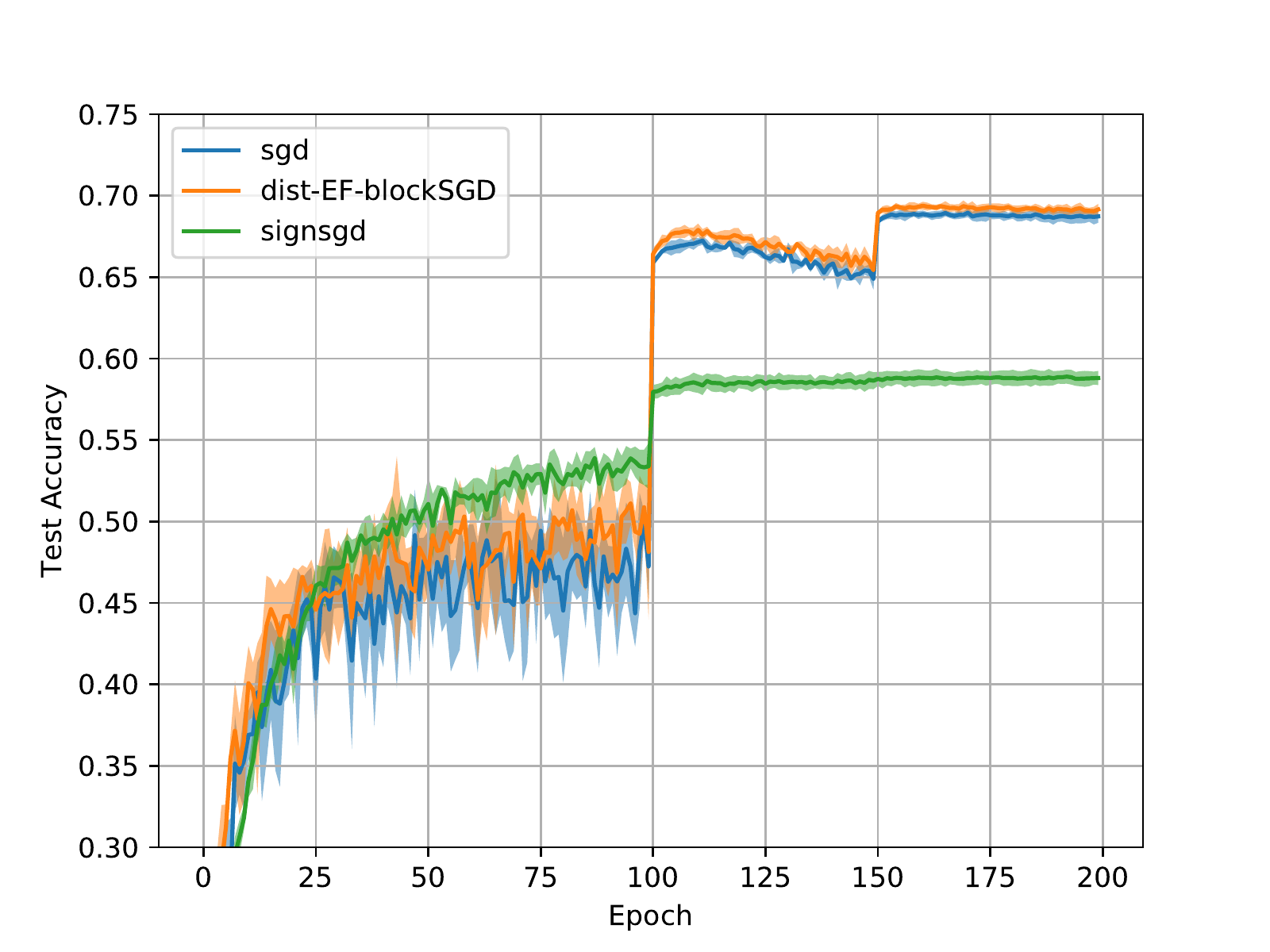}} \\
\setcounter{subfigure}{0}
\vskip -.18in
\subfigure[Mini-batch size: $8$ per worker.]{\includegraphics[width=0.32\columnwidth]{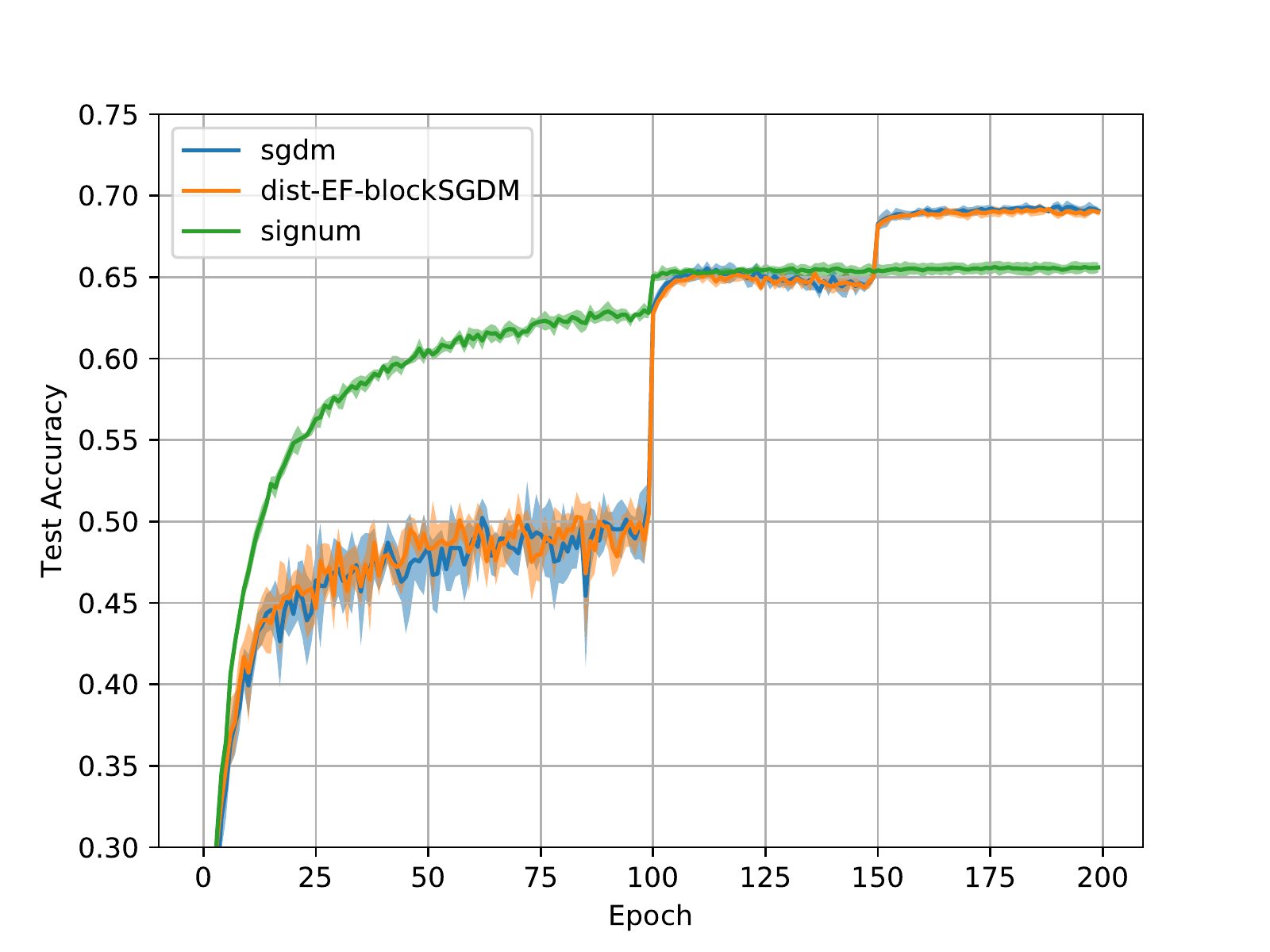}}
\subfigure[Mini-batch size: $16$ per worker.]{\includegraphics[width=0.32\columnwidth]{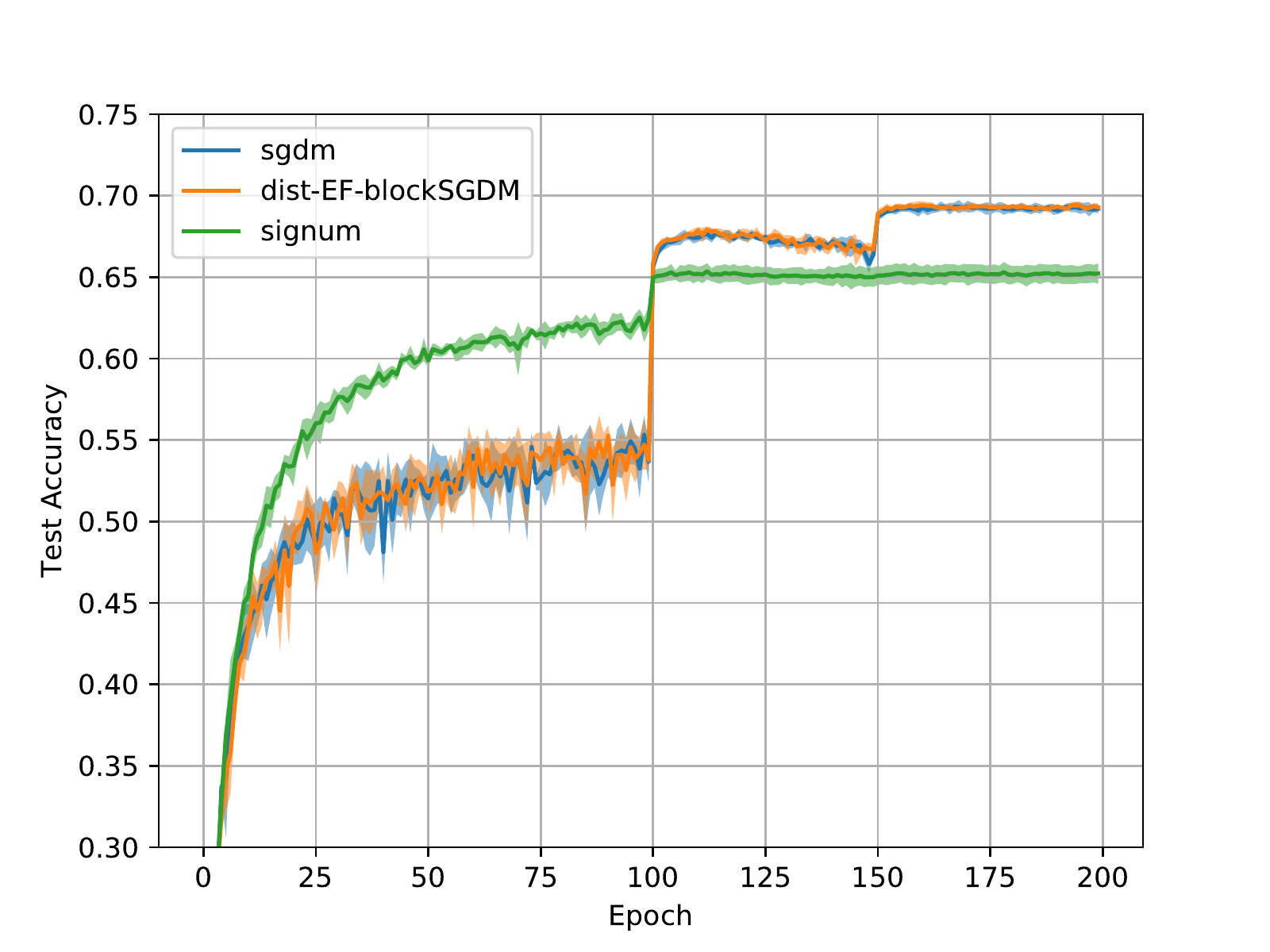}}
\subfigure[Mini-batch size: $32$ per worker.]{\includegraphics[width=0.32\columnwidth]{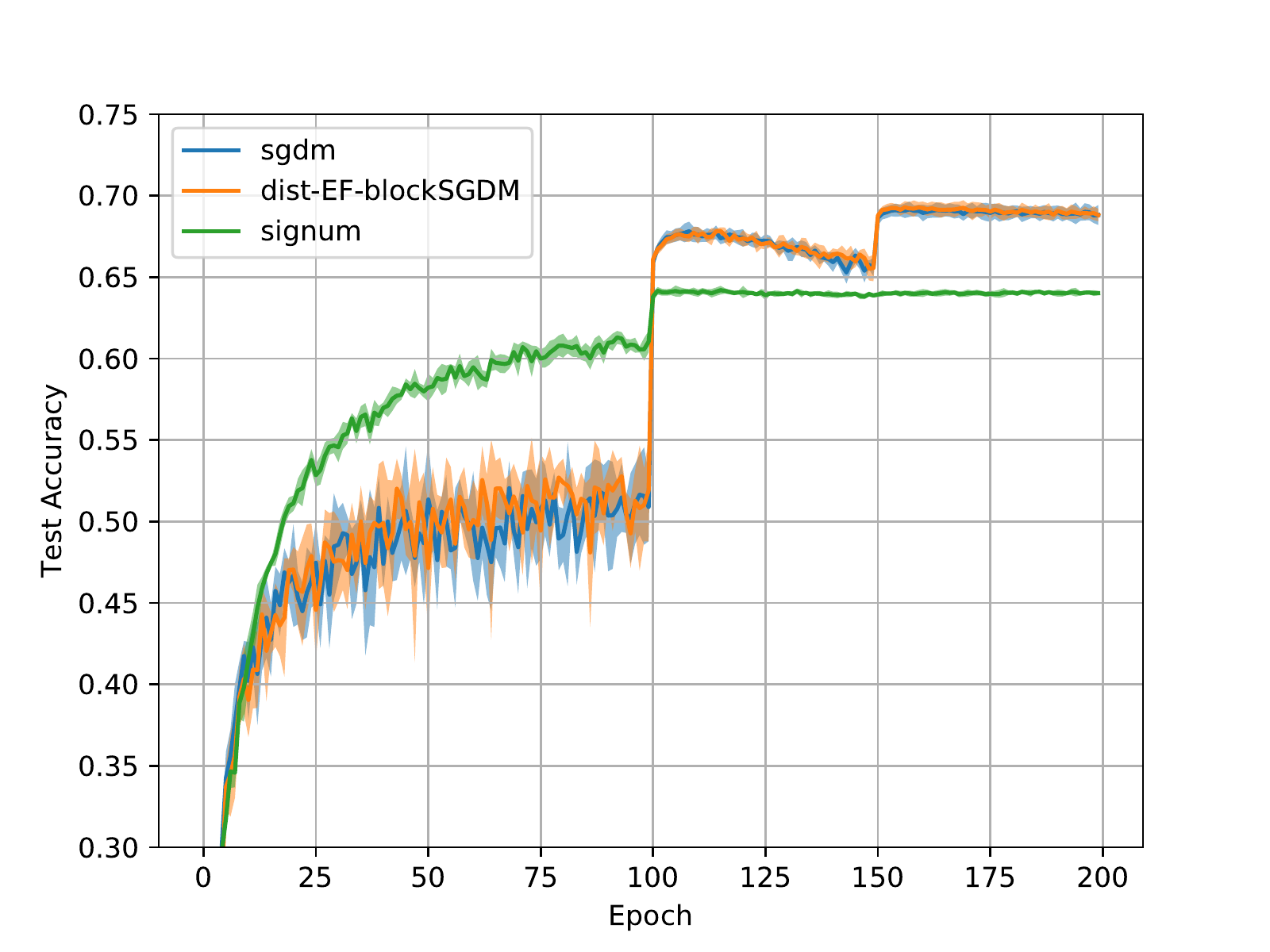}}
\vskip -.1in
\caption{Testing accuracy on CIFAR-100. Top: No momentum; Bottom: With
momentum. The solid curve is the mean accuracy
over five repetitions. The shaded region
spans one standard deviation.}
\label{fig:ef_cifar100}
\end{center}
\vskip -.3in
\end{figure}

\subsection{Distributed Training on ImageNet}
\label{sec:distef-imagenet-expt}

In this section, we perform distributed optimization on ImageNet \cite{russakovsky2015imagenet} 
using
a 50-layer ResNet.
Each worker is an AWS P3.2 instance with 1 GPU,
and the parameter server is housed in one node. 
We use the publicly available
code\footnote{\url{https://github.com/PermiJW/signSGD-with-Majority-Vote}} in
\cite{bernstein2019signsgd}, and the default communication library
Gloo communication
library in PyTorch. 
As in 
\cite{bernstein2019signsgd}, 
we use its allreduce implementation 
for SGDM, which is faster.

As momentum accelerates the training for large mini-batch size in Section~\ref{sec:distef-cifar100-expt}, 
we only compare the momentum variants here.
 The proposed dist-EF-blockSGDM is 
 compared with
(i) distributed synchronous SGD with momentum (SGDM); and
 (ii) signum with majority vote \cite{bernstein2019signsgd}.   
The number of workers $M$ is varied in $\{7, 15\}$.  
With an odd number of workers, a majority vote will not
produce zero, and so signum does not lose accuracy by using 1-bit compression.
More details of the setup are in 
Appendix~\ref{app:imagenet}.

\begin{figure}[t]
\begin{center}
\subfigure[Test accuracy w.r.t. epoch.]{\includegraphics[width=0.32\columnwidth]{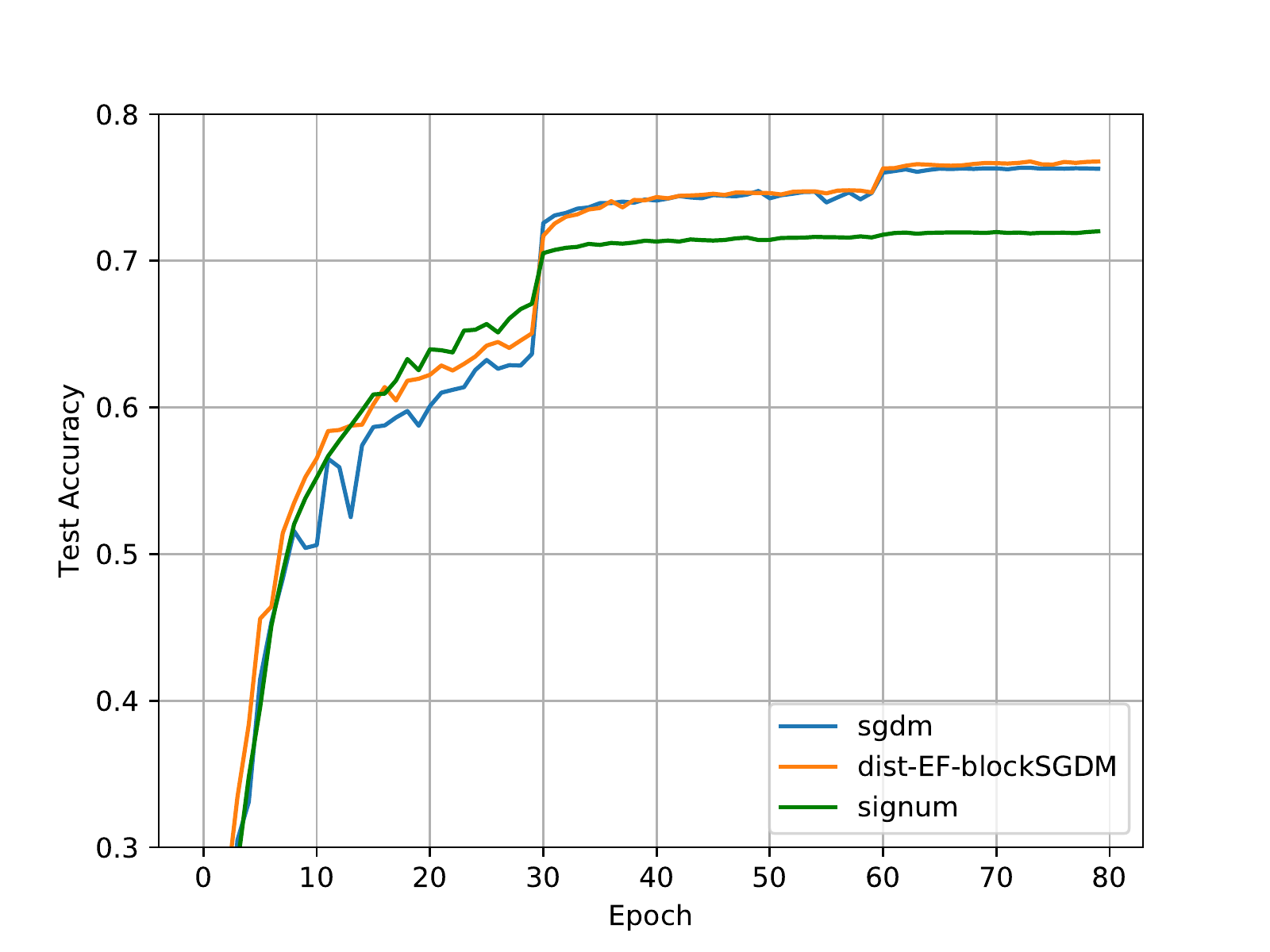}}
\subfigure[Test accuracy w.r.t. time. \label{fig:t1}]{\includegraphics[width=0.32\columnwidth]{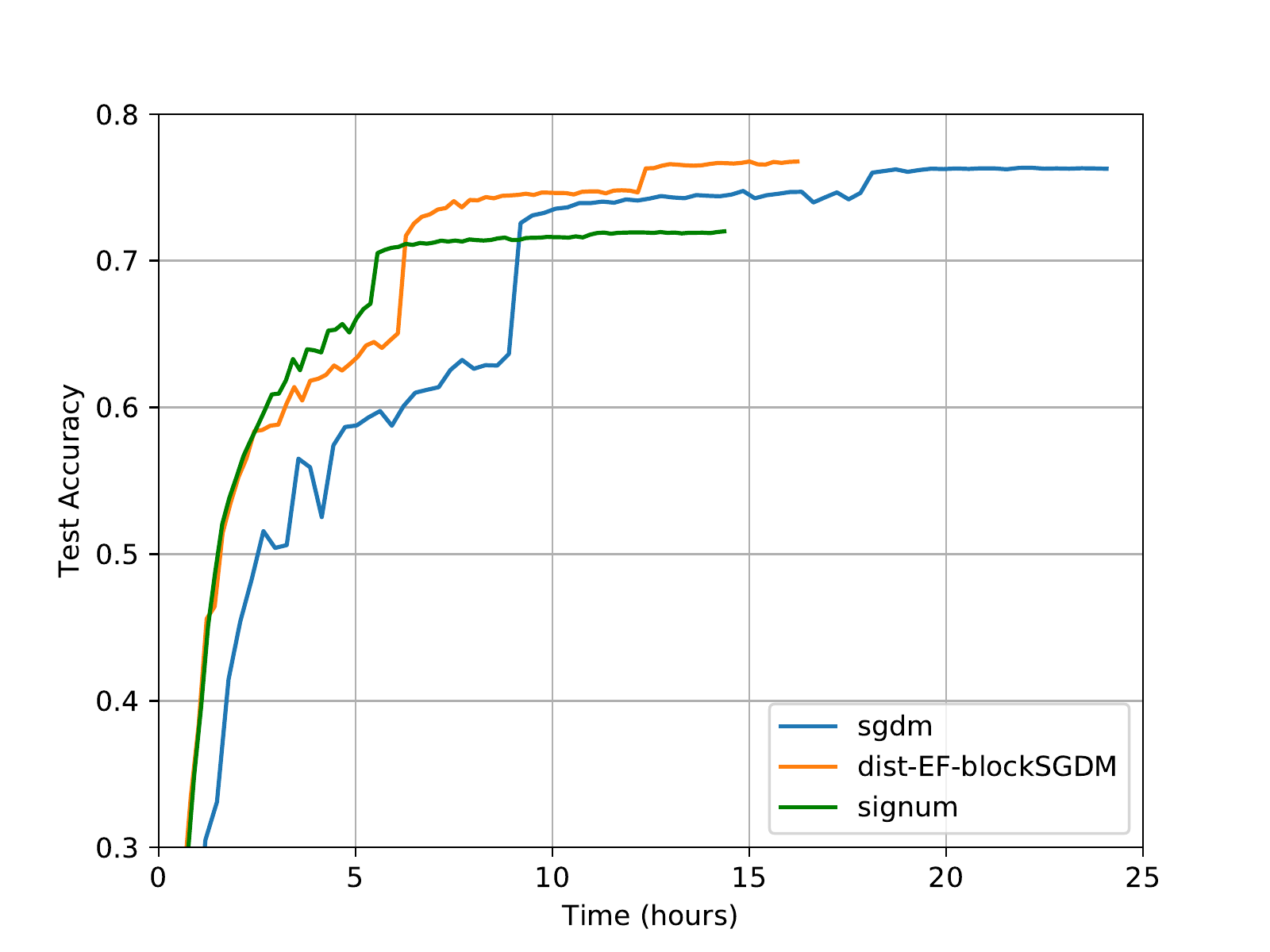}} 
\subfigure[Workload breakdown. \label{fig:load1}]{\includegraphics[width=0.32\columnwidth]{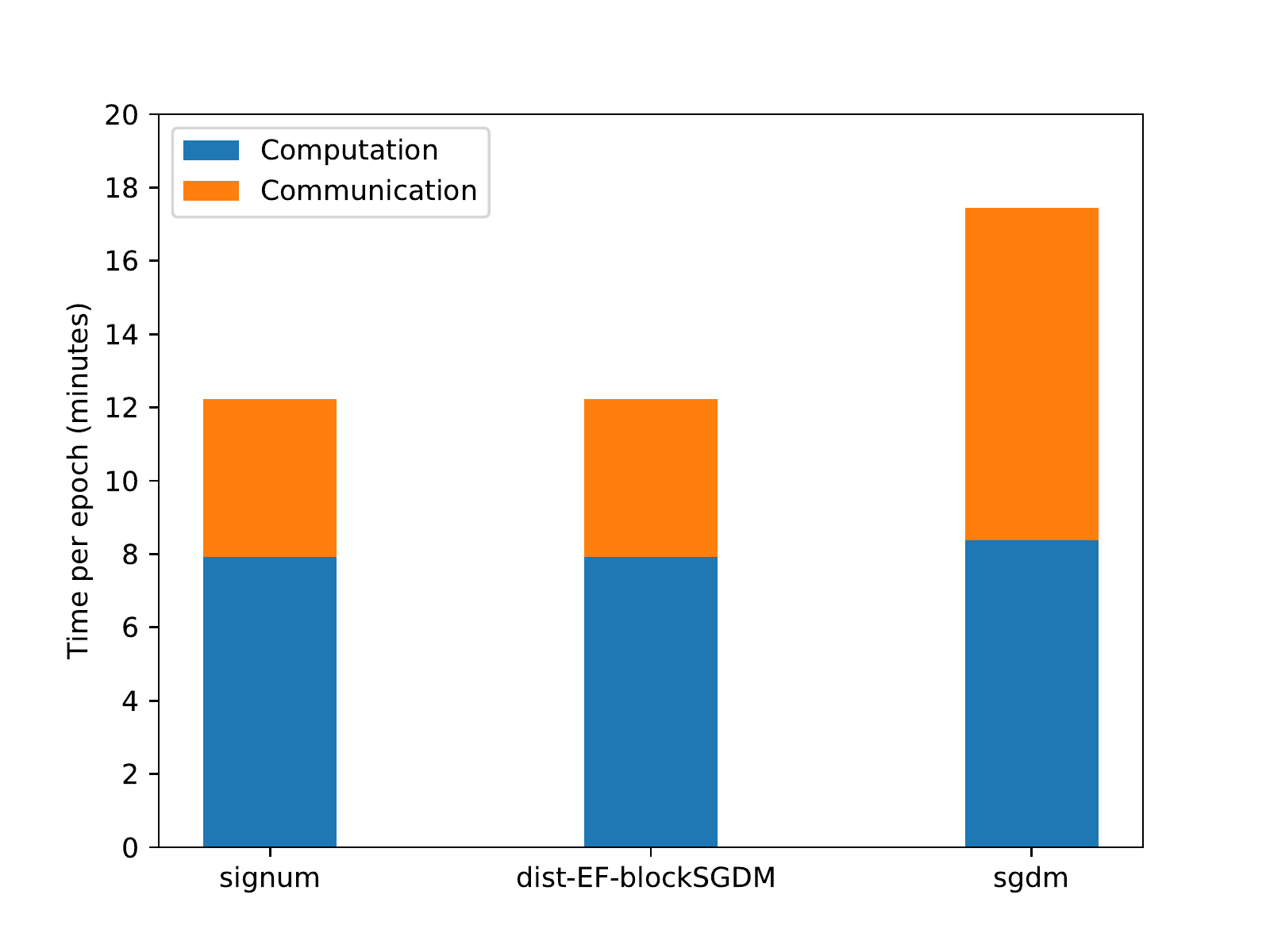}}
\vskip -.15in
\subfigure[Test accuracy w.r.t. epoch.]{\includegraphics[width=0.32\columnwidth]{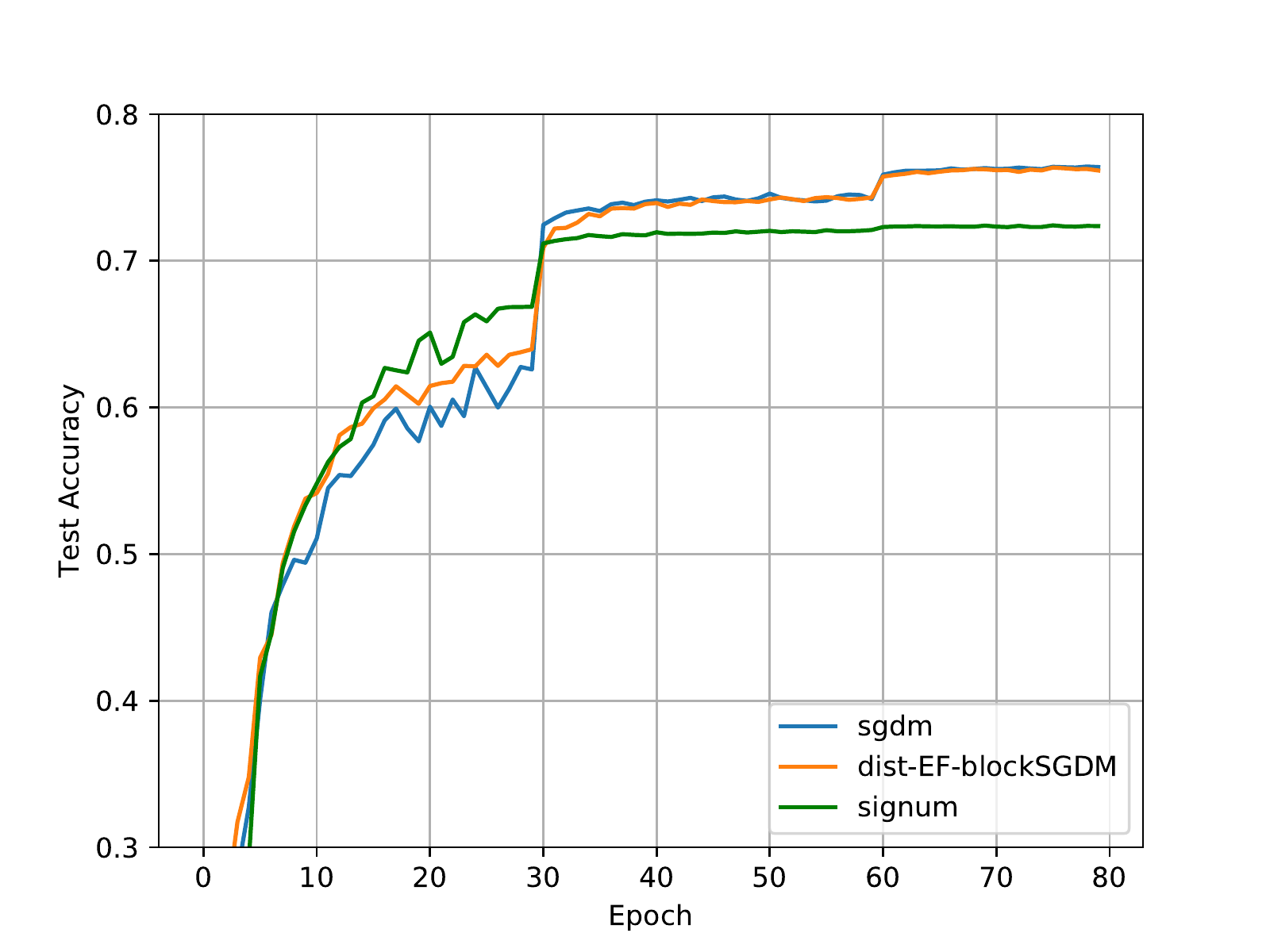}} 
\subfigure[Test accuracy w.r.t. time. \label{fig:t2}]{\includegraphics[width=0.32\columnwidth]{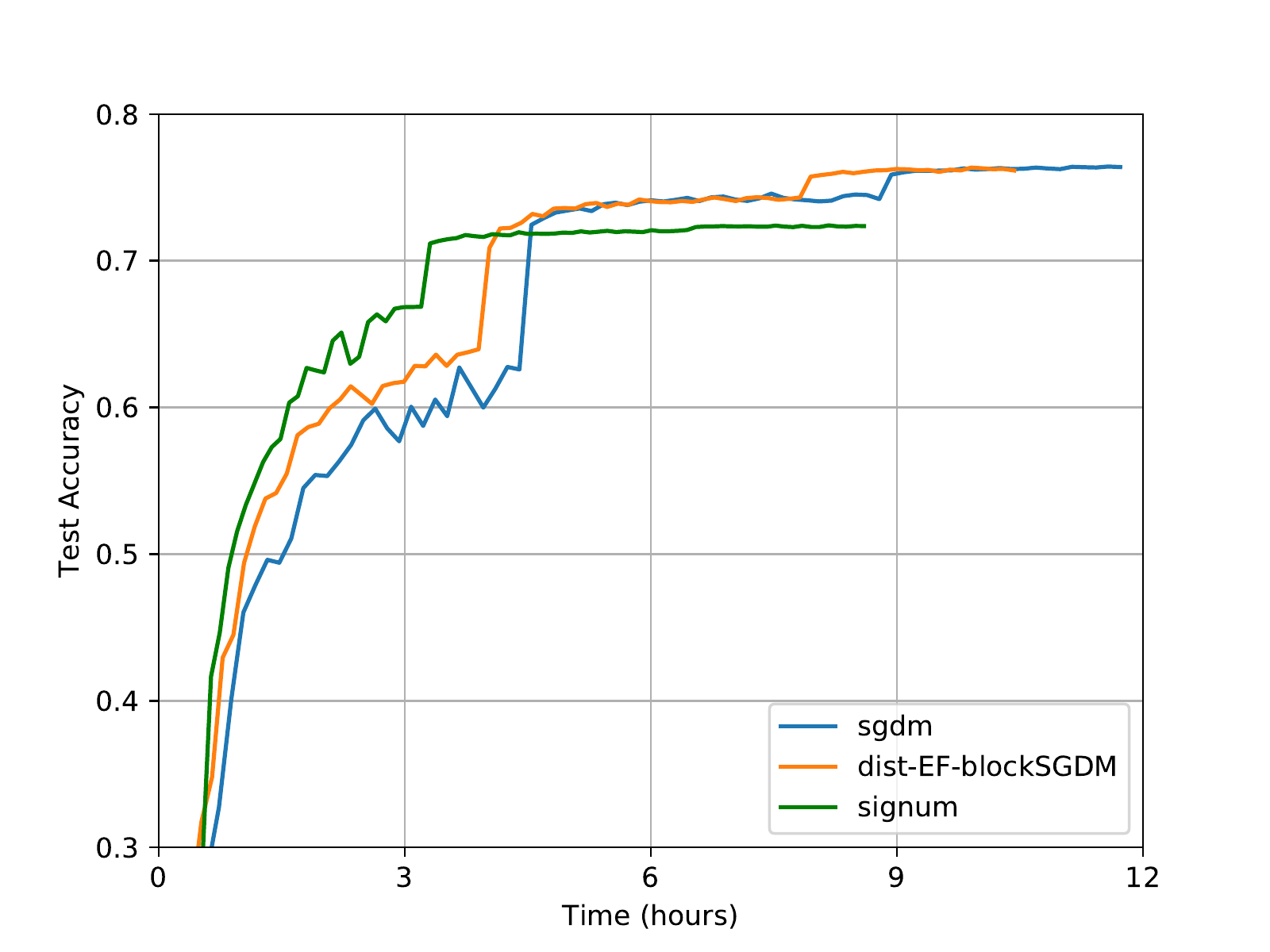}} 
\subfigure[Workload breakdown. \label{fig:load2}]{\includegraphics[width=0.32\columnwidth]{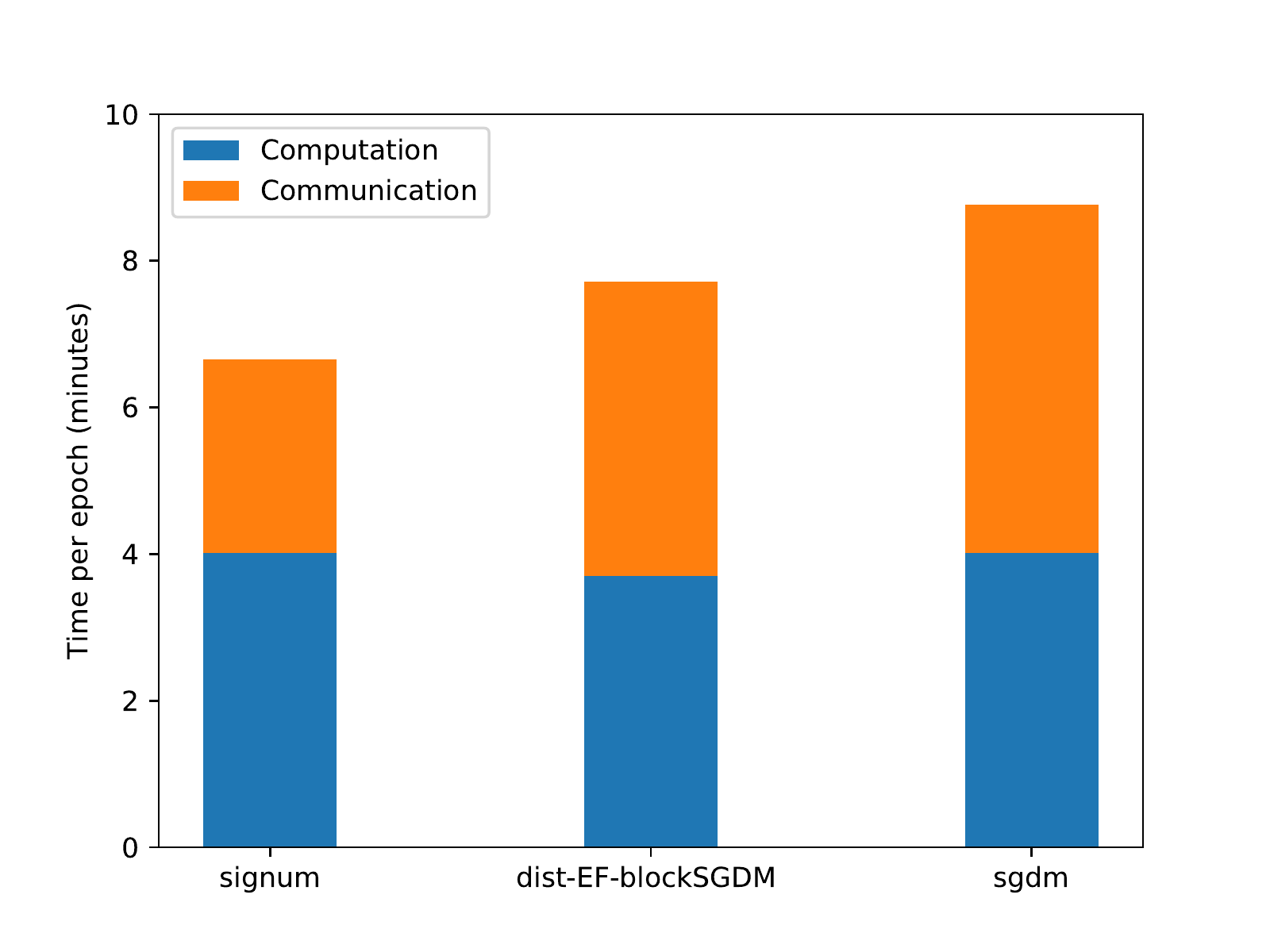}}
\vskip -.1in
\caption{Distributed training results on the ImageNet dataset. 
Top:
7 workers;
Bottom:
15 workers.}
\label{fig:dist_imagenet}
\end{center}
\vskip -.3in
\end{figure}

Figure~\ref{fig:dist_imagenet} shows 
the testing accuracy w.r.t. the number of epochs and wall clock time.
As in Section~\ref{sec:distef-cifar100-expt},
SGDM and dist-EF-blockSGDM
have comparable accuracies,
while signum is inferior. 
When 7 workers are used, dist-EF-blockSGDM 
has higher accuracy than SGDM 
(76.77\% vs
76.27\%). 
dist-EF-blockSGDM 
reaches SGDM's highest accuracy 
in around 13 hours, 
while SGDM takes 24 hours (Figure~\ref{fig:t1}), 
leading to a $46\%$ speedup. 
With 15 machines, the improvement is smaller (Figure~\ref{fig:t2}). This is because the burden on the parameter server is heavier. We expect 
comparable speedup with the 7-worker setting can be obtained by using more
parameter servers. In both cases, signum converges fast but the test
accuracies are about $4\%$ worse.

Figures~\ref{fig:load1}
and \ref{fig:load2}
show a breakdown 
of wall clock time 
into computation and communication time.\footnote{Following
\cite{bernstein2019signsgd}, communication time includes the extra computation time for
error feedback and compression.}
All methods have comparable computation costs,
but
signum and dist-EF-blockSGDM have lower communication costs than SGDM.
The communication costs 
for signum and dist-EF-blockSGDM are comparable 
for 7 workers, 
but for 15 workers
signum is lower.
We speculate that 
it is because 
the sign vectors and scaling factors are sent separately to the server in our
implementation, which causes more latency on the server with more workers. 
This may be alleviated if 
the two operations
are fused.

 \section{Conclusion}
 
In this paper, we 
proposed  a
distributed blockwise SGD algorithm with error feedback 
and momentum.
 By partitioning the gradients into blocks, we can transmit each block of gradient using 1-bit quantization with its average $\ell_1$-norm. 
 The proposed methods are communication-efficient and have the same convergence rates as 
 full-precision distributed SGD/SGDM for nonconvex objectives. 
Experimental results show that the proposed methods have
 fast convergence and achieve the same test accuracy as SGD/SGDM, while
 signSGD and signum only achieve much worse accuracies. 
 
\bibliography{distef}
\bibliographystyle{plain}


\newpage

\appendix

\section{Experimental Setup}

As we focus on synchronous distributed training, 
it is not necessary to compress
weight decay. 
In the experiment, for dist-EF-blockSGD, the weight decay is not added to $g_{t, i}$. Instead, we add it to $\tilde{\Delta}_t$. 
For dist-EF-blockSGDM, as momentum is additive, we maintain an extra momentum
$\tilde{m}_t$ for weight decay on each machine. Specifically, we perform the following update on each worker:
\begin{eqnarray*}
\tilde{m}_t &=& \mu\tilde{m}_{t-1} + \lambda x_{t},\\
x_{t+1} &=& x_t - \eta_t(\tilde{\Delta}_t + \mu\tilde{m}_t + \lambda x_{t}),
\end{eqnarray*}
where $\lambda$ is the weight decay parameter. 
In the experiment, the sign
is mapped to $\{-1, 1\}$ and takes 1 bit.  Note that the gradient sign has zero probability of being zero. 

\subsection{Setup: Multi-GPU Experiment on CIFAR-100} 
\label{app:cifar}

Each algorithm is run for 200 epochs. 
We only tune the initial stepsize, 
using
a validation set with 5K images that is carved out from the training set. 
For dist-EF-blockSGD (resp. dist-EF-blockSGDM), we use the stepsize tuned for SGD (resp. SGDM).  
The stepsize with the best validation set performance is used to run the algorithm on the full training set.  
The stepsize 
is divided by $10$ at the $100$-th and $150$-th epochs. 
The weight decay parameter is fixed to $0.0005$, and the momentum parameter $\mu$ is $0.9$. 
When mini-batch size is $16$ per worker, 
for both SGD and SGDM, the stepsize is tuned from $\{0.05, 0.1, 0.5, 1\}$, 
and for signSGD and signum, the stepsize is chosen from $\{0.0005, 0.001, 0.005, 0.01\}$. 
When we obtain the best stepsize  
$\eta_0$
tuned with mini-batch size $B=16$ per worker, 
for $B=8$,
the best stepsize
is selected from $\{\eta_0/2,\eta_0\}$; whereas
for $B=32$,
it is selected from $\{\eta_0,2\eta_0\}$.
The best stepsizes obtained are shown in Table~\ref{tbl:best_lr}


 \begin{table}[h]
 \begin{center}
 \caption{Best stepsizes obtained by grid search on a hold-out validation set. We reuse the obtained stepsizes tuned for SGD/SGDM for 
 dist-EF-blockSGD/dist-EF-blockSGDM. }
 \vskip .1in
 \begin{tabular}{c|c|c|c}
  \hline
 & \multicolumn{3}{|c}{mini-batch size per worker} \\
 \hline
 algorithm & $8$ & $16$ & $32$ \\
  \hline
 full-precision SGD & $0.25$ & $0.5$ & $1$ \\
 full-precision SGDM & $0.05$ & $0.05$ & $0.1$ \\
  dist-EF-blockSGD & $0.25$ & $0.5$ & $1$ \\
 dist-EF-blockSGDM & $0.05$ & $0.05$ & $0.1$ \\
  signSGD & $0.001$ & $0.001$ & $0.002$ \\
  signum & $0.0005$ & $0.0005$ & $0.0005$ \\
  \hline
 \end{tabular}
  \label{tbl:best_lr}
 \end{center}
 \end{table}

\subsection{Setup: Distributed Training on ImageNet}
\label{app:imagenet}

We use the default hyperparameters 
for SGDM and signum 
in the code base, which have been tuned for the
ImageNet experiment in \cite{bernstein2019signsgd}.
Specifically, the momentum parameter $\mu$ is $0.9$, and weight decay parameter is $0.0001$.
A mini-batch size of 128 per worker is employed. 

For SGDM, we use
$\eta = 0.1 M$ (used for SGDM on the ImageNet experiment in the code base). For signum, $\eta = 0.0001$ (used for signum on the ImageNet experiment in the code base) on 7 workers and $\eta = 0.0002$ on 15 workers. 
For dist-EF-blockSGDM, we also use $\mu = 0.9$ and a weight decay of $0.0001$.
Its stepsize $\eta$ is $0.1$ 
for 7 workers,\footnote{We observe that $\eta=0.1M$ is too large for dist-EF-blockSGDM, while SGDM with $\eta=0.1$ performs worse than SGDM with $\eta=0.1M$. } 
and 
$0.2$ 
for 15 workers.

\section{Proof of Lemmas~\ref{lemma:recurrence_ec_iterate} and \ref{lemma:recurrence_mec_iterate}}
\begin{lemma} 
Suppose that $p_{t, i} = z_{t, i} + \frac{\eta_{t-1}}{\eta_t}e_{t, i}$ for any sequence $z_{t, i}$. Consider the error-corrected iterate $\tx_t = x_t - \eta_{t-1}\left(\te_t + \frac{1}{M} \sum_{i=1}^M e_{t,i}\right)$, it satisfies the recurrence:
\begin{eqnarray*}
\tx_{t+1} = \tx_t - \eta_t\frac{1}{M} \sum_{i=1}^M z_{t,i}.
\end{eqnarray*}
\end{lemma}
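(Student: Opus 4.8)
The plan is to verify the recurrence by straightforward substitution of the update rules of Algorithm~\ref{alg:dist-ef-sgd} (the blockwise version, Algorithm~\ref{alg:dist-bef-sgd}, is a special case, and the momentum variant is covered by absorbing $\mu m_{t,i}+g_{t,i}$ into $z_{t,i}$), keeping careful track of the cancellations. The guiding observation is that the stepsize enters the definition of $\tx_t$ only through the factor $\eta_{t-1}$, and enters the error updates only through the ratio $\eta_{t-1}/\eta_t$; multiplying the latter by $\eta_t$ restores $\eta_{t-1}$, which is exactly what makes the error terms telescope cleanly. This is why the algorithm rescales the accumulated errors by $\eta_{t-1}/\eta_t$ and does not premultiply the gradient by the stepsize.

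Concretely, I would first write $\tx_{t+1} = x_{t+1} - \eta_t(\te_{t+1} + \frac1M\sum_{i=1}^M e_{t+1,i})$ and expand each piece: from the algorithm, $x_{t+1} = x_t - \eta_t\tDelta_t$; $\te_{t+1} = \tp_t - \tDelta_t = \frac1M\sum_{i=1}^M \Delta_{t,i} + \frac{\eta_{t-1}}{\eta_t}\te_t - \tDelta_t$; and, using the hypothesis $p_{t,i} = z_{t,i} + \frac{\eta_{t-1}}{\eta_t}e_{t,i}$, $e_{t+1,i} = p_{t,i} - \Delta_{t,i} = z_{t,i} + \frac{\eta_{t-1}}{\eta_t}e_{t,i} - \Delta_{t,i}$. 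Averaging the last identity over $i$ and adding $\te_{t+1}$, the $\frac1M\sum_{i=1}^M\Delta_{t,i}$ terms cancel, leaving $\te_{t+1} + \frac1M\sum_{i=1}^M e_{t+1,i} = \frac{\eta_{t-1}}{\eta_t}(\te_t + \frac1M\sum_{i=1}^M e_{t,i}) - \tDelta_t + \frac1M\sum_{i=1}^M z_{t,i}$. Substituting back, $\tx_{t+1} = x_t - \eta_t\tDelta_t - \eta_{t-1}(\te_t + \frac1M\sum_{i=1}^M e_{t,i}) + \eta_t\tDelta_t - \eta_t\frac1M\sum_{i=1}^M z_{t,i}$; the $\pm\eta_t\tDelta_t$ terms cancel, and recognizing $x_t - \eta_{t-1}(\te_t + \frac1M\sum_{i=1}^M e_{t,i}) = \tx_t$ gives the claim. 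Lemma~\ref{lemma:recurrence_ec_iterate} then follows by taking $z_{t,i} = g_{t,i}$, and Lemma~\ref{lemma:recurrence_mec_iterate} by taking $z_{t,i} = \mu m_{t,i} + g_{t,i}$.

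There is essentially no hard step here: the proof is pure bookkeeping. The only points requiring a little care are (i) checking that the $\frac1M\sum_{i=1}^M\Delta_{t,i}$ contribution from $\te_{t+1}$ and from the averaged $e_{t+1,i}$ cancel exactly, and that the resulting factor in front of $\te_t + \frac1M\sum_{i=1}^M e_{t,i}$ is indeed $\eta_{t-1}/\eta_t$ so that multiplication by $\eta_t$ yields $\eta_{t-1}$; and (ii) the base case $t=0$, where the convention $\eta_{-1}=0$ together with $e_{0,i}=0$ and $\te_0=0$ makes $\tx_0 = x_0$ and $p_{0,i} = z_{0,i}$, so the recurrence is well defined despite the ratio $\eta_{t-1}/\eta_t$ at $t=0$.
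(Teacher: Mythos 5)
Your proof is correct and follows essentially the same route as the paper's: a direct substitution of the algorithm's update rules ($x_{t+1}$, $\te_{t+1}$, $e_{t+1,i}$, $\tp_t$, and the hypothesis on $p_{t,i}$) with the $\tDelta_t$ and $\frac{1}{M}\sum_i \Delta_{t,i}$ terms cancelling and the factor $\eta_t\cdot\frac{\eta_{t-1}}{\eta_t}=\eta_{t-1}$ reconstituting $\tx_t$. The only cosmetic difference is that you first derive the combined recursion for $\te_{t+1}+\frac{1}{M}\sum_i e_{t+1,i}$ and then substitute, whereas the paper expands $\tx_{t+1}$ and substitutes term by term; your remark on the $t=0$ convention $\eta_{-1}=0$ is a harmless addition.
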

\begin{proof}
\begin{align*}
\tilde{x}_{t+1} &= x_t - \eta_t\mathcal{C}(\tp_t) - \eta_t\te_{t+1} - \eta_t\frac{1}{M} \sum_{i=1}^M e_{t+1,i} \hspace{.54in} \text{Apply } x_{t+1} = x_t - \eta_t\mathcal{C}(\tp_t) \\
&= x_t - \eta_t\tp_t - \eta_t\frac{1}{M} \sum_{i=1}^M e_{t+1,i}  \hspace{1.27in} \text{Apply } \te_{t+1} = \tp_t -  \mathcal{C}(\tp_t) \\
&= x_t - \eta_t\frac{1}{M} \sum_{i=1}^M (\Delta_{t,i} + e_{t+1,i}) - \eta_{t-1}\tilde{e}_t \hspace{.64in} \text{Apply } \tp_t =  \frac{1}{M} \sum_{i=1}^M\Delta_{t,i} + \frac{\eta_{t-1}}{\eta_t}\te_t \\
&= x_t - \eta_t\frac{1}{M} \sum_{i=1}^M p_{t,i} - \eta_{t-1}\tilde{e}_t \hspace{1.28in} \text{Apply } e_{t+1, i} = p_{t, i} -  \Delta_{t, i} \\
&= x_t - \eta_t\frac{1}{M} \sum_{i=1}^M z_{t,i} - \eta_{t-1}\frac{1}{M} \sum_{i=1}^M e_{t,i} - \eta_{t-1}\tilde{e}_t \hspace{.25in} \text{Apply } p_{t, i} = z_{t, i} +  \frac{\eta_{t-1}}{\eta_t}e_{t, i} \\
&= \tilde{x}_t - \eta_t\frac{1}{M} \sum_{i=1}^M z_{t,i}.
\end{align*}
The Lemmas~\ref{lemma:recurrence_ec_iterate} and \ref{lemma:recurrence_mec_iterate} hold by substituting $z_{t,i} = g_{t, i}$ and $z_{t,i} = \mu m_{t, i} + g_{t, i}$, respectively.
\end{proof}


\section{Proof of Theorem~\ref{theorem:general_distefsgd_convergence}}


\begin{proof}
By the smoothness of the function $F$, we have
\begin{eqnarray*}
\lefteqn{\CE_t[F(\tx_{t+1})]} \\
& \leq & F(\tx_t) + \langle \nabla F(\tx_t), \CE_t[\tx_{t+1} - \tx_t] \rangle + \frac{L}{2}\CE_t\left[\|\tx_{t+1} - \tx_t\|^2_2\right] \\
& = & F(\tx_t) - \eta_t\left\langle \nabla F(\tx_t), \CE_t\left[\frac{1}{M} \sum_{i=1}^M g_{t,i}\right] \right\rangle + \frac{L\eta_t^2}{2}\CE_t\left[\left\|\frac{1}{M} \sum_{i=1}^M g_{t,i}\right\|^2_2\right] \\
& = & F(\tx_t) - \eta_t\left\langle \nabla F(\tx_t),  \nabla F(x_t) \right\rangle + \frac{L\eta_t^2}{2}\left\|\nabla F(x_t)\right\|^2_2 + \frac{L\eta_t^2}{2}\CE_t\left[\left\|\frac{1}{M} \sum_{i=1}^M g_{t,i} - \nabla F(x_t)\right\|^2_2\right] \\
& \leq & F(\tx_t) - \eta_t\left\langle \nabla F(\tx_t),  \nabla F(x_t) \right\rangle + \frac{L\eta_t^2}{2}\left\|\nabla F(x_t)\right\|^2_2 + \frac{L\eta_t^2\sigma^2}{2M}\\
\end{eqnarray*}
where in the second equality we use Lemma~\ref{lemma:recurrence_ec_iterate}, and the second-to-last inequality follows the fact $\CE[\|x - \CE[x]\|_2^2] = \CE[\|x\|_2^2] - \|\CE[x]\|_2^2$. In the last inequality, we use the variance bound of the mini-batch gradient, i.e., $\CE_t\left[\left\|\frac{1}{M} \sum_{i=1}^M g_{t,i} - \nabla F(x_t)\right\|^2_2\right] \leq \frac{\sigma^2}{M}$. Then, we get
\begin{eqnarray*}
\lefteqn{\CE_t[F(\tx_{t+1})]} \\
& \leq & F(\tx_t) - \eta_t\left\langle \nabla F(x_t),  \nabla F(x_t) \right\rangle + \frac{L\eta_t^2}{2}\left\|\nabla F(x_t)\right\|^2_2 + \frac{L\eta_t^2\sigma^2}{2M} 
+ \eta_t\left\langle \nabla F(x_t) - \nabla F(\tx_t),  \nabla F(x_t) \right\rangle \\
& = & F(\tx_t) - \eta_t\left( 1- \frac{L\eta_t}{2}\right)\left\|\nabla F(x_t)\right\|^2_2 + \frac{L\eta_t^2\sigma^2}{2M} 
+ \eta_t\left\langle \nabla F(x_t) - \nabla F(\tx_t),  \nabla F(x_t) \right\rangle\\
& \leq & F(\tx_t) - \eta_t\left( 1- \frac{L\eta_t}{2}\right)\left\|\nabla F(x_t)\right\|^2_2 + \frac{L\eta_t^2\sigma^2}{2M} 
+ \frac{\eta_t\rho}{2}\|\nabla F(x_t)\|_2^2 + \frac{\eta_t}{2\rho}\|\nabla F(x_t) - \nabla F(\tx_t)\|_2^2\\
& =& F(\tx_t) - \eta_t\left(1- \frac{L\eta_t + \rho}{2} \right)\left\|\nabla F(x_t)\right\|^2_2 + \frac{L\eta_t^2\sigma^2}{2M} 
+ \frac{\eta_t}{2\rho}\|\nabla F(x_t) - \nabla F(\tx_t)\|_2^2 \\
& \leq & F(\tx_t) - \eta_t\left(1- \frac{L\eta_t + \rho}{2} \right)\left\|\nabla F(x_t)\right\|^2_2 + \frac{L\eta_t^2\sigma^2}{2M} 
+ \frac{\eta_tL^2}{2\rho}\|x_t - \tx_t\|_2^2 \\
& = & F(\tx_t) - \eta_t\left(1- \frac{L\eta_t + \rho}{2} \right)\left\|\nabla F(x_t)\right\|^2_2 + \frac{L\eta_t^2\sigma^2}{2M} 
+ \frac{\eta_t\eta_{t-1}^2L^2}{2\rho}\left\|\te_t + \frac{1}{M} \sum_{i=1}^M e_{t,i}\right\|_2^2,
\end{eqnarray*}
where the second inequality follows from Young's inequality with $\rho > 0$. The last inequality follows from the smoothness of the function $F$. 
Let $\rho = 1/2$. Taking total expectation and using Lemma~\ref{lemma:error_bound} with $\mu=0$, we get
\begin{eqnarray*}
\lefteqn{\CE_t[F(\tx_{t+1})]} \\
& \leq & \CE[F(\tx_t)] - \eta_t\left(\frac{3}{4}- \frac{L\eta_t}{2} \right)\CE[\left\|\nabla F(x_t)\right\|^2_2] + \frac{L\eta_t^2\sigma^2}{2M} 
+ \frac{8L^2\eta_t\eta_{t-1}^2(1 - \delta)G^2}{\delta^2}\left[1 + \frac{16}{\delta^2}\right].
\end{eqnarray*}
Assume that $\eta_t < 3/(2L)$ for all $t$. Rearranging the terms, taking summation, and dividing by $\sum_{k=0}^{T-1}\frac{\eta_k}{4}\left(3 - 2L\eta_k \right)$ gives
\begin{eqnarray*}
\lefteqn{\frac{1}{\sum_{k=0}^{T-1}\eta_k\left(3 - 2L\eta_k \right)}\sum_{t=0}^{T-1}\eta_t\left(3 - 2L\eta_t \right)\CE\left[\left\|\nabla F(x_t)\right\|^2_2\right]} \\
& \leq & \frac{4}{\sum_{k=0}^{T-1}\eta_k\left(3 - 2L\eta_k \right)}\sum_{t=0}^{T-1}\CE[F(\tx_t) - F(\tx_{t+1})]  + \frac{2L\sigma^2}{M}\sum_{t=0}^{T-1}\frac{\eta_t^2}{\sum_{k=0}^{T-1}\eta_k\left(3 - 2L\eta_k \right)} \\
&& + \frac{32L^2(1 - \delta)G^2}{\delta^2}\left[1 + \frac{16}{\delta^2}\right]\sum_{t=0}^{T-1}\frac{\eta_t\eta_{t-1}^2}{\sum_{k=0}^{T-1}\eta_k\left(3 - 2L\eta_k \right)}\\
& \leq & \frac{4}{\sum_{k=0}^{T-1}\eta_k\left(3 - 2L\eta_k \right)}[F(x_0) - F_*]  + \frac{2L\sigma^2}{M}\sum_{t=0}^{T-1}\frac{\eta_t^2}{\sum_{k=0}^{T-1}\eta_k\left(3 - 2L\eta_k \right)} \\
&& + \frac{32L^2(1 - \delta)G^2}{\delta^2}\left[1 + \frac{16}{\delta^2}\right]\sum_{t=0}^{T-1}\frac{\eta_t\eta_{t-1}^2}{\sum_{k=0}^{T-1}\eta_k\left(3 - 2L\eta_k \right)}.
\end{eqnarray*}
Let $o \in \{0, \dots, T-1\}$ be an index such that 
\begin{eqnarray*}
P(o = k) = \frac{\eta_k\left(3 - 2L\eta_k \right)}{\sum_{t=0}^{T-1}\eta_t\left(3 - 2L\eta_t \right)}.
\end{eqnarray*}
Then, we have 
\begin{eqnarray*}
\CE[\|\nabla F(x_o)\|_2^2]= \frac{1}{\sum_{k=0}^{T-1}\eta_k\left(3 - 2L\eta_k \right)}\sum_{t=0}^{T-1}\eta_t\left(3 - 2L\eta_t \right)\CE\left[\left\|\nabla F(x_t)\right\|^2_2\right],
\end{eqnarray*}
which concludes the results.
\end{proof}


%

\section{Proof of Corollary~\ref{corollary:convergence-dist-ef-sgd}}
 \begin{proof}
Let $\eta_t = \eta$ for all $t$, we have
\begin{eqnarray}
\CE[\|\nabla F(x_o)\|_2^2]
& \leq & \frac{4}{\eta\left(3- 2L\eta \right)T}[F(x_0) - F_*]  + \frac{2L\eta\sigma^2}{\left(3- 2L\eta\right)M} \nonumber \\
&& + \frac{32L^2\eta^2(1 - \delta)G^2}{\left(3- 2L\eta \right)\delta^2}\left[1 + \frac{16}{\delta^2}\right]. \label{eq:general_bound_distefsgd}
\end{eqnarray}
Let $\eta = \min\left(\frac{1}{2L}, \frac{\gamma}{\frac{\sqrt{T}}{\sqrt{M}} + \frac{(1 - \delta)^{1/3}}{\delta^{2/3}}\left(1 + \frac{16}{\delta^2}\right)^{1/3}T^{1/3}}\right)$ for some $\gamma > 0$, then $3- 2L\eta \geq 2$. Substituting this into (\ref{eq:general_bound_distefsgd}), we get
\begin{eqnarray*}
\lefteqn{\CE[\|\nabla F(x_o)\|_2^2]} \\
& \leq & \frac{2}{\eta T}[F(x_0) - F_*]  + \frac{L\eta\sigma^2}{M} + \frac{16L^2\eta^2(1 - \delta)G^2}{\delta^2}\left[1 + \frac{16}{\delta^2}\right] \\
& \leq & \frac{2}{T}\max\left(2L, \frac{\sqrt{T}}{\gamma\sqrt{M}} + \frac{(1 - \delta)^{1/3}}{\gamma\delta^{2/3}}\left[1 + \frac{16}{\delta^2}\right]^{1/3}T^{1/3}\right)[F(x_0) - F_*]  \\
&& + \frac{L\eta\sigma^2}{M} + \frac{16L^2\eta^2(1 - \delta)G^2}{\delta^2}\left[1 + \frac{16}{\delta^2}\right] \\
& \leq & \frac{4L}{T}[F(x_0) - F_*] + \left[\frac{2}{\gamma\sqrt{MT}} + \frac{2(1 - \delta)^{1/3}}{\gamma\delta^{2/3}T^{2/3}}\left[1 + \frac{16}{\delta^2}\right]^{1/3} \right][F(x_0) - F_*] \\
&& + \frac{L\gamma\sigma^2}{\sqrt{MT}} + \frac{16L^2\gamma^2(1 - \delta)^{1/3}G^2}{\delta^{2/3}T^{2/3}}\left[1 + \frac{16}{\delta^2}\right]^{1/3} \\
& = & \frac{4L}{T}[F(x_0) - F_*] + \left[\frac{2}{\gamma}[F(x_0) - F_*] +  L\gamma\sigma^2\right]\frac{1}{\sqrt{MT}}  \\
&& +  \frac{2(1 - \delta)^{1/3}[\frac{1}{\gamma}[F(x_0) - F_*] + 8L^2\gamma^2G^2]}{\delta^{2/3}T^{2/3}}\left[1 + \frac{16}{\delta^2}\right]^{1/3}.
\end{eqnarray*}
The bound on full-precision distributed SGD follows similar proof. For completeness, we present proof here. 
By the smoothness of the function $F$, we have
\begin{eqnarray*}
\lefteqn{\CE_t[F(x_{t+1})]} \\
& \leq & F(x_t) + \langle \nabla F(x_t), \CE_t[x_{t+1} - x_t] \rangle + \frac{L}{2}\CE_t\left[\|x_{t+1} - x_t\|^2_2\right] \\
& = & F(x_t) - \eta_t\left\langle \nabla F(x_t), \CE_t\left[\frac{1}{M} \sum_{i=1}^M g_{t,i}\right] \right\rangle + \frac{L\eta_t^2}{2}\CE_t\left[\left\|\frac{1}{M} \sum_{i=1}^M g_{t,i}\right\|^2_2\right] \\
& = & F(\tx_t) - \eta_t\left(1- \frac{L\eta_t}{2}\right)\left\|\nabla F(x_t)\right\|^2_2 + \frac{L\eta_t^2}{2}\CE_t\left[\left\|\frac{1}{M} \sum_{i=1}^M g_{t,i} - \nabla F(x_t)\right\|^2_2\right] \\
& \leq & F(x_t) - \eta_t\left(1- \frac{L\eta_t}{2}\right)\left\|\nabla F(x_t)\right\|^2_2  + \frac{L\eta_t^2\sigma^2}{2M}.
\end{eqnarray*}
Let $\eta_t = \eta$. Taking total expectation, rearranging terms, and averaging over $T$, we obtain
\begin{eqnarray*}
\CE\left[\left\|\nabla F(x_o)\right\|^2_2\right]
& \leq & \frac{2}{\eta\left(2 - L\eta \right)T}[F(x_0) - F_*]  + \frac{L\eta\sigma^2}{(2 - L\eta)M}.
\end{eqnarray*}
Substituting $\eta = \min\left(\frac{1}{2L}, \frac{\gamma\sqrt{M}}{\sqrt{T}}\right)$, we get
\begin{eqnarray*}
\CE\left[\left\|\nabla F(x_o)\right\|^2_2\right]
& \leq & \frac{4}{3\eta T}[F(x_0) - F_*]  + \frac{2L\eta\sigma^2}{3M} \\
& \leq & \frac{8L}{3T}[F(x_0) - F_*] + \frac{4}{3\gamma\sqrt{T}}[F(x_0) - F_*]  + \frac{2L\gamma\sigma^2}{3M\sqrt{T}} \\
& = & \frac{8L}{3T}[F(x_0) - F_*] + \left[\frac{2}{\gamma}[F(x_0) - F_*]  + L\gamma\sigma^2\right]\frac{2}{3\sqrt{MT}}.
\end{eqnarray*}

\end{proof}

\section{Proof of Corollary~\ref{corollary:convergence-decrease-dist-ef-sgd}}
 \begin{proof}
 Let $\eta_t = \frac{\gamma}{\frac{((t+1)T)^{1/4}}{\sqrt{M}} + \frac{(1 - \delta)^{1/3}}{\delta^{2/3}}\left(1 + \frac{16}{\delta^2}\right)^{1/3}T^{1/3}}$. The following implies that $\eta_t \leq 1/(2L)$ for all $0 \leq t \leq T-1$.
  \begin{eqnarray*} \label{eq:dc_stepsize_condition}
T \geq 16L^4\gamma^4M^2.
\end{eqnarray*} 
Then, we have
\begin{eqnarray*}
\sum_{t=0}^{T-1} \eta_t & = & \gamma\sum_{t=0}^{T-1} \frac{1}{\frac{((t+1)T)^{1/4}}{\sqrt{M}} + \frac{(1 - \delta)^{1/3}}{\delta^{2/3}}\left(1 + \frac{16}{\delta^2}\right)^{1/3}T^{1/3}} \\
& \geq  & \gamma\sum_{t=0}^{T-1} \frac{1}{\frac{\sqrt{T}}{\sqrt{M}} + \frac{(1 - \delta)^{1/3}}{\delta^{2/3}}\left(1 + \frac{16}{\delta^2}\right)^{1/3}T^{1/3}} \\
& = &  \frac{1}{\frac{1}{\gamma\sqrt{MT}} + \frac{(1 - \delta)^{1/3}}{\gamma\delta^{2/3}T^{2/3}}\left(1 + \frac{16}{\delta^2}\right)^{1/3}}.
\end{eqnarray*}
 Using the fact that $\sum_{t=1}^Tt^{\alpha - 1} \leq \int_{0}^Tx^{\alpha - 1}dx = \frac{T^{\alpha}}{\alpha}$, for any $0< \alpha < 1$, we have
  \begin{eqnarray*}
  \sum_{t=0}^{T-1} \eta_t^2 & \leq & \frac{\gamma^2M}{\sqrt{T}}\sum_{t=1}^{T}\frac{1}{\sqrt{t}} \leq 2\gamma^2M, \\
  \sum_{t=0}^{T-1} \eta_t\eta_{t-1}^2 & = &  \sum_{t=1}^{T-1} \eta_t\eta_{t-1}^2 \leq   \sum_{t=1}^{T-1}\eta_{t-1}^3  \leq  \frac{\gamma^3}{\frac{(1 - \delta)}{\delta^{2}}\left(1 + \frac{16}{\delta^2}\right)}.
   \end{eqnarray*}
   Substituting the above results into Theorem~\ref{theorem:general_distefsgd_convergence}, we obtain
   \begin{eqnarray*}
\lefteqn{\CE\left[\left\|\nabla F(x_o)\right\|^2_2\right]}\\
& \leq & \left[\frac{1}{\gamma\sqrt{MT}} + \frac{(1 - \delta)^{1/3}}{\gamma\delta^{2/3}T^{2/3}}\left(1 + \frac{16}{\delta^2}\right)^{1/3}\right]2[F(x_0) - F_*]  \\
&& + \left[\frac{1}{\gamma\sqrt{MT}} + \frac{(1 - \delta)^{1/3}}{\gamma\delta^{2/3}T^{2/3}}\left(1 + \frac{16}{\delta^2}\right)^{1/3}\right]2L\gamma^2\sigma^2 \\
&& + \left[\frac{1}{\gamma\sqrt{MT}} + \frac{(1 - \delta)^{1/3}}{\gamma\delta^{2/3}T^{2/3}}\left(1 + \frac{16}{\delta^2}\right)^{1/3}\right]16L^2\gamma^3G^2 \\
& = & 2\left[\frac{1}{\sqrt{MT}} + \frac{(1 - \delta)^{1/3}}{\delta^{2/3}T^{2/3}}\left(1 + \frac{16}{\delta^2}\right)^{1/3}\right]\left[\frac{1}{\gamma}[F(x_0) - F_*] + L\gamma\sigma^2 + 8L^2\gamma^2G^2\right].
\end{eqnarray*}
Similarly, let $\eta_t = \frac{\gamma\sqrt{t+1}}{\frac{T}{\sqrt{M}} + \frac{(1 - \delta)^{1/3}}{\delta^{2/3}}\left(1 + \frac{16}{\delta^2}\right)^{1/3}T^{5/6}}$. 
We obtain 
\begin{eqnarray*}
\sum_{t=0}^{T-1} \eta_t & = & \gamma\sum_{t=0}^{T-1} \frac{\sqrt{t+1}}{\frac{T}{\sqrt{M}} + \frac{(1 - \delta)^{1/3}}{\delta^{2/3}}\left(1 + \frac{16}{\delta^2}\right)^{1/3}T^{5/6}} \\
& = & \gamma\sum_{t=1}^{T} \frac{\sqrt{t}}{\frac{T}{\sqrt{M}} + \frac{(1 - \delta)^{1/3}}{\delta^{2/3}}\left(1 + \frac{16}{\delta^2}\right)^{1/3}T^{5/6}} \\
& \geq & \gamma\int_{0}^{T} \frac{\sqrt{x}}{\frac{T}{\sqrt{M}} + \frac{(1 - \delta)^{1/3}}{\delta^{2/3}}\left(1 + \frac{16}{\delta^2}\right)^{1/3}T^{5/6}}dx \\
& = & \frac{2T^{3/2}}{\frac{3T}{\gamma\sqrt{M}} + \frac{3(1 - \delta)^{1/3}}{\gamma\delta^{2/3}}\left(1 + \frac{16}{\delta^2}\right)^{1/3}T^{5/6}}.
\end{eqnarray*}
Using the fact that $\sum_{t=1}^Tt^{\alpha} \leq \int_{1}^{T+1}x^{\alpha}dx \leq \frac{(T+1)^{\alpha + 1}}{\alpha + 1}$ for any $\alpha > 0$, we also have
  \begin{eqnarray*}
  \sum_{t=0}^{T-1} \eta_t^2 & \leq & \frac{\gamma^2M}{T^2}\sum_{t=1}^{T}t = \frac{\gamma^2M(T+1)}{2T}, \\
  \sum_{t=0}^{T-1} \eta_t\eta_{t-1}^2 & = &  \sum_{t=1}^{T-1} \eta_t\eta_{t-1}^2 \leq   \sum_{t=1}^{T-1}\eta_{t}^3  \leq  \frac{2\gamma^3(T+1)^{5/2}}{5\frac{(1 - \delta)}{\delta^{2}}\left(1 + \frac{16}{\delta^2}\right)T^{5/2}}.
  \end{eqnarray*}
  Assuming that $T \geq 4L^2\gamma^2M$, we have $\eta_t \leq 1/(2L)$ for all $0 \leq t \leq T-1$. Substituting the above results into Theorem~\ref{theorem:general_distefsgd_convergence}, we obtain
     \begin{eqnarray*}
\lefteqn{\CE\left[\left\|\nabla F(x_o)\right\|^2_2\right]}\\
& \leq & \left[\frac{1}{\gamma\sqrt{MT}} + \frac{(1 - \delta)^{1/3}}{\gamma\delta^{2/3}T^{2/3}}\left(1 + \frac{16}{\delta^2}\right)^{1/3}\right]3[F(x_0) - F_*]  \\
&& + \left[\frac{1}{\gamma\sqrt{MT}} + \frac{(1 - \delta)^{1/3}}{\gamma\delta^{2/3}T^{2/3}}\left(1 + \frac{16}{\delta^2}\right)^{1/3}\right]\frac{3L\gamma^2\sigma^2(T+1)}{4T} \\
&& + \left[\frac{1}{\gamma\sqrt{MT}} + \frac{(1 - \delta)^{1/3}}{\gamma\delta^{2/3}T^{2/3}}\left(1 + \frac{16}{\delta^2}\right)^{1/3}\right]\frac{48L^2\gamma^3G^2(T+1)^{5/2}}{5T^{5/2}} \\
& = & 3\left[\frac{1}{\sqrt{MT}} + \frac{(1 - \delta)^{1/3}}{\delta^{2/3}T^{2/3}}\left(1 + \frac{16}{\delta^2}\right)^{1/3}\right]\left[\frac{1}{\gamma}[F(x_0) - F_*] + \frac{L\gamma\sigma^2(T+1)}{4T}  + \frac{16L^2\gamma^2G^2(T+1)^{5/2}}{5T^{5/2}}\right].
\end{eqnarray*}
  \end{proof}


\section{Proof of Proposition~\ref{prop:block_density}}
 \begin{proof}
 \begin{eqnarray*}
 \|\mC_B(v) - v\|_2^2 & = & \sum_{b=1}^B\left\|\frac{\|v_{\mG_b}\|_1}{d_b}\text{sign}(v_{\mG_b}) - v_{\mG_b}\right\|_2^2 \\
 & = & \sum_{b=1}^B\left[\frac{\|v_{\mG_b}\|_1^2}{d_b} - 2\frac{\|v_{\mG_b}\|_1^2}{d_b} + \|v_{\mG_b}\|_2^2\right] \\
 & = & \sum_{b=1}^B\left(1 - \frac{\|v_{\mG_b}\|_1^2}{d_b\|v_{\mG_b}\|_2^2}\right)\|v_{\mG_b}\|_2^2 \\
 & \leq & \left(1 - \min_{b \in [B]}\frac{\|v_{\mG_b}\|_1^2}{d_b\|v_{\mG_b}\|_2^2}\right)\|v\|_2^2.
 \end{eqnarray*}
 \end{proof}

\section{Proof of Theorem~\ref{theorem:general_distefsgdm_convergence}}

We first introduce the following Lemmas.

\begin{lemma} \label{lemma:m_bound}
For any $i \in [M]$, we have
\begin{eqnarray*}
\CE\left[\|\mu m_{t, i} + g_{t, i}\|_2^2\right] \leq \frac{G^2}{1 - \mu}.
\end{eqnarray*}
\end{lemma}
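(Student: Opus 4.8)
The plan is to unroll the momentum recursion, turning $\mu m_{t,i}+g_{t,i}$ into a nonnegative linear combination of past stochastic gradients, and then control that combination by convexity of $\|\cdot\|_2^2$ together with the second-moment bound $\CE[\|g_{t,i}\|_2^2]\le G^2$ recorded just after Assumption~\ref{assumption:ef_bounded_grad}. First I would observe that, from $m_{t,i}=\mu m_{t-1,i}+g_{t,i}$ with $m_{-1,i}=0$, an easy induction gives $m_{t,i}=\sum_{j=0}^{t}\mu^{t-j}g_{j,i}$, so that
\[
\mu m_{t,i}+g_{t,i}=g_{t,i}+\sum_{j=0}^{t}\mu^{\,t+1-j}g_{j,i}=\sum_{j=0}^{t}c_{j}g_{j,i},
\]
where $c_{t}=1+\mu$, $c_{j}=\mu^{\,t+1-j}$ for $0\le j<t$, all $c_{j}\ge 0$, and $C:=\sum_{j=0}^{t}c_{j}=\sum_{k=0}^{t+1}\mu^{k}\le\frac{1}{1-\mu}$.

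Next I would apply Jensen's inequality to the normalized weights $c_{j}/C$, obtaining $\big\|\sum_{j}c_{j}g_{j,i}\big\|_2^2\le C\sum_{j}c_{j}\|g_{j,i}\|_2^2$; taking total expectation and invoking $\CE[\|g_{j,i}\|_2^2]\le G^2$ then yields a bound of order $C^2G^2$. An alternative that avoids expanding $m_{t,i}$ is to run the recursion $\|m_{t,i}\|_2^2\le\mu\|m_{t-1,i}\|_2^2+\frac{1}{1-\mu}\|g_{t,i}\|_2^2$ — obtained from Young's inequality with the free parameter chosen so that the $\mu^2\|m_{t-1,i}\|_2^2$ coefficient collapses to $\mu$ — which unrolls to $\CE[\|m_{t,i}\|_2^2]\le\frac{G^2}{(1-\mu)^2}$, and then bound $\CE[\|\mu m_{t,i}+g_{t,i}\|_2^2]$ by a further Young step.

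The one place needing care — and what I expect to be the main obstacle — is pinning down the exact power of $1-\mu$: the crude estimate $C\le 1/(1-\mu)$ applied to both factors above gives $\frac{G^2}{(1-\mu)^2}$, and reaching the constant stated in the Lemma requires a tighter accounting of the geometric tail $\sum_j c_j$ (or, should the weaker $(1-\mu)^{-2}$ turn out to suffice downstream, simply propagating that estimate instead); everything else is routine algebra with geometric series. Finally I would note that this Lemma is the momentum analogue of the inequality $\CE[\|g_{t,i}\|_2^2]\le G^2$ used in Lemma~\ref{lemma:wm_error_bound}: it is exactly what is needed to carry the error bound on $\CE\big[\|\te_t+\tfrac{1}{M}\sum_{i}e_{t,i}\|_2^2\big]$ over to the setting of Lemma~\ref{lemma:recurrence_mec_iterate}, and hence to prove Theorem~\ref{theorem:general_distefsgdm_convergence}.
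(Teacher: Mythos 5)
Your argument is essentially the paper's own proof: unroll $m_{t,i}$ into a geometric combination of past stochastic gradients, apply Jensen's inequality to the normalized weights, bound each $\CE\left[\|g_{k,i}\|_2^2\right]$ by $G^2$, and sum the geometric series, arriving at $\CE\left[\|\mu m_{t,i}+g_{t,i}\|_2^2\right]\leq G^2/(1-\mu)^2$. The difficulty you flag about the power of $1-\mu$ is not a gap in your reasoning but a typo in the lemma statement: the paper's proof likewise ends at $G^2/(1-\mu)^2$, and it is this $(1-\mu)^{-2}$ form that is actually invoked in Lemma~\ref{lemma:error_bound} and propagated into Theorem~\ref{theorem:general_distefsgdm_convergence}, so no "tighter accounting" is needed or possible. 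Indeed, under the only information the proof uses (the second-moment bound), the stated constant $G^2/(1-\mu)$ cannot hold: taking every $g_{k,i}$ equal to a fixed vector of squared norm $G^2$ gives $\|\mu m_{t,i}+g_{t,i}\|_2^2=\left(\sum_{k=0}^{t+1}\mu^k\right)^2G^2\rightarrow G^2/(1-\mu)^2$, which exceeds $G^2/(1-\mu)$ for large $t$ and $\mu$ close to $1$.
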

\begin{proof}
\begin{eqnarray*}
\CE\left[\|\mu m_{t, i} + g_{t, i}\|_2^2\right] & = & \CE\left[\left\|\sum_{k=1}^t\mu^{t-k+1} g_{k, i} + g_{t, i}\right\|_2^2\right] \\
& = & \left(\sum_{k=1}^t\mu^{t-k+1} + 1\right)^2\CE\left[\left\|\frac{\sum_{k=1}^t\mu^{t-k+1} g_{k, i} + g_{t, i}}{\sum_{k=1}^t\mu^{t-k+1} + 1}\right\|_2^2\right] \\
& \leq & \left(\sum_{k=1}^t\mu^{t-k+1} + 1\right)\left(\sum_{k=1}^t\mu^{t-k+1}\CE\left[\|g_{k, i}\|_2^2\right] + \CE\left[\|g_{t, i}\|_2^2\right]\right) \\
& \leq & \left(\sum_{k=1}^t\mu^{t-k+1} + 1\right)^2G^2 \\
& \leq & \frac{G^2}{(1 - \mu)^2},
\end{eqnarray*}
where in the first inequality we use Jensen's inequality. In the second-to-last equality, we apply Assumptions~\ref{assumption:ef_bounded_var} and \ref{assumption:ef_bounded_grad}. 
The last inequality follows from the sum of a geometric series. 
\end{proof}


\begin{lemma} \label{lemma:error_bound}
For any $t \geq 0$, we have
\begin{eqnarray*}
\CE\left[\left\|\te_{t} + \frac{1}{M} \sum_{i=1}^M e_{t,i}\right\|_2^2\right] \leq  \frac{8(1-\delta)G^2}{\delta^2(1-\mu)^2}\left[1 + \frac{16}{\delta^2}\right].
\end{eqnarray*}
\end{lemma}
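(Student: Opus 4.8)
The plan is to bound $\te_t$ and the worker average $\frac{1}{M}\sum_{i=1}^M e_{t,i}$ separately and then combine via $\|\te_t + \frac{1}{M}\sum_i e_{t,i}\|_2^2 \le 2\|\te_t\|_2^2 + \frac{2}{M}\sum_i\|e_{t,i}\|_2^2$ (the last step being Jensen applied to the worker average). Since the stepsize is constant here, $\eta_{t-1}/\eta_t = 1$ for $t\ge 1$ and vanishes at $t=0$ because $\eta_{-1}=0$; hence the error iterations reduce to $e_{t+1,i} = p_{t,i} - \mathcal{C}(p_{t,i})$ with $p_{t,i} = z_{t,i} + e_{t,i}$, $z_{t,i} := \mu m_{t,i} + g_{t,i}$, and $\te_{t+1} = \tp_t - \mathcal{C}(\tp_t)$ with $\tp_t = \frac{1}{M}\sum_i \Delta_{t,i} + \te_t$. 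By Lemma~\ref{lemma:m_bound}, $\CE[\|z_{t,i}\|_2^2] \le G^2/(1-\mu)^2$; this is the \emph{only} place momentum enters the argument, and it is exactly what upgrades the $\mu=0$ estimate of Lemma~\ref{lemma:wm_error_bound} to the present one by replacing $G^2$ with $G^2/(1-\mu)^2$.

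First I would control the worker error. The $\delta$-approximate property gives $\|e_{t+1,i}\|_2^2 \le (1-\delta)\|p_{t,i}\|_2^2 = (1-\delta)\|z_{t,i} + e_{t,i}\|_2^2$, and then Young's inequality $\|a+b\|_2^2 \le (1+\beta)\|a\|_2^2 + (1+\beta^{-1})\|b\|_2^2$ with $\beta = \delta/(2(1-\delta))$ makes the coefficient of $\|e_{t,i}\|_2^2$ equal to $(1-\delta)(1+\beta) = 1-\delta/2 < 1$ and the coefficient of $\|z_{t,i}\|_2^2$ at most $2(1-\delta)/\delta$. Taking total expectation and applying Lemma~\ref{lemma:m_bound} yields the recursion $\CE[\|e_{t+1,i}\|_2^2] \le (1-\delta/2)\CE[\|e_{t,i}\|_2^2] + \frac{2(1-\delta)}{\delta}\cdot\frac{G^2}{(1-\mu)^2}$; since $e_{0,i}=0$, summing the geometric series ($\sum_{k\ge0}(1-\delta/2)^k = 2/\delta$) gives $\CE[\|e_{t,i}\|_2^2] \le \frac{4(1-\delta)G^2}{\delta^2(1-\mu)^2}$ for every $t$ and $i$.

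Next I would control the server error by the same template. From the compressor property, $\|\te_{t+1}\|_2^2 \le (1-\delta)\|\tp_t\|_2^2 = (1-\delta)\|\frac{1}{M}\sum_i\Delta_{t,i} + \te_t\|_2^2$, and the identical Young's inequality produces $\CE[\|\te_{t+1}\|_2^2] \le (1-\delta/2)\CE[\|\te_t\|_2^2] + \frac{2(1-\delta)}{\delta}\sup_s\CE[\|\frac{1}{M}\sum_i\Delta_{s,i}\|_2^2]$. To close this I need a uniform bound on $\CE[\|\frac{1}{M}\sum_i\Delta_{s,i}\|_2^2] \le \frac{1}{M}\sum_i\CE[\|\Delta_{s,i}\|_2^2]$, for which I use $\|\Delta_{s,i}\|_2 = \|\mathcal{C}(p_{s,i})\|_2 \le \|p_{s,i}\|_2 + \|\mathcal{C}(p_{s,i}) - p_{s,i}\|_2 \le (1+\sqrt{1-\delta})\|p_{s,i}\|_2 \le 2\|p_{s,i}\|_2$ together with $\CE[\|p_{s,i}\|_2^2] \le 2\CE[\|z_{s,i}\|_2^2] + 2\CE[\|e_{s,i}\|_2^2] \le \frac{2G^2}{(1-\mu)^2}(1 + \frac{4(1-\delta)}{\delta^2})$, the last step using the worker bound from the previous paragraph. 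This is precisely where the bracket $[1 + O(1/\delta^2)]$ of the claimed estimate is born. Unrolling the $\te_t$ recursion then gives $\CE[\|\te_t\|_2^2]$ of the same shape as the worker bound but multiplied by an extra $O(1/\delta^2)$ factor, i.e. a quantity of order $\frac{(1-\delta)G^2}{\delta^2(1-\mu)^2}[1 + \tfrac{1}{\delta^2}]$.

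Finally, combining the two pieces via $\CE[\|\te_t + \frac{1}{M}\sum_i e_{t,i}\|_2^2] \le 2\CE[\|\te_t\|_2^2] + \frac{2}{M}\sum_i\CE[\|e_{t,i}\|_2^2]$ and coarsening the absolute constants with $1-\delta \le 1$ and $\delta \le 1$ collapses everything into $\frac{8(1-\delta)G^2}{\delta^2(1-\mu)^2}[1 + 16/\delta^2]$. I expect the main obstacle to be the server-error step: unlike a single-machine analysis, $\tp_t$ is driven by the \emph{compressed} worker signals $\Delta_{t,i}$, so bounding $\CE[\|\te_t\|_2^2]$ requires first having the worker bound in hand and then carefully propagating constants through the two nested geometric recursions; once this bookkeeping is arranged, the remainder is a direct transcription of the $\mu=0$ argument behind Lemma~\ref{lemma:wm_error_bound}, with $G^2$ everywhere replaced by $G^2/(1-\mu)^2$ courtesy of Lemma~\ref{lemma:m_bound}.
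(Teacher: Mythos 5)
Your overall strategy is exactly the paper's: split off $2\|\te_t\|_2^2+\frac{2}{M}\sum_i\|e_{t,i}\|_2^2$, run the $\delta$-compressor property plus Young's inequality with $\beta=\delta/(2(1-\delta))$ to get a contraction $(1-\delta)(1+\beta)=1-\delta/2$ on both the worker and server error recursions, unroll the geometric series, and inject momentum only through $\CE[\|\mu m_{t,i}+g_{t,i}\|_2^2]\le G^2/(1-\mu)^2$; your worker bound $\CE[\|e_{t,i}\|_2^2]\le \frac{4(1-\delta)G^2}{\delta^2(1-\mu)^2}$ coincides with the paper's. The one place you deviate is in feeding the server recursion: you bound $\|\Delta_{s,i}\|_2\le 2\|p_{s,i}\|_2$ and then $\CE[\|p_{s,i}\|_2^2]\le 2\CE[\|z_{s,i}\|_2^2]+2\CE[\|e_{s,i}\|_2^2]\le \frac{2G^2}{(1-\mu)^2}\bigl(1+\frac{4(1-\delta)}{\delta^2}\bigr)$, whereas the paper reads off $\frac{1}{M}\sum_i\CE[\|p_{t,i}\|_2^2]\le\frac{4G^2}{\delta^2(1-\mu)^2}$ directly from the same unrolled worker recursion (before multiplying by $(1-\delta)$), and splits $\frac1M\sum_i\Delta_{t,i}$ as $\frac1M\sum_i(\Delta_{t,i}-p_{t,i})+\frac1M\sum_i p_{t,i}$ to pick up the factor $2(2-\delta)\le 4$.

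This difference is not cosmetic for the statement as written. Your cruder $p$-bound is roughly $\frac{8G^2}{\delta^2(1-\mu)^2}$ for small $\delta$, twice the paper's, and tracking your own constants through the server recursion gives $\CE[\|\te_t\|_2^2]\le\frac{32(1-\delta)G^2}{\delta^2(1-\mu)^2}\bigl(1+\frac{4(1-\delta)}{\delta^2}\bigr)$, hence a final bound of the form $\frac{(1-\delta)G^2}{\delta^2(1-\mu)^2}\bigl[72+\frac{256(1-\delta)}{\delta^2}\bigr]$, which for small $\delta$ exceeds the stated $\frac{(1-\delta)G^2}{\delta^2(1-\mu)^2}\bigl[8+\frac{128}{\delta^2}\bigr]$. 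So the closing claim that "coarsening with $1-\delta\le1$, $\delta\le1$ collapses everything" into the lemma's constants is not correct — coarsening can only enlarge your bound, and it is already larger than the target. You do prove a bound of the same order $O\bigl(\frac{(1-\delta)G^2}{\delta^4(1-\mu)^2}\bigr)$, which suffices for all downstream rate results, but to obtain the lemma verbatim you should replace the step $\|p_{s,i}\|_2^2\le 2\|z_{s,i}\|_2^2+2\|e_{s,i}\|_2^2$ by the tighter recursion-derived bound $\frac1M\sum_i\CE[\|p_{t,i}\|_2^2]\le\frac{4G^2}{\delta^2(1-\mu)^2}$, after which the paper's constants $8$ and $16$ come out exactly.
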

\begin{proof}
When $t = 0$, the bound trivially holds as $\te_{0} = 0$ and $e_{0,i} = 0$ for all $i$. 
Using $(a + b)^2 \leq 2a^2 + 2b^2$, we get
\begin{eqnarray}
\left\|\te_{t+1} + \frac{1}{M} \sum_{i=1}^M e_{t+1,i}\right\|_2^2 & \leq & 2\left\|\te_{t+1}\right\|_2^2 + 2\left\|\frac{1}{M} \sum_{i=1}^M e_{t+1,i}\right\|_2^2 \nonumber\\
& \leq & 2\left\|\te_{t+1}\right\|_2^2 + \frac{2}{m} \sum_{i=1}^M\left\| e_{t+1,i}\right\|_2^2, \hspace{.1in} \forall t \geq 0. \label{eq:aet_0}
\end{eqnarray}
Now, we can consider two terms separately. For the second term, we have
\begin{eqnarray}
\lefteqn{\frac{1}{M}\sum_{i=1}^M\CE\left[\left\|e_{t+1,i}\right\|_2^2\right] = \frac{1}{M} \sum_{i=1}^M\CE\left[\left\|\mathcal{C}(p_{t,i}) - p_{t, i}\right\|_2^2\right]} \nonumber\\
& \leq &  (1 - \delta)\frac{1}{M} \sum_{i=1}^M\CE\left[\left\|p_{t, i}\right\|_2^2\right] \label{eq:aet_1}\\
& = & (1 - \delta)\frac{1}{M} \sum_{i=1}^M\CE\left[\left\|e_{t, i} + \mu m_{t, i} + g_{t, i}\right\|_2^2\right] \nonumber\\
& \leq & (1 - \delta)(1 + \beta)\frac{1}{M} \sum_{i=1}^M\CE\left[\left\|e_{t, i}\right\|_2^2\right]  + (1 - \delta)(1 + 1/\beta)\frac{1}{M} \sum_{i=1}^M\CE\left[\left\|\mu m_{t, i} + g_{t, i}\right\|_2^2\right] \nonumber\\
& \leq & (1 - \delta)(1 + \beta)\frac{1}{M} \sum_{i=1}^M\CE\left[\left\|e_{t, i}\right\|_2^2\right]  + (1 - \delta)(1 + 1/\beta)\frac{G^2}{(1 - \mu)^2} \nonumber\\
& \leq & \sum_{k=0}^t[(1 - \delta)(1 + \beta)]^{t - k}(1 - \delta)(1 + 1/\beta)\frac{G^2}{(1 - \mu)^2} \nonumber\\
& \leq & \frac{(1 - \delta)(1 + 1/\beta)}{1 - (1 - \delta)(1 + \beta)}\frac{G^2}{(1 - \mu)^2}  =  \frac{(1 - \delta)(1 + 1/\beta)}{\delta - \beta(1 - \delta)}\frac{G^2}{(1 - \mu)^2}, \nonumber
\end{eqnarray}
where the first inequality follows from the definition of the compressor $\mC$. The second inequality follows from Young's inequality with any $\beta > 0$, 
and the third inequality follows from Lemma~\ref{lemma:m_bound}.   
The third equality follows from the definition of $p_{t, i}$ and the assumption $\eta_t = \eta$. The last inequality follows from the sum of a geometric series.  
Let $\beta = \frac{\delta}{2(1 - \delta)}$, then $1 + 1/\beta = (2 - \delta)/\delta \leq 2/\delta$. We get
\begin{eqnarray}
\frac{1}{M}\sum_{i=1}^M\CE\left[\left\|e_{t+1,i}\right\|_2^2\right] \leq  \frac{(1 - \delta)(1 + 1/\beta)}{\delta - \beta(1 - \delta)}\frac{G^2}{(1 - \mu)^2} =  \frac{2(1 - \delta)(1 + 1/\beta)}{\delta(1-\mu)^2}G^2 \leq \frac{4(1-\delta)}{\delta^2(1-\mu)^2}G^2. \label{eq:aet_2}
\end{eqnarray}
Then, the first term can be bounded as
\begin{eqnarray}
\lefteqn{\CE\left[\|\tilde{e}_{t+1} \|_2^2\right] = \CE\left[\| \mathcal{C}(\tp_t) - \tp_t \|_2^2\right] \leq (1 - \delta) \CE\left[\| \tp_t \|_2^2\right] } \nonumber\\
& = & (1 - \delta) \CE\left[\left\| \frac{1}{M} \sum_{i=1}^M \Delta_{t,i} + \tilde{e}_t \right\|_2^2\right] \nonumber\\
&\leq &  (1 - \delta)(1 + \beta)\CE\left[\| \tilde{e}_t \|_2^2\right] + (1 - \delta)(1 + 1 / \beta)\CE\left[\left\| \frac{1}{M} \sum_{i=1}^M \Delta_{t,i} \right\|_2^2\right]  \nonumber\\
&\leq & (1 - \delta)(1 + \beta)\CE\left[\| \tilde{e}_t \|_2^2\right] + 2(1 - \delta)(1 + 1 / \beta)\CE\left[\left\| \frac{1}{M} \sum_{i=1}^M \Delta_{t,i}  - \frac{1}{M} \sum_{i=1}^M p_{t, i}\right\|_2^2\right]  \nonumber\\
&& + 2(1 - \delta)(1 + 1 / \beta)\CE\left[\left\|\frac{1}{M} \sum_{i=1}^M p_{t, i}\right\|_2^2\right] \nonumber\\
&\leq & (1 - \delta)(1 + \beta)\CE\left[\| \tilde{e}_t \|_2^2\right] + 2(1 - \delta)^2(1 + 1 / \beta)\frac{1}{M} \sum_{i=1}^M\CE\left[\left\|p_{t, i}\right\|_2^2\right]   \nonumber\\
&& + 2(1 - \delta)(1 + 1 / \beta)\frac{1}{M} \sum_{i=1}^M \CE\left[\left\|p_{t, i}\right\|_2^2\right] \nonumber\\
&= & (1 - \delta)(1 + \beta)\CE\left[\| \tilde{e}_t \|_2^2\right] + 2(1 - \delta)(2 - \delta)(1 + 1 / \beta)\frac{1}{M} \sum_{i=1}^M \CE\left[\left\|p_{t, i}\right\|_2^2\right]. \label{eq:aet_3}
\end{eqnarray}
Combining (\ref{eq:aet_1}), (\ref{eq:aet_2}), we have $\frac{1}{M} \sum_{i=1}^M\CE\left[\left\|p_{t, i}\right\|_2^2\right] \leq \frac{4}{\delta^2(1-\mu)^2}G^2$. Substituting it into (\ref{eq:aet_3}), we get
\begin{eqnarray}
\lefteqn{\CE\left[\|\tilde{e}_{t+1} \|_2^2\right] } \nonumber\\
& \leq & (1 - \delta)(1 + \beta)\CE\left[\| \tilde{e}_t \|_2^2\right] + \frac{8(1 - \delta)(2 - \delta)(1 + 1 / \beta)}{\delta^2(1-\mu)^2}G^2 \nonumber\\
& \leq & \sum_{k=0}^t[(1 - \delta)(1 + \beta)]^{t - k}\frac{8(1 - \delta)(2 - \delta)(1 + 1 / \beta)}{\delta^2(1-\mu)^2}G^2 \nonumber\\
& \leq & \frac{8(1 - \delta)(2 - \delta)(1 + 1 / \beta)}{\delta^2(1 - (1 - \delta)(1 + \beta))(1-\mu)^2}G^2 \nonumber\\
& = & \frac{8(1 - \delta)(2 - \delta)(1 + 1 / \beta)}{\delta^2(\delta - \beta(1 - \delta))(1-\mu)^2}G^2 \nonumber\\
& = & \frac{16(1 - \delta)(2 - \delta)(1 + 1 / \beta)}{\delta^3(1-\mu)^2}G^2 \nonumber\\
& = & \frac{32(1 - \delta)(2 - \delta)}{\delta^4(1-\mu)^2}G^2 \nonumber\\
& \leq & \frac{64(1 - \delta)}{\delta^4(1-\mu)^2}G^2. \label{eq:aet_4}
\end{eqnarray}
Then, combining (\ref{eq:aet_0}), (\ref{eq:aet_2}) and (\ref{eq:aet_4}), we obtain
\begin{eqnarray*}
\CE\left[\left\|\te_{t+1} + \frac{1}{M} \sum_{i=1}^M e_{t+1,i}\right\|_2^2\right] \leq  \frac{8(1-\delta)G^2}{\delta^2(1-\mu)^2}\left[1 + \frac{16}{\delta^2}\right].
\end{eqnarray*}
\end{proof}


\begin{proof}
In the sequel, we assume $\eta_t = \eta$ for some $\eta > 0$. Let us introduce the following virtual iterate:
\begin{eqnarray*}
z_t & = & \tx_t -  \frac{\eta\mu^2}{1 - \mu}\frac{1}{M} \sum_{i=1}^Mm_{t-1, i},
\end{eqnarray*}
where $\tx_t$ is defined in Lemma \ref{lemma:recurrence_mec_iterate} .
Then, it satisfies the following recurrence:
\begin{eqnarray*}
z_{t+1} & = & \tx_{t + 1} - \frac{\eta\mu^2}{1 - \mu}\frac{1}{M} \sum_{i=1}^Mm_{t, i}  \\
& = & \tx_t - \eta\frac{1}{M} \sum_{i=1}^M(\mu m_{t, i} + g_{t, i}) - \frac{\eta\mu^2}{1 - \mu}\frac{1}{M} \sum_{i=1}^Mm_{t, i} \\
& = & \tx_t - \frac{\eta\mu}{1 - \mu}\frac{1}{M} \sum_{i=1}^Mm_{t, i} - \eta\frac{1}{M} \sum_{i=1}^Mg_{t, i} \\
& = & \tx_t - \frac{\eta\mu^2}{1 - \mu}\frac{1}{M} \sum_{i=1}^Mm_{t-1, i}- \frac{\eta\mu}{1 - \mu}\frac{1}{M} \sum_{i=1}^Mg_{t, i} - \eta\frac{1}{M} \sum_{i=1}^Mg_{t, i} \\
& = & z_{t} -  \frac{\eta}{1 - \mu}\frac{1}{M} \sum_{i=1}^Mg_{t, i}.
\end{eqnarray*}
By the smoothness of the function $F$, we get
\begin{eqnarray}
\lefteqn{\CE_t[F(z_{t+1})]} \nonumber\\
& \leq & F(z_t) + \langle \nabla F(z_t), \CE_t[z_{t+1} - z_t] \rangle + \frac{L}{2}\CE_t[\|z_{t+1} - z_t\|^2_2] \nonumber\\
& = & F(z_t) - \frac{\eta}{1 - \mu}\left\langle \nabla F(z_t), \CE_t\left[\frac{1}{M} \sum_{i=1}^Mg_{t, i}\right] \right\rangle + \frac{L\eta^2}{2(1 - \mu)^2}\CE_t\left[\left\|\frac{1}{M} \sum_{i=1}^Mg_{t,i} \right\|^2_2\right] \nonumber\\
& = & F(z_t) - \frac{\eta}{1 - \mu}\left\langle \nabla F(z_t), \nabla F(x_t) \right\rangle + \frac{L\eta^2}{2(1 - \mu)^2}\left[\|\nabla F(x_t)\|_2^2 + \CE_t\left[\left\|\frac{1}{M} \sum_{i=1}^Mg_{t,i} - \nabla F(x_t)\right\|^2_2\right]\right] \nonumber\\
& \leq & F(z_t) - \frac{\eta}{1 - \mu}\left\langle \nabla F(z_t), \nabla F(x_t) \right\rangle + \frac{L\eta^2}{2(1 - \mu)^2}\|\nabla F(x_t)\|_2^2 + \frac{L\eta^2\sigma^2}{2(1 - \mu)^2M}, \label{eq:mom_proof_eq_1}
\end{eqnarray}
where the second-to-last equality follows from $\CE[\|x - \CE[x]\|_2^2] = \CE[\|x\|_2^2] - \|\CE[x]\|_2^2$. Then, we bound the second term $-\left\langle \nabla F(z_t), \nabla F(x_t) \right\rangle$.
\begin{eqnarray}
-\left\langle \nabla F(z_t), \nabla F(x_t) \right\rangle & = & -\|\nabla F(x_t)\|_2^2 + \left\langle \nabla F(x_t) - \nabla F(z_t), \nabla F(x_t) \right\rangle  \nonumber\\
& \leq & -\left(1 - \frac{\rho}{2}\right)\|\nabla F(x_t)\|_2^2 + \frac{1}{2\rho}\left\| \nabla F(x_t) - \nabla F(z_t) \right\|_2^2 \label{eq:mom_proof_eq_2}
\end{eqnarray}
for any $0 < \rho < 2$. Then, we have
\begin{eqnarray}
\left\| \nabla F(x_t) - \nabla F(z_t) \right\|_2^2 & \leq & L^2\|x_t - z_t\|^2_2  \nonumber\\
& \leq & 2L^2\|x_t - \tx_t\|^2_2 + 2L^2\|\tx_t - z_t\|^2_2  \nonumber\\
& = & 2L^2\eta^2\left\|\te_t + \frac{1}{M} \sum_{i=1}^M e_{t,i}\right\|^2_2 + \frac{2L^2\eta^2\mu^4}{(1 - \mu)^2}\left\|\frac{1}{M} \sum_{i=1}^Mm_{t-1, i}\right\|^2_2  \nonumber\\
& \leq & \frac{16L^2\eta^2(1-\delta)G^2}{\delta^2(1-\mu)^2}\left[1 + \frac{16}{\delta^2}\right] + \frac{2L^2\eta^2\mu^4}{(1 - \mu)^2}\left\|\frac{1}{M} \sum_{i=1}^Mm_{t-1, i}\right\|^2_2, \label{eq:mom_proof_eq_3}
\end{eqnarray}
where in the last inequality we use Lemma~\ref{lemma:error_bound}. Let $A_{t-1} = \sum_{k=0}^{t-1}\mu^{t - 1 - k} = \frac{1 - \mu^{t}}{1 - \mu}$. Then, we bound the last term:
\begin{eqnarray}
\left\|\frac{1}{M} \sum_{i=1}^Mm_{t-1, i}\right\|^2_2 & = & A_{t-1}^2\left\|\sum_{k=0}^{t-1}\frac{\mu^{t - 1 - k}}{A_{t-1}}\frac{1}{M} \sum_{i=1}^Mg_{k, i}\right\|^2_2  \nonumber\\ 
& \leq & A_{t-1}^2\sum_{k=0}^{t-1}\frac{\mu^{t - 1 - k}}{A_{t-1}}\left\|\frac{1}{M} \sum_{i=1}^Mg_{k, i}\right\|^2_2  \nonumber\\ 
&= &A_{t-1}\sum_{k=0}^{t-1}\mu^{t - 1 - k}\left\|\frac{1}{M} \sum_{i=1}^Mg_{k, i}\right\|^2_2  \nonumber\\ 
& \leq & \frac{1}{1 - \mu}\sum_{k=0}^{t-1}\mu^{t - 1 - k}\left\|\frac{1}{M} \sum_{i=1}^Mg_{k, i}\right\|^2_2, \label{eq:mom_proof_eq_4}
\end{eqnarray}
where the first inequality follows from Jensen's inequality. 
Then, combining (\ref{eq:mom_proof_eq_1}), (\ref{eq:mom_proof_eq_2}), (\ref{eq:mom_proof_eq_3}), and (\ref{eq:mom_proof_eq_4}), we obtain
\begin{eqnarray*}
\lefteqn{\CE_t[F(z_{t+1})]} \nonumber\\
& \leq & F(z_t) - \left(\frac{\eta\left(2 - \rho\right)}{2(1 - \mu)} - \frac{L\eta^2}{2(1 - \mu)^2}\right)\|\nabla F(x_t)\|_2^2 
+  \frac{L^2\eta^3\mu^4}{\rho(1 - \mu)^4}\sum_{k=0}^{t-1}\mu^{t - 1 - k}\left\|\frac{1}{M} \sum_{i=1}^Mg_{k, i}\right\|^2_2 \\
&& + \frac{L\eta^2\sigma^2}{2(1 - \mu)^2M} 
+ \frac{8L^2\eta^3(1-\delta)G^2}{\rho\delta^2(1 - \mu)^3}\left[1 + \frac{16}{\delta^2}\right].
\end{eqnarray*}
Taking total expectation and telescoping this inequality from $0$ to $T - 1$, we obtain
\begin{eqnarray*}
\lefteqn{ \left(\frac{\eta\left(2 - \rho\right)}{2(1 - \mu)} - \frac{L\eta^2}{2(1 - \mu)^2}\right)\sum_{t=0}^{T-1}\CE[\|\nabla F(x_t)\|_2^2] } \nonumber\\
& \leq & \CE[F(z_0)] - \CE[F(z_{T})]
+  \frac{L^2\eta^3\mu^4}{\rho(1 - \mu)^4}\sum_{t=0}^{T-1}\sum_{k=0}^{t-1}\mu^{t - 1 - k}\CE\left[\left\|\frac{1}{M} \sum_{i=1}^Mg_{k, i}\right\|^2_2\right] \\
&& + \frac{L\eta^2\sigma^2T}{2(1 - \mu)^2M} 
+ \frac{8L^2\eta^3(1-\delta)G^2T}{\rho\delta^2(1 - \mu)^3}\left[1 + \frac{16}{\delta^2}\right] \\
& = & F(x_0) - \CE[F(z_{T})]
+  \frac{L^2\eta^3\mu^4}{\rho(1 - \mu)^4}\sum_{t=0}^{T-1}\sum_{k=0}^{t-1}\mu^{t - 1 - k}\CE\left[\left\|\nabla F(x_k)\right\|^2_2\right]  \\
&& +  \frac{L^2\eta^3\mu^4}{\rho(1 - \mu)^4}\sum_{t=0}^{T-1}\sum_{k=0}^{t-1}\mu^{t - 1 - k}\CE\left[\left\|\frac{1}{M} \sum_{i=1}^Mg_{k, i} - \nabla F(x_k)\right\|^2_2\right] \\
&& + \frac{L\eta^2\sigma^2T}{2(1 - \mu)^2M} 
+ \frac{8L^2\eta^3(1-\delta)G^2T}{\rho\delta^2(1 - \mu)^3}\left[1 + \frac{16}{\delta^2}\right] \\
& \leq & F(x_0) - F_*
+  \frac{L^2\eta^3\mu^4}{\rho(1 - \mu)^4}\sum_{t=0}^{T-1}\sum_{k=0}^{t-1}\mu^{t - 1 - k}\CE\left[\left\|\nabla F(x_k)\right\|^2_2\right]  \\
&& +  \frac{L^2\eta^3\mu^4\sigma^2T}{\rho(1 - \mu)^5M} + \frac{L\eta^2\sigma^2T}{2(1 - \mu)^2M} + \frac{8L^2\eta^3(1-\delta)G^2T}{\rho\delta^2(1 - \mu)^3}\left[1 + \frac{16}{\delta^2}\right] .
\end{eqnarray*}
Using double-sum trick, we get
\begin{eqnarray*}
\sum_{t=0}^{T-1}\sum_{k=0}^{t-1}\mu^{t - 1 - k}\CE\left[\left\|\nabla F(x_k)\right\|^2_2\right] & = & \sum_{k=0}^{T-2}\sum_{t=k+1}^{T-1}\mu^{t - 1 - k}\CE\left[\left\|\nabla F(x_k)\right\|^2_2\right] \\
& \leq & \frac{1}{1 - \mu}\sum_{k=0}^{T-2}\CE\left[\left\|\nabla F(x_k)\right\|^2_2\right] \\
& \leq & \frac{1}{1 - \mu}\sum_{k=0}^{T-1}\CE\left[\left\|\nabla F(x_k)\right\|^2_2\right].
\end{eqnarray*}
Rearranging the terms, we get
\begin{eqnarray}
\lefteqn{\sum_{t=0}^{T-1}\left(\frac{\eta\left(2 - \rho\right)}{2(1 - \mu)} - \frac{L\eta^2}{2(1 - \mu)^2} - \frac{L^2\eta^3\mu^4}{\rho(1 - \mu)^5}\right)\CE[\|\nabla F(x_t)\|_2^2] } \nonumber\\
& \leq & F(x_0) - F_*
 +  \frac{L^2\eta^3\mu^4\sigma^2T}{\rho(1 - \mu)^5M} + \frac{L\eta^2\sigma^2T}{2(1 - \mu)^2M} + \frac{8L^2\eta^3(1-\delta)G^2T}{\rho\delta^2(1 - \mu)^3}\left[1 + \frac{16}{\delta^2}\right]. \label{eq:mom_proof_eq_5}
\end{eqnarray}
Let $\eta \leq \frac{(2 - \rho)(1 - \mu)^2}{2L}$ and $\rho$ is selected such that $\rho \geq (2 - \rho)\mu^3$, we get
\begin{eqnarray}
\frac{\eta\left(2 - \rho\right)}{2(1 - \mu)} - \frac{L\eta^2}{2(1 - \mu)^2} - \frac{L^2\eta^3\mu^4}{\rho(1 - \mu)^5} \geq \frac{\eta\left(2 - \rho\right)}{4(1 - \mu)}. \label{eq:mom_proof_eq_6}
\end{eqnarray}
Hence, combining (\ref{eq:mom_proof_eq_5}) and (\ref{eq:mom_proof_eq_6}), and dividing by $T$, 
\begin{eqnarray*}
\frac{1}{T}\sum_{t=0}^{T-1}\CE[\|\nabla F(x_t)\|_2^2] 
& \leq & \frac{4(1 - \mu)}{\eta\left(2 - \rho\right)T}[F(x_0) - F_*] 
+ \frac{2L\eta\sigma^2}{(2-\rho)(1 - \mu)M}\left[1 + \frac{2L\eta\mu^4}{\rho(1 - \mu)^3}\right] \\
&& + \frac{32L^2\eta^2(1-\delta)G^2}{\rho(2 - \rho)\delta^2(1-\mu)^2}\left[1 + \frac{16}{\delta^2}\right].
\end{eqnarray*}
Let $\rho = 1$ and $\CE\left[\left\|\nabla F(x_o)\right\|^2_2\right] = \frac{1}{T}\sum_{t=0}^{T-1}\CE[\|\nabla F(x_t)\|_2^2]$, we obtain the result. 
\end{proof}



\section{Proof of Corollary~\ref{corollary:convergence-dist-ef-sgdm}}
 \begin{proof}
 Let $\eta = \frac{\gamma}{\frac{\sqrt{T}}{\sqrt{M}} + \frac{(1 - \delta)^{1/3}}{\delta^{2/3}}\left(1 + \frac{16}{\delta^2}\right)^{1/3}T^{1/3}}$ for some $\gamma > 0$. As $T \geq \frac{4\gamma^2L^2M}{(1 - \mu)^4}$, we have $\eta \leq \frac{(1 - \mu)^2}{2L}$ and 
\begin{eqnarray*}
\lefteqn{\CE\left[\left\|\nabla F(x_o)\right\|^2_2\right]}\\
& \leq & \left[\frac{1}{\gamma\sqrt{MT}} + \frac{(1 - \delta)^{1/3}}{\gamma\delta^{2/3}T^{2/3}}\left(1 + \frac{16}{\delta^2}\right)^{1/3}\right]4(1 - \mu)[F(x_0) - F_*] \\
&& + \frac{2L\gamma\sigma^2}{(1 - \mu)\sqrt{MT}}\left[1 + \frac{2L\gamma\mu^4\sqrt{M}}{(1 - \mu)^3\sqrt{T}}\right]  + \frac{32L^2\gamma^2(1-\delta)^{1/3}G^2}{\delta^{2/3}(1-\mu)^2T^{2/3}}\left[1 + \frac{16}{\delta^2}\right]^{1/3}\\
& = & \left[\frac{2(1 - \mu)}{\gamma}[F(x_0) - F_*]  + \frac{L\gamma\sigma^2}{1-\mu}\right]\frac{2}{\sqrt{MT}} 
+ \frac{4L^2\gamma^2\mu^4\sigma^2}{(1 - \mu)^4T} \\
&& + \frac{4(1 - \delta)^{1/3}\left[\frac{(1 - \mu)}{\gamma}[F(x_0) - F_*] + \frac{8L^2\gamma^2 G^2}{(1-\mu)^2}\right]}{\delta^{2/3}T^{2/3}}\left[1 + \frac{16}{\delta^2}\right]^{1/3}.
\end{eqnarray*} 
\end{proof}

\end{document}